\documentclass[lettersize,journal]{IEEEtran}
\usepackage{amsmath,amsfonts}
\usepackage{algorithmic}
\usepackage[ruled]{algorithm2e}
\usepackage{array}
\usepackage[caption=false,font=normalsize,labelfont=sf,textfont=sf]{subfig}
\usepackage{textcomp}
\usepackage{stfloats}
\usepackage{url}
\usepackage{verbatim}
\usepackage{graphicx}
\usepackage{cite}
\hyphenation{op-tical net-works semi-conduct-tor IEEE-Xplore}

\usepackage{bm}
\usepackage{bbm}
\usepackage{amsthm}
\usepackage{amsmath}
\usepackage{amssymb}
\usepackage{enumitem}
\usepackage[small,bf]{caption}

\usepackage{hyperref}
\usepackage{pifont}
\usepackage[table]{xcolor}
\usepackage{multirow}
\usepackage{threeparttable}
\usepackage{orcidlink}
\usepackage{placeins}
\usepackage{dsfont}
\usepackage{subfig}
\usepackage{subcaption}
\usepackage{booktabs}
\usepackage{ragged2e}
\usepackage{multicol}
\usepackage{float}
\usepackage{stfloats}
\usepackage{placeins}

\newcommand{\JustifyAlgo}[1]{
  \parbox[t]{\dimexpr\linewidth-\algorithmicindent}{\justifying #1}
}

\SetKwProg{sub}{Subroutine}{}{}

\allowdisplaybreaks

\DeclareMathOperator*{\argmin}{arg\,min}

\newcommand{\cmark}{\textcolor{green}{\ding{52}}}
\newcommand{\xmark}{\textcolor{red}{\ding{55}}}

\newtheorem{assumption}{Assumption}
\newtheorem{lemma}{Lemma}
\newtheorem{theorem}{Theorem}
\newtheorem{definition}{Definition}
\newtheorem{proposition}{Proposition}
\newtheorem{remark}{Remark}

\begin{document}

\title{Greedy Low-Rank Gradient Compression for Distributed Learning
with Convergence Guarantees
}

\author{Chuyan  Chen$^{*}$\thanks{$^{*}$Equal contributions. $^\dagger$Corresponding authors.}\thanks{Chuyan Chen is with School of Mathematical Sciences, Peking University, Beijing 100871, China (e-mail: chuyanchen@stu.pku.edu.cn).}~\orcidlink{0009-0005-0273-6649}, Yutong He$^{*}$\thanks{Yutong He is with Center for Data Science, Peking University, Beijing 100871, China (e-mail: yutonghe@pku.edu.cn).}~\orcidlink{0009-0002-5078-6454}, Pengrui Li\thanks{Pengrui Li is with School of Software, Beihang University, Beijing 100191, China. (e-mail: 22377115@buaa.edu.cn).}~\orcidlink{0009-0000-3364-0002}, Weichen Jia\thanks{Weichen Jia is with School of Life Sciences, Peking University, Beijing 100871, China. (e-mail: 2100012106@stu.pku.edu.cn).}~\orcidlink{0009-0007-3025-7922}, and Kun Yuan$^{\dagger}$\thanks{Kun Yuan is with Center for Machine Learning Research, Peking University, Beijing 100871, China (e-mail: kunyuan@pku.edu.cn).}~\orcidlink{0000-0001-8394-8187}}

\markboth{}
{Shell \MakeLowercase{\textit{et al.}}: A Sample Article Using IEEEtran.cls for IEEE Journals}

\maketitle

\begin{abstract}
Distributed optimization is pivotal for large-scale signal processing and machine learning, yet communication overhead remains a major bottleneck. Low-rank gradient compression, in which the transmitted gradients are approximated by low-rank matrices to reduce communication, offers a promising remedy. Existing methods typically adopt either randomized or greedy compression strategies: randomized approaches project gradients onto randomly chosen subspaces, introducing high variance and degrading empirical performance; greedy methods select the most informative subspaces, achieving strong empirical results but lacking convergence guarantees. To address this gap, we propose GreedyLore—the first \underline{Greedy} \underline{L}ow-\underline{R}ank gradi\underline{e}nt compression algorithm for distributed learning with rigorous convergence guarantees. GreedyLore incorporates error feedback to correct the bias introduced by greedy compression and introduces a semi-lazy subspace update that ensures the compression operator remains contractive throughout all iterations. With these techniques, we prove that GreedyLore achieves a convergence rate of $\mathcal{O}(\sigma/\sqrt{NT} + 1/T)$ under standard optimizers such as MSGD and Adam—marking the first linear speedup convergence rate for low-rank gradient compression. Extensive experiments are conducted to validate our theoretical findings.
\end{abstract}

\begin{IEEEkeywords}
Distributed Learning, Low-Rank Compression, Communication-Efficient Optimization, Error Feedback.
\end{IEEEkeywords}

\section{Introduction}
\IEEEPARstart{D}{istributed} optimization is a promising paradigm for addressing large-scale problems in signal processing and machine learning. In distributed algorithms, each computing node processes its local data while collaborating with others to minimize a global loss function. This approach mitigates the computational burden on individual nodes, reduces memory requirements, and enables efficient parallel computation. In addition to its applications in edge computing~\cite{shi2016edge, satyanarayanan2017emergence, zhou2019edge}, federated learning~\cite{konevcny2016federated, konevcny2016federated2, mcmahan2017communication}, and robotic control~\cite{bullo2009distributed}, distributed optimization is particularly useful for training deep neural networks~\cite{li2014communication, goyal2017accurate, brown2020language, touvron2023llama}. By partitioning the computation across multiple nodes, distributed optimization allows for efficient training of massive deep neural models.

This paper studies the following distributed stochastic optimization problem involving a  matrix variable $\bm{X}\in \mathbb{R}^{m\times n}$:
\begin{align}
\min_{\bm{X}\in \mathbb{R}^{m\times n}}\ f(\bm{X}):=\frac{1}{N}\sum_{i=1}^N \left[f_i(\bm{X}) := \mathbb{E}_{{\bm{\xi}}\sim\mathcal{D}_i}F_i(\bm{X}, {\bm{\xi}}) \right].
    \label{eq:formulation}
\end{align}
Matrix variables arise naturally in many modern applications—for example, the model weights in each layer of a deep neural network are typically represented as matrices. Notably, when \(n = 1\), Problem~\eqref{eq:formulation} reduces to a standard stochastic optimization problem with a vector variable. In this formulation, the notation $N$ denotes the number of computing nodes, and ${\bm{\xi}}$ is a random variable representing the data drawn from the local distribution $\mathcal{D}_i$. Each node $i$ can compute the stochastic gradient $\nabla F_i(\bm{X}; {\bm{\xi}})$ of its loss function; however, communication with other nodes is required to obtain global gradients. Since the local data distributions $\{\mathcal{D}_i\}_{i=1}^N$ may differ across nodes, the local loss functions $f_i(\bm{X})$ are not necessarily identical, i.e., $f_i(\bm{X}) \neq f_j(\bm{X})$ in general.

\subsection{Low-Rank Communication Compression} 
Many distributed algorithms require individual nodes to transmit full-size gradients to a central server for model parameter updates~\cite{li2014communication, smola2010architecture, strom2015scalable, gibiansky2017bringing}. However, the high dimensionality of these gradients introduces substantial communication overhead in each iteration, significantly limiting the efficiency and scalability of distributed learning~\cite{seide20141, chilimbi2014project}. To mitigate this bottleneck, various communication compression techniques have been proposed~\cite{alistarh2017qsgd, bernstein2018signsgd, stich2018sparsified, richtarik2021ef21, huang2022lower}. These methods reduce the communication cost per iteration by transmitting compressed tensors instead of full ones. The two most prominent compression strategies are quantization and sparsification.
Quantization~\cite{alistarh2017qsgd, Horvath2019StochasticDL, seide20141} maps input tensors from a potentially large or continuous domain to a smaller, discrete set—examples include 1-bit quantization~\cite{seide20141} and natural compression~\cite{Horvath2019StochasticDL}. It is particularly effective in scenarios where the communicated entries are relatively evenly distributed, enabling efficient low-bit encoding across the value range.
In contrast, sparsification~\cite{wangni2018gradient, stich2018sparsified, safaryan2021smoothness} is more suitable when most entries are close to zero. It achieves compression by selectively discarding a large portion of the less informative entries, producing sparse representations. Popular examples include rand-$K$ and top-$K$  compressors~\cite{stich2018sparsified}.

This paper focuses on an alternative form of communication compression—low-rank compression—which leverages the inherent low-rank structures of the transmitted variables to reduce communication costs. Low-rank structures are ubiquitous in practice, with deep neural networks being a prominent example. For instance, ~\cite{hu2022lora} demonstrates that parameter updates during fine-tuning large language models (LLMs) tend to be low-rank, motivating the development of the well-known LoRA fine-tuning method. Similarly, recent work~\cite{he2024subspace,chen2024enhancing,zhao2025separate} observe that gradients in neural network pre-training also possess low-rank structures. Additionally, SL-Train~\cite{han2024sltrain} finds that model weights exhibit both low-rank and sparse patterns during pre-training, while DeepSeek-V3~\cite{liu2024deepseek} suggests that compressing KV-cache activations via low-rank approximation can substantially reduce memory usage. In such scenarios, low-rank compression is often more suitable than quantization or sparsification, as it more effectively captures the underlying structure of the transmitted variables.

\newcolumntype{P}[1]{>{\centering\arraybackslash}p{#1}}
\begin{table*}[!t]
\renewcommand{\arraystretch}{1.5}
\centering
\caption{ \small  Comparison with existing low-rank compression algorithms. ``Greedy'' indicates whether the algorithm employs greedy low-rank compression. ``Pre-train'' indicates whether the algorithm supports pre-training tasks in deep learning. ``Fine-tune'' indicates whether the algorithm supports fine-tuning tasks in deep learning.
}\label{table:alg_cmp}
\begin{threeparttable}
\begin{tabular}{lP{1.35cm}P{1.35cm}P{1.2cm}P{1.2cm}P{1.2cm}P{2.5cm}P{2.5cm}}
    \toprule
    \textbf{Algorithm} &
    \textbf{\renewcommand{\arraystretch}{1.0}\begin{tabular}[c]{@{}c@{}}\hspace{-1.5mm}Communication\\ \hspace{-1.5mm}Efficient\end{tabular}} &
    \textbf{\renewcommand{\arraystretch}{1.0}\begin{tabular}[c]{@{}c@{}}\hspace{1.5mm}Error\\ \hspace{1.5mm}Feedback\end{tabular}} &
    \textbf{Greedy} &
    \textbf{Pre-train} &
    \textbf{Fine-tune} &
    \textbf{\renewcommand{\arraystretch}{1.0}\begin{tabular}[c]{@{}c@{}}MSGD-Type\\ Convergence\end{tabular}} & \textbf{\renewcommand{\arraystretch}{1.0}\begin{tabular}[c]{@{}c@{}}Adam-Type\\ Convergence\end{tabular}}\\
    \midrule
    PowerSGD\cite{vogels2019powersgd} &\hspace{0.3cm}\cmark & \hspace{1mm}\cmark & \cmark & \cmark & \cmark & N. A. &  N. A. \\
    LoRA\cite{hu2022lora} &\hspace{0.3cm}\cmark & \hspace{1mm}\xmark & \cmark & \xmark & \cmark & N. A. & N. A.\\
    GaLore\cite{zhao2024galore} &\hspace{0.3cm}\cmark & \hspace{1mm}\xmark & \cmark & \cmark & \cmark & N. A. & N. A.\\
    GoLore$^{*}$\cite{he2024subspace} &\hspace{0.3cm}\cmark & \hspace{1mm}\xmark & \xmark & \cmark & \cmark & $\mathcal{O}(\frac{1}{T} + \frac{\sigma}{\sqrt{T}})$ & N. A.  \\
    LDAdam$^{\dagger}$\cite{robert2024ldadam} & \hspace{0.3cm}\xmark & \hspace{1mm}\cmark & \cmark & \cmark & \cmark & N. A. & $\mathcal{O}\left(\frac{1}{\sqrt{T}}+\frac{\sigma^2}{\sqrt{T}}\right)$\\
    SEPARATE\cite{zhao2025separate} & \hspace{0.3cm}\cmark& \hspace{1mm}\cmark & \xmark  & \cmark & \cmark & N. A. & $\mathcal{O}\left(\frac{1}{\sqrt{T}}+\frac{\sigma}{\sqrt{NT}}\right)$ \\
    \rowcolor{purple!15} \textbf{GreedyLore} & \hspace{0.3cm}\cmark & \hspace{1mm}\cmark & \cmark & \cmark & \cmark & $\boldsymbol{\mathcal{O}\left(\frac{1}{T^{2/3}}+\frac{\sigma}{\sqrt{NT}}\right)}$ & $\boldsymbol{\mathcal{O}\left(\frac{1}{T^{2/3}}+\frac{\sigma}{\sqrt{NT}}\right)}$\\
    \bottomrule
\end{tabular}
\begin{tablenotes}
    \footnotesize
    \item[$*$] The MSGD-type rate for GoLore has only been established in the single-node setting because no multi-node rate has been established yet. 
    \item[$\dagger$] LDAdam implicitly assumes that its projection introduces contractive errors, which is a more restrictive assumption.
\end{tablenotes}
\end{threeparttable}
\vspace{-4mm}
\end{table*}

\subsection{Limitations in Existing Literature}
\label{limitation}
The effectiveness of low-rank compression lies in projecting the transmitted matrix onto a low-rank subspace while preserving as much information as possible. A natural approach is to apply Singular Value Decomposition (SVD) and project the matrix onto the subspace spanned by its top-$r$ singular vectors, thereby capturing its most significant components. To reduce the high per-iteration computational cost of SVD, GaLore\footnote{GaLore~\cite{zhao2024galore} was originally proposed as a memory-efficient algorithm in the single-node setting. However, it can be naturally extended to a communication-efficient algorithm in distributed settings; see Appendix \ref{sec:dist_galore}.}\cite{zhao2024galore} and its variants\cite{chen2024fira, zhang2024q} update the low-rank subspace lazily—that is, they perform SVD periodically (e.g., once every $\tau$ iterations) and reuse the same subspace within the same period. Another well-known algorithm, PowerSGD~\cite{vogels2019powersgd}, employs orthogonal operations in power iteration to gradually align the projection matrices with the optimal rank-$r$ approximation of the gradients over time, reducing computational overhead without explicit SVD. We refer to these SVD-based approaches as \textbf{Greedy Low-Rank Compression}, as they aim to preserve the most informative structure of the transmitted matrix. Although these greedy methods exhibit strong empirical performance, they \textbf{{lack convergence guarantees}} in stochastic optimization settings. For instance, recent work~\cite{he2024subspace} shows that when gradient noise dominates the true gradient in stochastic optimization, the greedy SVD-derived subspace used in GaLore captures primarily the noise component, ultimately leading to non-convergence.

\begin{figure}[H]
\vspace{-1mm}
\centering
\includegraphics[width=2.5in]{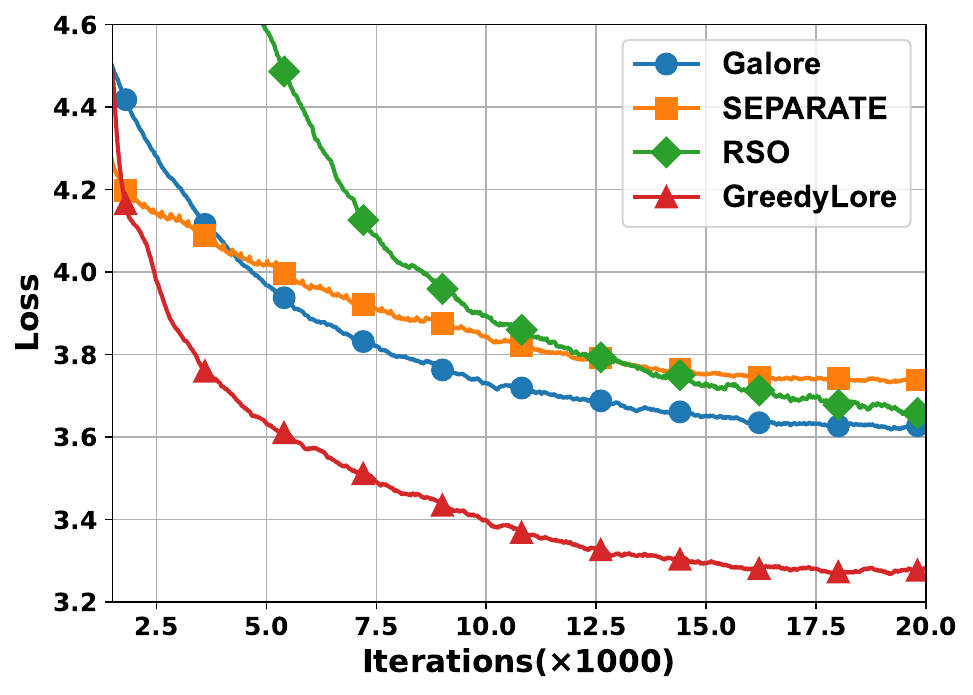}
\caption{\small Loss curves for pre-training LLaMA-130M using random projection and greedy projection as low-rank compressors with rank 32. Compression is applied starting from step 1,000. It can be observed that employing random projection in the early training phase introduces excessive noise, which leads to slower convergence.}
\label{fig:rand}
\vspace{-3mm}
\end{figure}

In contrast to greedy methods, \textbf{Random Low-Rank Compression} projects the transmitted matrix onto a randomly selected low-rank subspace, independent of the current gradient. By avoiding alignment with the dominated directions, random projections preserve unbiased gradient information in expectation, even when gradient noise dominates. As a result, these methods are robust to noise and enjoy strong convergence guarantees~\cite{he2024subspace, zhao2025separate}. However, because they do not exploit the underlying low-rank structure to capture the dominant components of the transmitted matrix—particularly when the true gradient dominates the noise—random compression often incurs significant compression error, leading to \textbf{slower convergence in practice}. 

The above discussion on greedy and random low-rank compression motivates the following fundamental open question:


\begin{center}
\setlength\fboxsep{4pt}
\colorbox{gray!20}{\parbox{\dimexpr\linewidth-2\fboxsep\relax}{
\textit{\textbf{{\em (Question)}} Can we develop distributed 
 communication efficient methods based on \textbf{greedy low-rank compression} that achieve both \textbf{strong empirical performance} and \textbf{theoretical convergence guarantees} under practical settings?}}}
\end{center}
By practical settings, we refer to stochastic optimization using MSGD or Adam optimizer which is common in deep learning.

\vspace{-2mm}

\subsection{Main Challenges} 
\label{sec:challenge}
Addressing this open question involves several key challenges, which call for novel algorithmic developments.
\begin{itemize}
\item[\textbf{C1.}] \textbf{Greedy compression.} When gradient noise dominates the true gradient in the stochastic gradient, greedy SVD-based low-rank compression may fail to capture meaningful gradient information, potentially leading to non-convergence; see the detailed discussion in~\cite{he2024subspace}.

\vspace{1mm}
\item[\textbf{C2.}] \textbf{Lazy subspace update.} To reduce the computational overhead of SVD, it is common to perform SVD periodically and reuse the same subspace within each period. However, this lazy subspace strategy may result in a non-contractive gradient compressor—i.e., the compression error does not vanish as the gradient approaches zero—potentially leading to non-convergence. Moreover, such lazy updates can nullify error feedback, an advanced technique designed to compensate for compression errors.

\vspace{1mm}
\item[\textbf{C3.}] \textbf{Capturing global low-rank structure.} To ensure strong empirical performance, an effective low-rank gradient compressor is desired to capture the low-rank structure of global gradient. However, since each computing node only has access to its local gradient, simply aggregating compressed local gradients fails to preserve the true structure of global gradient, as shown in \cite{wang2018atomo, lin2017deep, karimireddy2019error}. 
\end{itemize}

\vspace{-1.5mm}
\subsection{Contributions} 
By addressing the above challenges with novel solutions, this paper provides affirmative answers to the open question.
\begin{itemize}[leftmargin=1.5em]
\item We propose a novel \underline{\textbf{Greedy}} \underline{\textbf{Lo}}w-\underline{\textbf{R}}ank Gradi\underline{\textbf{e}}nt Compression algorithm (termed \textbf{GreedyLore}) for distributed learning. GreedyLore has three key components: (1) it incorporates error feedback to correct the bias introduced by greedy compression; (2) it introduces a semi-lazy subspace update that ensures a contractive compressor throughout all iterations and preserves the effectiveness of  error feedback; and (3) it proposes an approximate global top-$r$ projection technique that enables each node to locally estimate the low-rank structure of the global gradient. These innovations address the challenges in Section~\ref{sec:challenge}, enabling strong performance with convergence guarantees.

\vspace{1mm}
\item Under standard assumptions, we establish  that GreedyLore converges at a rate of $\mathcal{O}\left(\frac{\sigma}{\sqrt{NT}} + \frac{1}{T}\right)$ with momentum SGD (MSGD) and Adam optimizer, where $\sigma$ denotes the gradient noise and $T$ is the number of iterations—matching the convergence rate of standard distributed  algorithms without any communication compression. To the best of our knowledge, this is the first convergence result for greedy low-rank communication compression. Moreover, it is also the first to achieve linear speedup for low-rank communication compression—whether random or greedy—when using MSGD and Adam (see Table \ref{table:alg_cmp}).

\vspace{1mm}
\item We efficiently implement GreedyLore algorithm with PyTorch’s communication hook mechanism. GreedyLore can be seamlessly integrated with first-order optimizers such as MSGD and Adam, and is compatible with system-level implementations like Distributed Data Parallel (DDP). Extensive experiments (e.g., Figure~\ref{fig:os}) demonstrate that GreedyLore achieves superior  performance compared to other low-rank communication compression algorithms.
\end{itemize}

\section{Related Works}
\subsection{Communication Compression}
Communication compression is a key technique for reducing communication overhead in distributed learning. The pioneering work QSGD~\cite{alistarh2017qsgd} introduced a systematic approach to gradient quantization. Since then, a variety of compression methods have been proposed, including sign-based compression~\cite{bernstein2018signsgd}, natural compression~\cite{horvoth2022natural}, and low-rank compression~\cite{wang2018atomo, vogels2019powersgd}. To mitigate the adverse effects of compression error, several studies~\cite{seide20141, karimireddy2019error, mishchenko2024distributed, li2020acceleration} introduced error feedback, which accumulates compression errors and incorporates them into future gradient updates, thereby preserving more informative signal. EF21~\cite{richtarik2021ef21} extends this idea by maintaining a local gradient tracker for each worker, effectively alleviating the impact of data heterogeneity. NEOLITHIC and its variants~\cite{huang2022lower, he2023lower, he2023unbiased} established lower bounds for distributed learning with communication compression. Notably, EF21-MSGD~\cite{fatkhullin2023momentum} matches these bounds and achieves optimal convergence rates. However, none of these works can establish convergence guarantees for greedy low-rank communication compression with lazy SVD updates. 

\subsection{Low-Rank Gradient Compression} 
Extensive literature has shown that gradients generated in large deep neural networks—such as large language models (LLMs)—exhibit a low-rank structure~\cite{chen2024enhancing, zhao2024galore, malladi2023fine}. Leveraging this property, various studies have developed memory-efficient algorithms for pre-training and fine-tuning LLMs. LoRA~\cite{hu2022lora} capitalizes on the low-rank nature of incremental updates to reduce memory overhead during adaptation. Subspace optimization approaches such as GaLore~\cite{zhao2024galore} and its variants~\cite{chen2024fira,zhang2024q} project the gradient onto a greedy SVD-derived subspace to reduce optimizer memory usage during pre-training. However, as demonstrated in~\cite{he2024subspace}, when the algorithm approaches a local minimum and the true gradient vanishes while the noise persists, the selected subspace increasingly captures only the noise component, ultimately leading to non-convergence. 

To overcome this limitation, a complementary line of work explores random low-rank gradient compression~\cite{he2024subspace,chen2025memory,hao2024flora,firtina2020apollo}, which can preserve useful gradient information even when it is dominated by noise.  GoLore~\cite{he2024subspace} and RSO~\cite{chen2025memory} provide rigorous convergence guarantees for this class of algorithms. A more recent approach, LDAdam~\cite{robert2024ldadam}, combines error feedback with large-batch gradient accumulation to establish convergence guarantees for greedy low-rank compression. However, these guarantees rest on a restrictive contractiveness assumption on the compression error, which may not hold in practice. Moreover, LDAdam is not suitable for communication-efficient distributed settings, as it requires full gradient communication at each iteration. In contrast, our proposed GreedyLore addresses both theoretical and practical challenges through novel algorithmic designs; see Section~\ref{sec:challenge}.

The low-rank structure of gradients can also help reduce communication overhead in distributed learning. PowerSGD~\cite{vogels2019powersgd} employs power iterations to identify an effective low-rank subspace, enabling the compressed gradients to progressively converge to an optimal rank-$r$ approximation over successive iterations. Building upon this framework, decentralized algorithms such as PowerGossip directly compress model differences using low-rank linear compressors, enabling communication efficiency over arbitrary network topologies\cite{vogels2020powergossip}. To ensure compatibility with system-level implementations, ACP-SGD alternately compresses and communicates low-rank matrices with provable convergence and minimal overhead~\cite{zhang2023evaluation}. In the federated learning setting, Riemannian Low-Rank Model Compression~\cite{xue2023riemannian} leverages manifold optimization to update local models with convergence guarantees. Although originally proposed as a memory-efficient approach, GaLore~\cite{zhao2024galore} can also be naturally extended for communication reduction by transmitting compressed low-rank gradients. Recent works, such as SEPARATE~\cite{zhao2025separate} and RSO~\cite{chen2025memory}, address this issue by providing rigorous convergence guarantees for random low-rank communication compression. Nevertheless, none of these methods have addressed the fundamental question highlighted in Section~\ref{limitation}. 

\section{Preliminary}
\noindent \textbf{Notation.} Let $\mathds{1}_n$ denote the vector of all-ones of $n$ dimensions and $\bm{I}_n \in \mathbb{R}^{n\times n}$ the identity matrix. We introduce the set $[n] := \{1,\cdots, n\}$. Given the index set $\mathcal{J}$, notation $\bm{U}_{:, \mathcal{J}}$ refers to the submatrix of $\bm{U}$ formed by selecting the columns of $\bm{U}$ indexed by the set $\mathcal{J}$. The notation $\bm{U}_{:,:r}$ refers to the first $r$ columns of the matrix $\bm{U}$.

\vspace{1mm}
\noindent \textbf{Contractive compressor.} Contractive compressors are widely used in communication compression, with Top-$K$ being a representative example~\cite{richtarik2021ef21, huang2022lower}. Their key property is that the compression error diminishes as the original variable being compressed approaches zero. A formal definition follows:
\begin{definition}[Contractive Compressor]
\label{compressor-assumption}
A compressor $\mathcal{C}(\cdot)$ is defined as a contractive compressor if it satisfies
\begin{equation*}
    \mathbb{E}_{\mathcal{C}}\Big[ \|\mathcal{C}(\bm{X})-\bm{X}\|_F^2 \Big]\leq (1-\delta) \|\bm{X}\|_F^2,\quad \forall \bm{X}\in\mathbb{R}^{m\times n},
\end{equation*}
where $\delta\in(0,1]$ is the contractive factor. The expectation is taken over the randomness of the compression operator $\mathcal{C}$.
\end{definition}
The contractive property of a compressor is essential for ensuring the convergence of algorithms that rely on it. While SVD-based greedy low-rank compression satisfies the contractive property, its lazy variant—where SVD is performed periodically (e.g., once every $\tau$ iterations) and the same subspace is reused within each period—does not (see Proposition~\ref{prop-not-contractive}). This lack of contractiveness presents a fundamental challenge for convergence analysis of greedy low-rank compression.

\vspace{1mm}
\noindent \textbf{Error feedback.} Consider the unconstrained problem: 
\begin{align}
\label{eq-unconstrained-prob}
\min_{\bm{X} \in \mathbb{R}^{m\times n}}\ f(\bm{X}).
\end{align}
The error feedback technique proposed in \cite{seide20141} for solving the above unconstrained optimization problem is
\begin{subequations}
\begin{align}
    \bm{X}_{t+1}
      &= \bm{X}_{t}
         - \gamma\,\mathcal{C}\bigl(\nabla f(\bm{X}_{t}) + \bm{E}_{t-1}\bigr),
      \label{eq-ef-1}\\
    \bm{E}_{t}
      &= \nabla f(\bm{X}_{t}) + \bm{E}_{t-1}
        - \mathcal{C}\bigl(\nabla f(\bm{X}_{t}) + \bm{E}_{t-1}\bigr),
      \label{eq-ef-2}
\end{align}
\end{subequations}
with initialization $\bm{E}_{-1} = \bm{0}$, where $\mathcal{C}(\cdot)$ denotes a contractive compressor and $\bm{E}_t$ accumulates the compression error over time. This method compensates for the distortion introduced by lossy gradient compression. In particular, the residual error $\bm{E}_t$ captures the information lost at each iteration and reincorporates it into subsequent updates. As a result, error feedback effectively mitigates the bias introduced by compression and preserves convergence guarantees—even under aggressive communication constraints.

\vspace{1mm}
\noindent \textbf{MSGD optimizer.} Momentum stochastic gradient descent (MSGD) enhances standard SGD by maintaining an exponential moving average of past gradients, which helps dampen oscillations and accelerates convergence towards local minima. Let $\bm{G}_t$ denote the stochastic gradient of the global loss function in \eqref{eq:formulation}. The MSGD update is given by:
\begin{subequations}
\label{eq-msgd}
\begin{align}
    \bm{M}_t &= \beta\bm{M}_{t-1} + (1-\beta)\bm{G}_t, \label{eq-msgd-1}\\
    \bm{X}_{t+1} &= \bm{X}_{t} - \gamma\bm{M}_t, \label{eq-msgd-2}
\end{align}
\end{subequations}
where $\beta\in[0,1)$ is the momentum coefficient and $\gamma$ is the learning rate. For initialization, we let $\bm{M}_{-1} = \bm{0}$. 

\vspace{1mm}
\noindent \textbf{Adam optimizer.} Adam is one of the most widely used optimizers in deep neural networks, particularly in large language models. Let $\bm{G}_t$ be the stochastic gradient of the global loss function in \eqref{eq:formulation}, Adam will update as follows:
\begin{subequations}
\label{eq-adam}
\begin{align}
    \bm{M}_t & = \beta_1 \bm{M}_{t-1} + (1-\beta_1) \bm{G}_t, \label{eq-adam-1}  \\
    \bm{V}_t & = \beta_2 \bm{V}_{t-1} + (1-\beta_2)\bm{G}_t\odot\bm{G}_t, \label{eq-adam-2} \\
    \bm{X}_{t+1} &= \bm{X}_{t} - \frac{\gamma}{\sqrt{\bm{V}_t} + \epsilon} \odot \bm{M}_t, \label{eq-adam-3}
\end{align}
\end{subequations}
where $\beta_1 \in [0,1)$ and $\beta_2 \in [0,1)$ are momentum coefficients, $\gamma$ is the learning rate, and $\odot$ denotes the elementwise product. For initialization, we have $\bm{M}_{-1} = \bm{0}$ and $\bm{V}_{-1} = \bm{0}$. 

\section{Basic low-rank compression framework}
\label{sec-prelim-framework}
We begin with a preliminary framework for low-rank gradient compression, which serves as the foundation for our proposed GreedyLore algorithm. 
Let $\bm{G}_t^{(i)} = \nabla F_i(\bm{X}_t;\bm{\xi}_t^{(i)})$ and $\bm{G}_t = \frac{1}{N}\sum_{i=1}^N \bm{G}_t^{(i)}$, where 
$t$ denotes the iteration index and $i$ indexes the computing node. The framework is:
\begin{center}
\setlength\fboxsep{4pt}
\colorbox{gray!20}{\parbox{\dimexpr\linewidth-2\fboxsep\relax}{
\begin{subequations}
\label{eq-prelim-framework}
\begin{align}
    \bm{P}_t & = \mathsf{\bf Lazy}\mbox{-}\mathsf{\bf SVD}(\bm{G}_t, \bm{P}_{t-1}, t), \label{eq-prelim-framework-1}  \\
    \bm{R}^{(i)}_t &= \bm{P}^\top_t\bm{G}_t^{(i)}, \ \bm{R}_t = \frac{1}{N}\sum_{i=1}^N \bm{R}^{(i)}_t, \ \hat{\bm{G}}_t = \bm{P}_t \bm{R}_t, \label{eq-prelim-framework-2} \\
    \bm{X}_{t+1} &= \mathsf{\bf Optimizer}(\bm{X}_{t}, \hat{\bm{G}}_t, \gamma). \label{eq-prelim-framework-3}
\end{align}
\end{subequations}}}
\end{center}

In this framework, step \eqref{eq-prelim-framework-1} identifies the projection matrix $\bm{P}_t \in \mathbb{R}^{m\times r}$ using  $\mathsf{\bf Lazy}\mbox{-}\mathsf{\bf SVD}$ operator with period $\tau$, i.e.,
\begin{align}
\label{eq-SVD-P}
    \bm{U},\bm{\Sigma}, \bm{V} = \mathrm{\bf SVD}(\bm{G}_t), \quad \bm{P}_t = \bm{U}_{:,:r}\in \mathbb{R}^{m\times r},
\end{align}
when $\mathsf{mod}(t,\tau) = 0$ otherwise $\bm{P}_t = \bm{P}_{t-1}$. Notation $\bm{U}_{:,:r}$ denotes the first $r$ columns of the matrix $\bm{U}$. Since $\mathsf{\bf Lazy}\mbox{-}\mathsf{\bf SVD}$ performs the SVD operation only once every $\tau$ iterations, it significantly reduces computational overhead. In step \eqref{eq-prelim-framework-2}, each computing node compresses its local gradient $\bm{G}_t^{(i)} \in \mathbb{R}^{m \times n}$ into a low-dimensional representation $\bm{R}_t^{(i)} \in \mathbb{R}^{r \times n}$, communicates it via the All-Reduce protocol to compute the global average $\bm{R}_t$, and then reconstructs a high-dimensional approximate gradient $\hat{\bm{G}}_t \in \mathbb{R}^{m \times n}$. 
It is worth noting that the full-dimensional communication of ${\bm G}_t^{(i)}$ in \eqref{eq-SVD-P} occurs only once every $\tau$ iterations; in the other iterations, only the low-rank matrix ${\bm R}_t^{(i)}$ needs to be transmitted. In step \eqref{eq-prelim-framework-3}, each computing node updates $\bm{X}_t$ using the optimizer such as MSGD and Adam, where $\gamma$ denotes the learning rate. Furthermore, since each node obtains the globally averaged gradient \(\bm{G}_t\) when \(\mathsf{mod}(t, \tau) = 0\) in $\mathsf{\bf Lazy}\mbox{-}\mathsf{\bf SVD}$, it can be directly used in the optimizer instead of the approximate \(\hat{\bm{G}}_t\). 

The implementation details are presented in Algorithm~\ref{alg:prelimnary}. All-Reduce is a collective communication operation widely used in distributed computing. In this process, each node sends its local gradient estimate $\bm{G}_t^{(i)}$ (or $\bm{R}_t^{(i)}$) and receives the global average $\bm{G}_t$ (or $\bm{R}_t$). There are several variants of All-Reduce. In high-performance data center clusters with high-bandwidth GPU interconnects, All-Reduce is typically implemented using Ring-All-Reduce~\cite{patarasuk2009bandwidth}. In contrast, for communication-constrained settings such as federated learning, it is often implemented via a parameter server~\cite{li2014communication}.

Framework~\eqref{eq-prelim-framework} faces two key challenges, including greedy low-rank compression and infrequent (lazy) subspace updates. Addressing these issues motivates the development of our proposed GreedyLore algorithm.

\section{GreedyLore Algorithm Development} 

\subsection{Error feedback}
\noindent \textbf{Low-rank compression with error feedback}. To address the bias brought by greedy low-rank compression, we integrate \textit{error feedback (EF)}~\cite{richtarik2021ef21,fatkhullin2023momentum} to the algorithm, which accumulates the true gradient over time, enabling it to eventually outweigh the noise and become the dominant component of the stochastic gradient. To begin with, we write \eqref{eq-prelim-framework-2} as follows:
\begin{align}\label{2nasdn}
\hat{\bm{G}}_t = \frac{1}{N}\sum_{i=1}^N \mathcal{C}_t(\bm{G}_t^{(i)})\ \mbox{where}\ \mathcal{C}_t(\bm{G}_t^{(i)}) := \bm{P}_t\bm{P}_t^\top \bm{G}_t^{(i)}.
\end{align}
To incorporate EF to accumulate useful gradient, we propose
\begin{subequations}
\label{eq-ef}
\begin{align}
    \hat{\bm{G}}_t^{(i)} &= \mathcal{C}_t(\bm{G}_t^{(i)} + \bm{E}_{t-1}^{(i)}),\label{eq:ef1} \\
    \bm{E}_{t}^{(i)} &= \bm{G}_t^{(i)} + \bm{E}_{t-1}^{(i)} - \hat{\bm{G}}_t^{(i)},\label{eq:ef2}
\end{align}
\end{subequations}
and update $\hat{\bm{G}}_t = \frac{1}{N}\sum_{i=1}^N \hat{\bm{G}}_t^{(i)}$. The auxiliary variable $\bm{E}_t^{(i)}$ accumulates the compression error and is added back to the gradient $\bm{G}_t^{(i)}$ in subsequent iterations to compensate for it. We initialize $\bm{E}_{-1}^{(i)} = \bm{0}$ for all nodes $i$. 

\vspace{1mm}
\noindent \textbf{Implementation.} With the compressor $\mathcal{C}_t(\cdot)$ defined in \eqref{2nasdn}, we can implement \eqref{eq:ef1} in a communication-efficient manner: 
\begin{align}\label{8zhb2we09}
\hspace{-1mm}\bm{R}^{(i)}_t \hspace{-0.5mm}=\hspace{-0.5mm} \bm{P}^\top_t(\bm{G}_t^{(i)} \hspace{-0.5mm}+\hspace{-0.5mm} \bm{E}_{t\hspace{-0.5mm}-\hspace{-0.5mm}1}^{(i)}),  \bm{R}_t \hspace{-0.5mm}=\hspace{-0.5mm} \frac{1}{N}\sum_{i=1}^N \bm{R}^{(i)}_t,  \hat{\bm{G}}_t \hspace{-0.5mm}=\hspace{-0.5mm} \bm{P}_t \bm{R}_t,
\end{align}
where $\bm{R}^{(i)}_t$ is the low-rank variable to be communicated.

\begin{algorithm}[!t]

\caption{Basic low-rank compression framework}
\label{alg:prelimnary}
\SetKwFunction{lazysvd}{Lazy-SVD}
\SetKwProg{sub}{Subroutine}{}{}

\vspace{1pt}
\JustifyAlgo{\hspace{-4.95mm} \textbf{Input}: $N$ nodes, learning rate $\gamma$, number of total iterations $T$, subspace changing frequency $\tau$, rank $r$, error buffer $\bm{E}_{-1} = \bm{0}$, weight $\bm{X}_{0} \in \mathbb{R}^{m \times n}$ and projection matrix $\bm{P}_{-1} \in \mathbb{R}^{m \times r}$ on each node $i \in [N]$ with shape $m \le n$. }\\
\vspace{2pt}
\textbf{Output}: Sequence of model weights $\{\bm{X}_t\}_{t=0}^{T+1}$.

\For{$t=0,\ldots,T$}{
    \textbf{(On $i$-th node)} \\
    $\bm{G}_t^{(i)} \gets \nabla F_i({\bm{X}}_t; {\bm{\xi}}_t^{(i)})$ with local data ${\bm{\xi}}_t^{(i)}$. \\
    $\bm{P}_t, \bm{G}_t \gets \mathsf{\bf Lazy}\mbox{-}\mathsf{\bf SVD}(\{\bm{G}_t^{(i)}\}_{i=1}^N, \bm{P}_{t-1}, t)$. \\
    $\bm{R}_t^{(i)} \gets {\bm{P}_t}^\top{\bm{G}}_t^{(i)}$. \\
    ${\bm{R}}_t \gets \frac{1}{N}\sum_{i=1}^{N}{\bm{R}}_t^{(i)}$. \hfill (All-Reduce)   \\
    $ \hat{\bm{G}}_t \gets \bm{P}_t\bm{R}_t\ \mathsf{\bf if}\  \mathsf{mod}(t, \tau) \ne 0$ \textbf{otherwise} $\bm{G}_t$.\\
  
    $\bm{X}_{t+1} \gets \mathsf{\bf Optimizer}(\bm{X}_{t}, \hat{\bm{G}}_t, \gamma)$. \\
    }
    \Return $\{\bm{X}_{t}\}_{t=0}^{T+1}$.

\vspace{2mm}
\sub{$\mathsf{\bf Lazy}\mbox{-}\mathsf{\bf SVD}(\{\bm{G}_t^{(i)}\}_{i=1}^N, \bm{P}_{t-1}, t)$}{
\uIf{$\mathsf{mod}(t, \tau) = 0$}{
    $\bm{G}_t \gets \frac{1}{N}\sum_{i=1}^{N}{\bm{G}_t}^{(i)} $.\hfill (All-Reduce) \\
    $\bm{U},\bm{\Sigma}, \bm{V} \gets \mathrm{\bf SVD}(\bm{G}_t)$. \\
    \vspace{-0.5mm}
    \Return $\bm{U}_{:,:r}, \bm{G}_t$.
}
\Else{
    \Return $\bm{P}_{t-1}, \bm{0}$.
}
}

\end{algorithm}

\vspace{-3mm}
\subsection{Semi-lazy SVD update}
While the error feedback update~\eqref{8zhb2we09} is effective in correcting the bias introduced by greedy low-rank compression in stochastic settings, its effectiveness relies on updating the projection matrix \(\bm{P}_t\) at every iteration. However, in~\eqref{8zhb2we09}, the projection matrix \(\bm{P}_t\) is generated by the \(\mathsf{\bf Lazy}\mbox{-}\mathsf{\bf SVD}\) operator, which enforces \(\bm{P}_t = \bm{P}_{t-1}\) for all iterations within a period of \(\tau\) steps. This section demonstrates that the \(\mathsf{\bf Lazy}\mbox{-}\mathsf{\bf SVD}\) operator is incompatible with error feedback, and that a modification is necessary for error feedback to remain effective.

\noindent \textbf{$\mathsf{\bf Lazy}\mbox{-}\mathsf{\bf SVD}$ nullifies error feedback.} 
The following proposition shows that when projection \(\bm{P}\) remains constant, the effect of error feedback \textit{vanishes}—i.e., the compressed gradient remains the \textit{same} with or without error feedback.

\begin{figure}[H]
\centering
\includegraphics[width=2.5in]{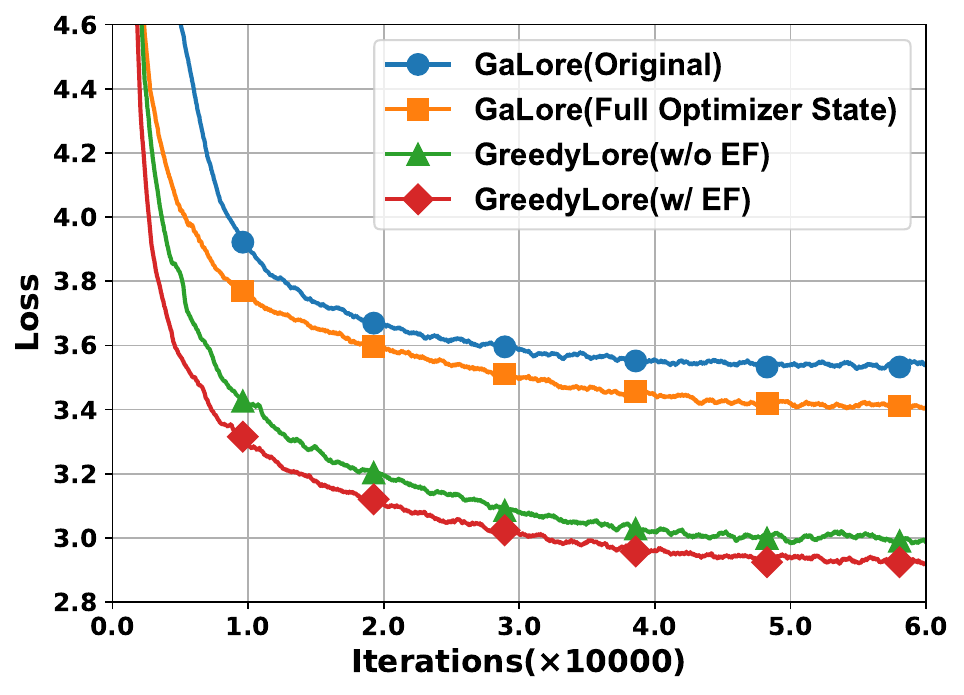}
\caption{Loss curves for pre-training LLaMA-350M using the original GaLore, GaLore with full optimizer states, and GreedyLore with and without error feedback, under a compression rank $r=32$.}
\label{fig:os}
\end{figure}

\begin{center}
\setlength\fboxsep{4pt}
\colorbox{gray!20}{\parbox{\dimexpr\linewidth-2\fboxsep\relax}{
\begin{proposition}
Consider compressor $\mathcal{C}_t(\bm{G}_t) = \bm{P} \bm{P}^\top \bm{G}_t$ for $t = 1, \ldots, \tau - 1$, where $\bm{G}_t \in \mathbb{R}^{m \times n}$ is an arbitrary matrix and $\bm{P} \in \mathbb{R}^{m \times r}$ is a fixed projection matrix satisfying $\bm{P}^\top \bm{P} = \bm{I}_r$. With error feedback update \eqref{eq-ef}, it follows that
\begin{align}\label{eq:lazy-svd-nullify}
\mathcal{C}_t(\bm{G}_t^{(i)} + \bm{E}_{t-1}^{(i)}) = \mathcal{C}_t(\bm{G}_t^{(i)}), \ \ \forall t=1,\cdots, \tau-1.
\end{align}
\end{proposition}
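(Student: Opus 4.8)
The plan is to prove, by induction on $t$, that the accumulated error buffer always lies in the orthogonal complement of $\mathrm{range}(\bm{P})$, i.e. the invariant $\bm{P}^\top \bm{E}_{t-1}^{(i)} = \bm{0}$ holds for every $t = 1, \dots, \tau-1$. Since $\bm{P}^\top \bm{P} = \bm{I}_r$, the matrix $\bm{\Pi} := \bm{P}\bm{P}^\top$ is the orthogonal projector onto $\mathrm{range}(\bm{P})$ and is idempotent ($\bm{\Pi}^2 = \bm{P}(\bm{P}^\top\bm{P})\bm{P}^\top = \bm{\Pi}$). Once the invariant is in hand, the conclusion \eqref{eq:lazy-svd-nullify} is immediate: for any such $t$,
\begin{align*}
\mathcal{C}_t(\bm{G}_t^{(i)} + \bm{E}_{t-1}^{(i)}) = \bm{P}\bm{P}^\top\bm{G}_t^{(i)} + \bm{P}\bigl(\bm{P}^\top\bm{E}_{t-1}^{(i)}\bigr) = \bm{P}\bm{P}^\top\bm{G}_t^{(i)} = \mathcal{C}_t(\bm{G}_t^{(i)}).
\end{align*}

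First I would check the base case: the initialization $\bm{E}_{-1}^{(i)} = \bm{0}$ gives $\bm{P}^\top\bm{E}_{-1}^{(i)} = \bm{0}$ trivially, which already yields \eqref{eq:lazy-svd-nullify} at $t=1$. For the inductive step, assume $\bm{P}^\top\bm{E}_{t-1}^{(i)} = \bm{0}$. Then by the display above $\hat{\bm{G}}_t^{(i)} = \bm{P}\bm{P}^\top(\bm{G}_t^{(i)} + \bm{E}_{t-1}^{(i)})$, so substituting into the error-feedback recursion \eqref{eq:ef2} gives
\begin{align*}
\bm{E}_t^{(i)} = \bm{G}_t^{(i)} + \bm{E}_{t-1}^{(i)} - \hat{\bm{G}}_t^{(i)} = \bigl(\bm{I}_m - \bm{P}\bm{P}^\top\bigr)\bigl(\bm{G}_t^{(i)} + \bm{E}_{t-1}^{(i)}\bigr).
\end{align*}
Left-multiplying by $\bm{P}^\top$ and using $\bm{P}^\top\bm{P} = \bm{I}_r$ gives $\bm{P}^\top\bm{E}_t^{(i)} = (\bm{P}^\top - \bm{P}^\top\bm{P}\bm{P}^\top)(\bm{G}_t^{(i)} + \bm{E}_{t-1}^{(i)}) = \bm{0}$, which closes the induction, and hence \eqref{eq:lazy-svd-nullify} holds for all $t = 1, \dots, \tau-1$.

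There is essentially no hard estimate here; the only thing that requires a moment's thought is identifying the right quantity to track — that the error buffer remains orthogonal to the fixed subspace $\mathrm{range}(\bm{P})$ — after which everything reduces to the idempotency $\bm{\Pi}^2 = \bm{\Pi}$, i.e. $\bm{P}^\top\bm{P} = \bm{I}_r$. The conceptual payoff of the statement, rather than its proof, is the main point: it shows that within a lazy-SVD period the error buffer never re-enters the retained subspace, so the feedback term is annihilated by $\mathcal{C}_t$ and contributes nothing, motivating the semi-lazy update introduced next.
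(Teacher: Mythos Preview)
Your proof is correct and takes essentially the same approach as the paper's: both reduce to the identity $\bm{P}^\top(\bm{I}_m-\bm{P}\bm{P}^\top)=\bm{0}$ (equivalently, idempotency of $\bm{P}\bm{P}^\top$), which forces the error buffer into the orthogonal complement of $\mathrm{range}(\bm{P})$ so that $\mathcal{C}_t$ annihilates it. The paper presents this as a one-line direct computation $\mathcal{C}_t(\bm{E}_{t-1}^{(i)})=\bm{P}\bm{P}^\top(\bm{I}_m-\bm{P}\bm{P}^\top)(\bm{G}_{t-1}^{(i)}+\bm{E}_{t-2}^{(i)})=\bm{0}$ rather than an induction---and indeed your inductive hypothesis is never invoked in your inductive step, so your argument is the same direct calculation dressed as an induction.
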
}}
\end{center}

\begin{proof} Due to the linearity of the compressor, we have
\begin{align}\label{3247uads}
\mathcal{C}_t(\bm{G}_t^{(i)} + \bm{E}_{t-1}^{(i)}) = \mathcal{C}_t(\bm{G}_t^{(i)}) + \mathcal{C}_t(\bm{E}_{t-1}^{(i)}).
\end{align}
We next demonstrate $\mathcal{C}_t(\bm{E}_{t-1}^{(i)}) = \bm{0}$ with constant $\bm{P}$. When $t=1$, \eqref{3247uads} is trivial. When $t>1$,
\begin{align}
\mathcal{C}_t(\bm{E}_{t-1}^{(i)}) &=\bm{P}\bm{P}^\top\bm{E}_{t-1}^{(i)} \nonumber\\
&\overset{\eqref{eq-ef}}{=}\bm{P}\bm{P}^\top(\bm{G}_{t-1}^{(i)} + \bm{E}_{t-2}^{(i)}-\mathcal{C}_{t-1}(\bm{G}_{t-1}^{(i)} + \bm{E}_{t-2}^{(i)})) \nonumber\\
&=\bm{P}\bm{P}^\top(\bm{I}_n - \bm{P}\bm{P}^\top)(\bm{G}_{t-1}^{(i)} + \bm{E}_{t-2}^{(i)} ) \nonumber\\
&=(\bm{P}\bm{P}^\top - \bm{P}\bm{I}_r\bm{P}^\top)(\bm{G}_{t-1}^{(i)} + \bm{E}_{t-2}^{(i)}) = \bm{0}. \label{238hax0}
\end{align}
With \eqref{3247uads} and \eqref{238hax0}, we achieve the result in \eqref{eq:lazy-svd-nullify}. 
\end{proof}

\noindent \textbf{$\mathsf{\bf Lazy}\text{-}\mathsf{\bf SVD}$ induces a non‐contractive compressor.} As established in the existing literature~\cite{richtarik2021ef21,huang2022lower}, the convergence of the error feedback update~\eqref{eq-ef} relies on the use of a contractive compressor (see Definition~\ref{compressor-assumption}). The following proposition demonstrates that \(\mathcal{C}_t(\bm{G}_t) = \bm{P}_t \bm{P}_t^\top \bm{G}_t\) may fail to be contractive when \(\bm{P}_t\) is generated using the \(\mathsf{\bf Lazy}\text{-}\mathsf{\bf SVD}\) operator, which ensures that \(\bm{P}_t = \bm{P}_{t-1}\) for all iterations within a period of \(\tau\) steps.
\begin{center}
\setlength\fboxsep{4pt}
\colorbox{gray!20}{\parbox{\dimexpr\linewidth-2\fboxsep\relax}{
\begin{proposition}
\label{prop-not-contractive}
Consider the compressor $\mathcal{C}(\bm{G}_t) = \bm{P} \bm{P}^\top \bm{G}_t$ for $t = 1, \ldots, \tau - 1$, where 
$\bm{U},\bm{\Sigma}, \bm{V} = \mathrm{SVD}(\bm{G}_0)$ and $\bm{P} = \bm{U}_{:,:r}\in \mathbb{R}^{m\times r}$. There exists some $\bm{G}_t$ such that 
\begin{align}
\|\mathcal{C}(\bm{G}_t) - \bm{G}_t\|^2 = \|\bm{G}_t\|^2,
\end{align}
which implies that $\mathcal{C}(\bm{G}_t)$ is a non-contractive compressor.
\end{proposition}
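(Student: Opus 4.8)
The plan is to prove Proposition~\ref{prop-not-contractive} by exhibiting an explicit matrix $\bm{G}_t$ that is orthogonal to the fixed column space of $\bm{P}$, so that the projection annihilates it entirely. Concretely, since $\bm{P} = \bm{U}_{:,:r}$ consists of the first $r$ left singular vectors of $\bm{G}_0$ and $m \le n$, the complementary columns $\bm{U}_{:,r+1:m}$ (assuming $r < m$; the $r = m$ case makes the compressor lossless and trivializes the framework, so it is excluded) are nonzero and satisfy $\bm{P}^\top \bm{U}_{:,r+1:m} = \bm{0}$. I would then simply set $\bm{G}_t := \bm{u}\, \bm{w}^\top$ for some unit vector $\bm{u}$ lying in the column span of $\bm{U}_{:,r+1:m}$ (e.g.\ $\bm{u} = \bm{U}_{:,r+1}$) and any nonzero $\bm{w} \in \mathbb{R}^n$.

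The key computation is one line: $\mathcal{C}(\bm{G}_t) = \bm{P}\bm{P}^\top \bm{u}\,\bm{w}^\top = \bm{P}(\bm{P}^\top \bm{u})\bm{w}^\top = \bm{P}\,\bm{0}\,\bm{w}^\top = \bm{0}$, where $\bm{P}^\top \bm{u} = \bm{0}$ follows from the orthonormality of the columns of $\bm{U}$. Therefore $\|\mathcal{C}(\bm{G}_t) - \bm{G}_t\|^2 = \|\bm{0} - \bm{G}_t\|^2 = \|\bm{G}_t\|^2$, and since $\bm{G}_t \ne \bm{0}$ this cannot satisfy $\mathbb{E}\|\mathcal{C}(\bm{G}_t) - \bm{G}_t\|^2 \le (1-\delta)\|\bm{G}_t\|^2$ for any $\delta \in (0,1]$; hence $\mathcal{C}$ is not contractive. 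One should also remark that this $\bm{G}_t$ is a legitimate iterate in the sense that nothing in Framework~\eqref{eq-prelim-framework} or the error-feedback scheme forces the stochastic gradients at later steps to remain aligned with the span of $\bm{P}$ — indeed the whole point of the lazy update is that $\bm{P}$ is frozen while the gradients drift.

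The only real obstacle is a presentational one rather than a mathematical one: I must be careful about the case $r = m$, where the SVD projection is the identity and the claim fails; the proposition implicitly assumes $r < m$ (a genuine compression), consistent with the ``$m \le n$'' and rank-$r$ setup in Algorithm~\ref{alg:prelimnary}, so I would state this assumption explicitly. A secondary subtlety is whether one wants $\bm{G}_t$ to arise as an actual stochastic gradient; since the proposition only asserts ``there exists some $\bm{G}_t$'' and the worst-case nature of non-contractiveness is about arbitrary inputs to $\mathcal{C}$, no such realizability argument is needed, but a sentence explaining why adversarial-looking gradients are relevant (gradient noise can point in directions orthogonal to a stale subspace) strengthens the motivation. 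Overall this is a short existence proof whose entire content is the orthogonality $\bm{P}^\top\bm{U}_{:,r+1} = \bm{0}$.
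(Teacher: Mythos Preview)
Your proposal is correct and follows essentially the same approach as the paper: both exploit that any $\bm{G}_t$ lying in the orthogonal complement of the column span of $\bm{P}$ is annihilated by $\bm{P}\bm{P}^\top$. The paper simply instantiates your general construction with a concrete $2\times 2$ example ($\bm{G}_0=\mathrm{diag}(2,1)$, $r=1$, $\bm{P}=e_1$, $\bm{G}_t=e_2e_2^\top$), whereas you phrase it abstractly via $\bm{G}_t=\bm{U}_{:,r+1}\bm{w}^\top$; the underlying idea and the one-line computation $\bm{P}^\top\bm{U}_{:,r+1}=\bm{0}$ are identical.
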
}}
\end{center}
\begin{proof}
We consider a simple scenario in which 
\[
\bm{G}_0 = \begin{pmatrix}2 & 0\\0 & 1\end{pmatrix} \quad \mbox{with rank-$1$ projection} \quad 
\bm{P} = \begin{pmatrix}1\\0\end{pmatrix}.
\]
We further consider 
\[
\bm{G}_{t} = \begin{pmatrix}0 & 0\\0 & 1\end{pmatrix}, \ \mbox{and hence} \ \mathcal{C}(\bm{G}_{t}) = \bm{P}\,\bm{P}^\top\,\bm{G}_{t} = \begin{pmatrix}0 & 0\\0 & 0\end{pmatrix}.
\]
In this scenario, it holds that $\|\mathcal{C}(\bm{G}_t) - \bm{G}_t\|^2 = \|\bm{G}_t\|^2$. 
\end{proof}
\noindent The above proposition implies that the compressor becomes non-contractive when \({\bm G}_t\) lies in a subspace orthogonal to \(\bm{P}\), which can occur in certain optimization problems, as illustrated in Appendix~\ref{sec:expaned_counter_example}. 

\vspace{1mm}
\noindent \textbf{\(\mathsf{\bf Semi}\mbox{-}\mathsf{\bf Lazy}\ \mathsf{\bf SVD}\) operator.} As discussed above, maintaining a constant projection matrix \(\bm{P}\) over a period nullifies the effect of error feedback and results in a non-contractive compressor. This suggests that algorithm design should incorporate a dynamic projection \(\bm{P}_t\) that adapts to the time-varying stochastic gradient \(\bm{G}_t\).
Similar to the \(\mathsf{\bf Lazy}\text{-}\mathsf{\bf SVD}\) operator, we propose to perform the SVD lazily every \(\tau\) iterations:
\begin{align}
\label{eq-SVD-P1}
\bm{U}, \bm{\Sigma}, \bm{V} = \mathsf{\bf SVD}(\bm{G}_t), \quad \bm{P}_t = \bm{U}_{:,:r}
,\end{align}
when \(\mathsf{mod}(t, \tau) = 0\). Otherwise, we update \(\bm{P}_t\) by solving the following subproblem:
\begin{align}
\label{eq:new_sub_problem}
\bm{P}_t = \argmin_{\bm{P} \in \mathcal{S}(\bm{U}, r)} \left\| \bm{G}_t - \bm{P} \bm{P}^\top \bm{G}_t \right\|_F^2,
\end{align}
where \(\mathcal{S}(\bm{U}, r)\) is defined as:
\[
\mathcal{S}(\bm{U}, r) := \left\{ \bm{U}_{:, \mathcal{J}} \in \mathbb{R}^{m \times r} \mid \mathcal{J} \subseteq \{1, \dots, n\},\ |\mathcal{J}| = r \right\}.
\]
In other words, \({\bm P}_t\) is updated by optimally selecting \(r\) columns from the given orthonormal matrix \(\bm{U}\) to minimize the compression error defined in~\eqref{eq:new_sub_problem}. We refer to equations~\eqref{eq-SVD-P1}–\eqref{eq:new_sub_problem} as the \textbf{\(\mathsf{\bf Semi}\mbox{-}\mathsf{\bf Lazy}\ \mathsf{\bf SVD}\)} operator. On the one hand, it performs SVD only once every \(\tau\) iterations; on the other hand, it dynamically updates \({\bm P}_t\) to track the time-varying \({\bm G}_t\). 

\vspace{1mm}
\noindent \textbf{\(\mathsf{\bf Semi}\mbox{-}\mathsf{\bf Lazy}\ \mathsf{\bf SVD}\) enables the contractive compressor.}
The following proposition demonstrates that \textbf{\(\mathsf{\bf Semi}\mbox{-}\mathsf{\bf Lazy}\ \mathsf{\bf SVD}\)} enables the contractive compressor.
\begin{center}
\setlength\fboxsep{4pt}
\colorbox{gray!20}{\parbox{\dimexpr\linewidth-2\fboxsep\relax}{
    \begin{proposition}\label{prop:contractive}
      Consider compressor $\mathcal{C}_t({\bm G}_t) = {\bm P}_t{\bm P}_t^\top {\bm G}_t$ where $\bm{G}_t \in \mathbb{R}^{m \times n}$ is an arbitrary matrix and $\bm{P}_t \in \mathbb{R}^{m \times r}$ is generated through \textbf{\(\mathsf{\bf Semi}\mbox{-}\mathsf{\bf Lazy}\ \mathsf{\bf SVD}\)} operator \eqref{eq-SVD-P1}–\eqref{eq:new_sub_problem}. It holds for any $\bm{G}_t \in \mathbb{R}^{m \times n}$ that 
      \begin{align}\label{eq:contractive}
        \bigl\|\bm{G}_t-\mathcal{C}_t(\bm{G}_t)\bigr\|_F^2 
        \;\le\;\Bigl(1-\frac{r}{m}\Bigr)\|\bm{G}_t\|_F^2. 
      \end{align}
    \end{proposition}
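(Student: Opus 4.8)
The plan is to reduce both branches of the $\mathsf{\bf Semi}\mbox{-}\mathsf{\bf Lazy}\ \mathsf{\bf SVD}$ operator to a single column-selection argument and then invoke an elementary averaging inequality. First I would unify the two cases. When $\mathsf{mod}(t,\tau)=0$, \eqref{eq-SVD-P1} gives $\bm{P}_t=\bm{U}_{:,:r}$ with $\bm{U}$ the left-singular matrix of $\bm{G}_t$ itself; otherwise \eqref{eq:new_sub_problem} gives $\bm{P}_t=\bm{U}_{:,\mathcal{J}_t}$ where $\mathcal{J}_t$ minimizes $\|\bm{G}_t-\bm{U}_{:,\mathcal{J}}\bm{U}_{:,\mathcal{J}}^\top\bm{G}_t\|_F^2$ over $|\mathcal{J}|=r$ and $\bm{U}$ is the orthonormal matrix from the most recent SVD. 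In both cases $\bm{U}\in\mathbb{R}^{m\times m}$ is orthogonal (a full SVD, using $m\le n$), and $\bm{P}_t$ consists of $r$ of its columns, so $\bm{P}_t^\top\bm{P}_t=\bm{I}_r$ and $\bm{P}_t\bm{P}_t^\top$ is the orthogonal projector onto $\mathrm{span}\{\bm{u}_j:j\in\mathcal{J}_t\}$.

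Next I would expand the compression error in the orthonormal basis $\{\bm{u}_j\}_{j=1}^m$. Writing $\bm{I}_m-\bm{P}_t\bm{P}_t^\top=\sum_{j\notin\mathcal{J}_t}\bm{u}_j\bm{u}_j^\top$ and using orthonormality (the cross terms vanish inside the trace), one obtains $\|\bm{G}_t-\mathcal{C}_t(\bm{G}_t)\|_F^2=\sum_{j\notin\mathcal{J}_t}a_j$ where $a_j:=\|\bm{u}_j^\top\bm{G}_t\|_2^2\ge 0$, and likewise $\|\bm{G}_t\|_F^2=\sum_{j=1}^m a_j$ by Parseval. Then I would observe that $\mathcal{J}_t$ picks the $r$ largest of the $a_j$'s: in the SVD branch because $a_j=\sigma_j^2$ are sorted in decreasing order, and in the subproblem branch because minimizing $\sum_{j\notin\mathcal{J}}a_j$ over $|\mathcal{J}|=r$ is equivalent to maximizing $\sum_{j\in\mathcal{J}}a_j$. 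Hence $\sum_{j\notin\mathcal{J}_t}a_j$ equals the sum of the $m-r$ smallest values among $a_1,\dots,a_m$.

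The final step is the elementary fact that the average of the $m-r$ smallest of $m$ nonnegative numbers is at most their overall average: if $a_{(1)}\le\cdots\le a_{(m)}$ denotes the sorted sequence, then each term in $\sum_{k=1}^{m-r}a_{(k)}$ is at most each term in $\sum_{k=m-r+1}^{m}a_{(k)}$, so $\tfrac{1}{m-r}\sum_{k=1}^{m-r}a_{(k)}\le\tfrac{1}{r}\sum_{k=m-r+1}^{m}a_{(k)}$, which rearranges to $\sum_{k=1}^{m-r}a_{(k)}\le(1-\tfrac{r}{m})\sum_{k=1}^{m}a_{(k)}$. Chaining this with the two identities above yields exactly \eqref{eq:contractive}.

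The only real subtlety — and thus the point I would take care to state cleanly — is the reduction in the lazy (subproblem) branch: one must confirm that the reference matrix $\bm{U}$ is a genuine $m\times m$ orthogonal matrix, so that $\{\bm{u}_j\}_{j=1}^m$ is a full orthonormal basis of $\mathbb{R}^m$ and $\bm{I}_m=\sum_{j=1}^m\bm{u}_j\bm{u}_j^\top$; this holds because we compute a full SVD and $m\le n$. Everything afterward is routine trace/Parseval bookkeeping plus the averaging inequality, so no further obstacle is expected.
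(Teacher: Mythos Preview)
Your proposal is correct and follows essentially the same route as the paper: expand $\|\bm{G}_t-\bm{P}_t\bm{P}_t^\top\bm{G}_t\|_F^2$ in the orthonormal basis $\{\bm{u}_j\}_{j=1}^m$ to get $\|\bm{G}_t\|_F^2-\sum_{j\in\mathcal{J}_t}\|\bm{u}_j^\top\bm{G}_t\|_2^2$, observe that $\mathcal{J}_t$ picks the top-$r$ terms, and apply the averaging inequality (the paper states it as ``the sum of the top $r$ components is at least an $r/m$ fraction of the total,'' which is exactly the complement of your $m{-}r$-smallest bound). Your explicit handling of both branches and the remark about needing a full $m\times m$ orthogonal $\bm{U}$ under $m\le n$ are slightly more careful than the paper's write-up, but the argument is the same.
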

  }
}
\end{center}
\begin{proof} 
Without loss of generality, we assume \( t \in \{k\tau, k\tau+1, \ldots, (k+1)\tau - 1\} \). Let \({\bm U}\) denote the orthogonal matrix obtained from the SVD decomposition~\eqref{eq-SVD-P1} applied to \({\bm G}_{k\tau}\). It then follows that
\begin{align}
    & \Vert \bm{G}_t - \bm{P}_t\bm{P}_t^\top \bm{G}_t\Vert_F^2 \nonumber\\
    =& \Vert \bm{U}^\top(\bm{G}_t - \bm{P}_t\bm{P}_t^\top \bm{G}_t)\Vert_F^2 \nonumber\\
    =& \sum\limits_{j=1}^{m} \Vert \bm{u}_j^\top(\bm{G}_t - \bm{P}_t\bm{P}_t^\top \bm{G}_t)\Vert_F^2 \nonumber\\
    \overset{(a)}{=}& \sum\limits_{j \notin \mathcal{J}} \Vert \bm{u}_j^\top\bm{G}_t - \bm{0}^\top\Vert_F^2 + \sum\limits_{j \in \mathcal{J}} \Vert \bm{u}_j^\top\bm{G}_t - \bm{u}_j^\top\bm{G}_t\Vert_F^2 \nonumber\\
    =& \Vert \bm{G}_t \Vert_F^2 - \sum\limits_{j \in \mathcal{J}} \Vert\bm{u}_j^\top\bm{G}_t\Vert_F^2,\label{eq:sub_problem_derivation}
\end{align}
where \(\bm{u}_j\) is the \(j\)-th column of \({\bm U}\),  equality (a) follows from the fact that \(\bm{P}_t\) is constructed by selecting the columns indexed by \(\mathcal{J}\) from \(\bm{U}\) (see \eqref{eq:new_sub_problem}), and the last equality holds because \(\|\bm{G}_t\|_F^2 = \sum_{j=1}^m\|\bm{u}_j^\top\bm{G}_t\|_F^2\). From~\eqref{eq:new_sub_problem} and~\eqref{eq:sub_problem_derivation}, the index set \(\mathcal{J}\) is selected by choosing the top \(r\) components from the set \(\{\|\bm{u}_j^\top \bm{G}_t\|_F^2\}_{j=1}^m\). Since the sum of the top \(r\) components is at least an \(r/m\) fraction of the total, we have
\[
\sum_{j \in \mathcal{J}} \|\bm{u}_j^\top \bm{G}_t\|_F^2 \;\ge\; \frac{r}{m} \|\bm{G}_t\|_F^2.
\]
Substituting this bound into \eqref{eq:sub_problem_derivation} we prove \eqref{eq:contractive}. 
\end{proof}

\vspace{1mm}
\noindent 
\textbf{\(\mathsf{\bf Semi}\mbox{-}\mathsf{\bf Lazy}\ \mathsf{\bf SVD}\) implementation.}
Problem~\eqref{eq:new_sub_problem} can be solved efficiently, since~\eqref{eq:sub_problem_derivation} implies that the optimal projection matrix \(\bm{P} \in \mathcal{S}(\bm{U}, r)\) can be obtained by selecting the index set \(\mathcal{J}\) corresponding to the \(r\) largest values of \(\Vert \bm{u}_j^\top \bm{G}_t \Vert_F^2\), 
i.e., 
\begin{align}
\label{eq-semi-lazy-svd-2}
\mathcal{J} = \mathsf{arg\, top}_r\left( \left\{ \|\bm{u}_j^\top \bm{G}_t\|_F^2 \right\}_{j=1}^m \right),
\quad \bm{P}_t = [\bm{U}]_{:,\mathcal{J}},
\end{align}
where $\mathsf{arg\, top}_r(\cdot)$ returns $r$ indices. This approach avoids the need to solve the costly problem \eqref{eq:new_sub_problem}. In practice, \textbf{\(\mathsf{\bf Semi}\mbox{-}\mathsf{\bf Lazy}\ \mathsf{\bf SVD}\)} is implemented as follows
\begin{align}
\label{2enadq2e}
\hspace{-2mm}
\left\{
\begin{array}{ll}
\hspace{-1mm}\mbox{achieve $\bm U$ and $\bm P_t$ through \eqref{eq-SVD-P1}} ,& \mbox{if $\mathsf{mod}(t,\tau)=0$},  \\
\hspace{-1mm}\mbox{achieve $\bm P_t$ through \eqref{eq-semi-lazy-svd-2}}     ,& \mbox{otherwise}.
\end{array}
\right.
\end{align}
\textbf{\(\mathsf{\bf Semi}\mbox{-}\mathsf{\bf Lazy}\ \mathsf{\bf SVD}\)} \eqref{2enadq2e} performs a single SVD per period but  maintains the contractiveness of the compression operator.

\begin{algorithm}[tb]

    \caption{\(\mathsf{\bf Approx}\mbox{-}\mathsf{\bf  Top}\mbox{-}{\bf r}(\{{\bm G}_t^{(i)}\}_{i=1}^N, \bm{U}, r)\).}
    \label{alg:vec_top}

    \vspace{1pt}
    \JustifyAlgo{\hspace{-3.35mm}\textbf{Input}: $N$ nodes, rank $r$, global orthogonal matrix $\bm{U} \in \mathbb{R}^{m\times m}$, local gradient $\bm{G}_t^{(i)}  \in \mathbb{R}^{m\times n}$ for $i\in [N]$} \\
    {\bfseries Output}: Global rank-$r$ projector $\bm{P}_t \in \mathbb{R}^{m\times r}$. \\
    \textbf{(On $i$-th node)} \\
    Generate global random vectors $\{\bm v_j\}^m_{j=1}$ as in \eqref{eq:random-v}. \\
    \JustifyAlgo{\hspace{-4.75mm} Compute $\bm{\Lambda}^{(i)} \gets [\bm{\lambda}_1^{(i)}, \ldots, \bm{\lambda}_m^{(i)}]$ with $\bm{\lambda}_j^{(i)} = \bm{u}_{j}^\top\bm{G}_t^{(i)}\bm{v}_j $ for $j \in [m]$, where  $\bm{u}_{j}$ is $j$-th column of $\bm{U}$.} \\
    \vspace{0.5mm}
    Compute $\overline{\bm{\Lambda}} \gets\frac{1}{N}\sum_{i=1}^{N}{\bm{\Lambda}}^{(i)}$. \hfill (All-Reduce) \\
    \vspace{0.5mm}
    \JustifyAlgo{\hspace{-4.8mm} Compute $\overline{\bm{\Sigma}} \gets (\overline{\bm{\Lambda}})^2$ with element-wise square operation.} \\
    Compute $\mathcal{J} = \mathsf{arg\, top}_r\left( \overline{\bm{\Sigma}}\right)$ and $\bm P_t \gets \bm{U}_{:,\mathcal{J}}$. \\
    \Return $\bm{P}_t$.
\end{algorithm}
\vspace{-5mm}

\subsection{Approximate global Top-$r$ projection}
When computing \(\bm{P}_t\) via~\eqref{eq-semi-lazy-svd-2}, we must aggregate the globally averaged gradient \({\bm{G}}_t = \frac{1}{N}\sum_{i=1}^{N} \bm{G}_t^{(i)}\), where \(\bm{G}_t^{(i)}\) denotes the local gradient on the \(i\)-th node at iteration \(t\). This requires full communication of all local gradients \(\bm{G}_t^{(i)}\), which can be prohibitively expensive in large-scale distributed systems.  Worse still, since \({\bm{G}}_t\) varies at every iteration, a naive implementation of~\eqref{eq-semi-lazy-svd-2} would require transmitting \(\bm{G}_t^{(i)}\) at each iteration—undermining the communication efficiency that low-rank compression is intended to provide.

To mitigate this issue, we propose a strategy that approximately identifies the top \(r\) indices in~\eqref{eq-semi-lazy-svd-2} without requiring transmission of the full local gradients \(\bm{G}_t^{(i)}\). The key idea is to generate \(m\) independent projection vectors \(\bm{v}_1, \ldots, \bm{v}_m\), each sampled from the standard normal distribution in \(\mathbb{R}^n\), i.e.,
\begin{align}
\label{eq:random-v}
\bm{v}_j \sim \mathcal{N}(\bm{0}, \bm{I}_n), \quad j = 1, \ldots, m,
\end{align}
using a synchronized random seed across all nodes. During the implementation of~\eqref{eq-semi-lazy-svd-2}, rather than transmitting \(\bm{G}_t^{(i)}\) or the full set \(\{\bm{u}_j^\top \bm{G}_t^{(i)}\}_{j=1}^m\) during the global All-Reduce operation, each node $i$ transmits only $m$ single scalars \(\bm{\lambda}_j^{(i)} = \bm{u}_j^\top \bm{G}_t^{(i)} \bm{v}_j\) for $j\in [m]$. The global average is then computed as
\begin{align}
\label{eq:lambda-bar}
\overline{\bm{\lambda}}_j = \frac{1}{N} \sum_{i=1}^N \bm{\lambda}_j^{(i)} \ \mbox{with} \ \bm{\lambda}_j^{(i)} = \bm{u}_j^\top \bm{G}_t^{(i)} \bm{v}_j, \quad \forall j \in [m].
\end{align}
The following proposition shows that $
\overline{\bm{\lambda}}_j^2 = \left(\frac{1}{N} \sum_{i=1}^{N} \bm{\lambda}_j^{(i)}\right)^2$ 
is an unbiased estimator of $
\|\bm{u}_j^\top {\bm{G}}_t\|_F^2$ in \eqref{eq-semi-lazy-svd-2}. 

\begin{center}
\setlength\fboxsep{4pt}
\colorbox{gray!20}{\parbox{\dimexpr\linewidth-2\fboxsep\relax}{
\begin{proposition}\label{prop:unbiased-lambda}
Let $\overline{\bm{\lambda}}_j = \frac{1}{N} \sum_{i=1}^N \bm{\lambda}_j^{(i)}$ with each \(\bm{\lambda}_j^{(i)} = \bm{u}_j^\top \bm{G}_t^{(i)} \bm{v}_j\) and $\bm{v}_j \sim \mathcal{N}(\bm{0}, \bm{I}_n)$. It holds that 
\begin{align}
\mathbb{E}[\overline{\bm{\lambda}}_j^2]  = \|\bm{u}_j^\top {\bm{G}}_t\|_F^2.
\end{align}
\end{proposition}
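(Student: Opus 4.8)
The plan is to reduce the claim to the elementary second-moment identity $\mathbb{E}[\bm{v}_j\bm{v}_j^\top] = \bm{I}_n$ for a standard Gaussian vector, after exploiting the fact that the \emph{same} projection vector $\bm{v}_j$ is shared across all nodes (it is generated from a synchronized seed). First I would introduce the shorthand $\bm{a}_i^\top := \bm{u}_j^\top \bm{G}_t^{(i)} \in \mathbb{R}^{1\times n}$, so that $\bm{\lambda}_j^{(i)} = \bm{a}_i^\top \bm{v}_j$ is a scalar. Because $\bm{v}_j$ does not depend on $i$, averaging over nodes is linear in $\bm{v}_j$:
\begin{align}
\overline{\bm{\lambda}}_j = \frac{1}{N}\sum_{i=1}^N \bm{a}_i^\top \bm{v}_j = \left(\frac{1}{N}\sum_{i=1}^N \bm{a}_i\right)^{\!\top} \bm{v}_j = \bm{a}^\top \bm{v}_j, \quad \text{where } \bm{a}^\top := \bm{u}_j^\top \bm{G}_t,
\end{align}
using $\bm{G}_t = \frac{1}{N}\sum_{i=1}^N \bm{G}_t^{(i)}$ in the last identification.

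Next I would square this scalar and write $\overline{\bm{\lambda}}_j^2 = (\bm{a}^\top \bm{v}_j)^2 = \bm{a}^\top \bm{v}_j \bm{v}_j^\top \bm{a}$, then take expectation over $\bm{v}_j \sim \mathcal{N}(\bm{0},\bm{I}_n)$. Since $\bm{a}$ is deterministic given $\bm{G}_t$, linearity of expectation gives $\mathbb{E}[\overline{\bm{\lambda}}_j^2] = \bm{a}^\top \mathbb{E}[\bm{v}_j\bm{v}_j^\top]\, \bm{a} = \bm{a}^\top \bm{a} = \|\bm{a}\|_2^2 = \|\bm{u}_j^\top \bm{G}_t\|_F^2$, which is exactly the claimed identity.

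There is no real technical obstacle here; the only point that deserves care is the observation that $\bm{v}_j$ is common to all nodes, so that $\overline{\bm{\lambda}}_j$ is genuinely a linear functional of a \emph{single} Gaussian vector rather than an average of independent ones — this is what makes $\overline{\bm{\lambda}}_j^2$ an unbiased estimator of $\|\bm{u}_j^\top\bm{G}_t\|_F^2$ (the squared norm of the \emph{global} quantity) rather than of $\frac{1}{N^2}\sum_i \|\bm{u}_j^\top \bm{G}_t^{(i)}\|_F^2$. I would also note in passing that if one additionally wanted a variance bound (not required by the statement), it would follow from the fourth-moment formula $\mathbb{E}[(\bm{a}^\top\bm{v}_j)^4] = 3\|\bm{a}\|_2^4$, giving $\mathrm{Var}(\overline{\bm{\lambda}}_j^2) = 2\|\bm{u}_j^\top\bm{G}_t\|_F^4$, but this is beyond what the proposition asserts.
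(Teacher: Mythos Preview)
Your proof is correct and follows essentially the same approach as the paper: collapse the node average using the shared $\bm{v}_j$ to obtain $\overline{\bm{\lambda}}_j = \bm{u}_j^\top \bm{G}_t\, \bm{v}_j$, then apply $\mathbb{E}[\bm{v}_j\bm{v}_j^\top]=\bm{I}_n$. Your emphasis on why the common $\bm{v}_j$ is essential is exactly the point the paper highlights as well.
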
}}
\end{center}

\begin{algorithm}[!tb]
\caption{GreedyLore algorithm}
\label{alg:greedylore}

\vspace{1pt}
\JustifyAlgo{\hspace{-4.9mm} {\bfseries Input}: $N$ nodes, number of total iterations $T$, subspace changing frequency $\tau$, rank $r$, $\bm{X}_{0} \in \mathbb{R}^{m \times n}$ and  $\bm{E}_{-1}^{(i)} = \bm{0}$ for node $i \in [N]$ with shape $m \le n$. Initialize $\bm U = \bm I_m$.} \\
\vspace{2pt}
\textbf{Output}: Sequence of model weights $\{\bm{X}_t\}_{t=0}^{T+1}$.

\For{$t=0,\ldots,T$}{
    \textbf{(On $i$-th node)} \\
    \colorbox{green!30}{$\bm{G}_t^{(i)} \gets \nabla F_i({\bm{X}}_t; {\bm{\xi}}_t^{(i)}) + \bm{E}_{t-1}^{(i)}$ with local data ${\bm{\xi}}_t^{(i)}$.} \\
    \colorbox{purple!15}{$\bm{P}_t, \bm{G}_t, \bm{U} \gets \mathsf{\bf Semi}\mbox{-}\mathsf{\bf Lazy}\mbox{-}\mathsf{\bf SVD}(\{\bm{G}_t^{(i)}\}_{i=1}^N, \bm{U}, t)$.} \\
    $\bm{R}_t^{(i)} \gets {\bm{P}_t}^\top{\bm{G}}_t^{(i)}$. \\
    \colorbox{green!30}{$\bm{E}_{t}^{(i)} \gets \bm{G}_t^{(i)} \hspace{-1mm}- \hspace{-1mm}\bm{P}_t\bm{R}_t^\top \ \mathsf{\bf if\ } \mathsf{mod}(t, \tau) \ne 0\ \mathsf{\bf otherwise}\  \bm{0}$.} \\
    ${\bm{R}}_t \gets \frac{1}{N}\sum_{i=1}^{N}{\bm{R}}_t^{(i)}$. \hfill (All-Reduce)   \\
    $ \hat{\bm{G}}_t \gets \bm{P}_t\bm{R}_t\ \mathsf{\bf if}\  \mathsf{mod}(t, \tau) \ne 0$ \textbf{otherwise} $\bm{G}_t$. \\
    $\bm{X}_{t+1} \gets \mathsf{\bf Optimizer}(\bm{X}_{t}, \hat{\bm{G}}_t, \gamma)$. \\
}
\Return $\{\bm{X}_{t}\}_{t=0}^{T+1}$.

\vspace{2mm}
\sub{$\mathsf{\bf Semi}\mbox{-}\mathsf{\bf Lazy}\mbox{-}\mathsf{\bf SVD}(\{\bm{G}_t^{(i)}\}_{i=1}^N,\bm{U}, t)$}{
    \uIf{$\mathsf{mod}(t, \tau) = 0$}{
        $\bm{G}_t \gets \frac{1}{N}\sum_{i=1}^{N}{\bm{G}_t}^{(i)} $.\hfill (All-Reduce) \\
        $\bm{U},\bm{\Sigma}, \bm{V} \gets \mathrm{\bf SVD}(\bm{G}_t),\ \bm P_t \gets \bm{U}_{:,:r}$. \\
        \vspace{-0.5mm}
        \Return $\bm{P}_t,  \bm{G}_t, \bm{U}$.
    }\Else{
        $\bm{P}_{t} \gets \mathsf{\bf Approx} \mbox{-} \mathsf{\bf Top} \mbox{-}{\bf r} (\{{\bm G}_t^{(i)}\}_{i=1}^N, \bm{U}, r)$. \\
        \Return $\bm{P}_{t}, \bm{0}, \bm{U}$. \\
    }
}

\end{algorithm}

\begin{proof}
Since all nodes share the same random projection vectors $\bm{v}_j$, it is straightforward to verify that:
\begin{align}
    \mathbb{E}[\overline{\bm{\lambda}}_j^2] 
    &= \mathbb{E}\left[\bm{u}_j^\top {\bm{G}}_t\, \bm{v}_j \,\bm{v}_j^\top \,\left({\bm{G}}_t\right)^\top \bm{u}_j\right] \nonumber\\
    &= \bm{u}_j^\top {\bm{G}}_t\, \mathbb{E}\left[\bm{v}_j \bm{v}_j^\top\right]\, \left({\bm{G}}_t\right)^\top \bm{u}_j \nonumber\\
    &= \bm{u}_j^\top {\bm{G}}_t\, \bm{I}_n \,\left({\bm{G}}_t\right)^\top \bm{u}_j = \|\bm{u}_j^\top {\bm{G}}_t\|_F^2.
\end{align}
The first equality holds since $\overline{\bm{\lambda}}_j=\bm{u}_j^\top (\frac{1}{N}\sum_{i=1}^N{\bm G}_t^{(i)}) {\bm v}_j$.
\end{proof}
With Proposition~\ref{prop:unbiased-lambda}, we propose the \(\mathsf{\bf Semi}\mbox{-}\mathsf{\bf Lazy}\ \mathsf{\bf SVD}\) operator with approximate global Top-$r$ projection: 
\begin{align}
\label{eq-semi-lazy-svd-3}
\mathcal{J} = \mathsf{arg\, top}_r\left( \{ \overline{\bm{\lambda}}_j^2 \}_{j=1}^m \right),
\quad \bm{P}_t = [\bm{U}]_{:,\mathcal{J}}.
\end{align}
Furthermore, owing to the independence of the projection vectors $\{\bm{v}_j\}_{j=1}^m$, we can demonstrate that the compressor $\mathcal{C}_t(\bm{G}_t) = \bm{P}_t \bm{P}_t^\top \bm{G}_t$ maintains its contractive property when $\bm{P}_t$ is generated according to~\eqref{eq-semi-lazy-svd-2}, see Appendix~\ref{app:proof}. The implementation of the approximate global \(\mathsf{Top}\text{-}r\) projection is detailed in Algorithm~\ref{alg:vec_top}, where only \({\bm \Lambda}^{(i)} \in \mathbb{R}^m\) is transmitted per iteration, resulting in significant communication savings. 

\subsection{GreedyLore algorithm}
With the aforementioned techniques, we propose a novel \underline{\textbf{Greedy}} \underline{\textbf{Lo}}w-\underline{\textbf{R}}ank Gradi\underline{\textbf{e}}nt Compression algorithm, termed \textbf{GreedyLore}, for distributed learning (see Algorithm~\ref{alg:greedylore}). Compared to the preliminary framework in Algorithm~\ref{alg:prelimnary}, GreedyLore introduces two key improvements. First, it incorporates error feedback to correct the bias introduced by greedy communication compression (see the step highlighted in \colorbox{green!30}{green}). Second, it employs the \(\mathsf{\bf Semi}\mbox{-}\mathsf{\bf Lazy}\ \mathsf{\bf SVD}\), which adapts \(\bm{P}_t\) to the time-varying \(\bm{G}_t\), thereby ensuring that the compressor remains contractive throughout all iterations and preserving the effectiveness of error feedback (see the steps highlighted in \colorbox{purple!15}{red}). These two improvements form the foundation for establishing the algorithm’s convergence guarantees. Similar to Algorithm~\ref{alg:prelimnary}, since each node obtains the globally averaged gradient \(\bm{G}_t\) when \(\mathsf{mod}(t, \tau) = 0\) in \(\mathsf{\bf Semi}\mbox{-}\mathsf{\bf Lazy}\ \mathsf{\bf SVD}\), GreedyLore directly use it in the optimizer instead of \(\hat{\bm{G}}_t\). In addition, we reset \(\bm{E}_t^{(i)} = \bm{0}\) when \(\mathsf{mod}(t, \tau) = 0\), as no communication compression is applied at these iterations.

\subsection{Analysis of Communication Efficiency}

This subsection analyzes and compares the communication efficiency of various low-rank algorithms. The results are summarized in Table~\ref{table:comm_cmp}. {ATOMO}~\cite{wang2018atomo} performs SVD on local gradients, resulting in basis vectors that may differ across nodes. Consequently, the all-reduce operation cannot be directly applied. In contrast, {PowerSGD}~\cite{vogels2019powersgd} maintains a consistent projection matrix across nodes, and {LoRA}~\cite{hu2022lora} directly learns a low-rank structure of the parameters. These methods therefore support all-reduce operations in data-parallel settings. However, they require communicating two matrices from the low-rank decomposition at each iteration, incurring a communication cost of \(mr + nr\) per iteration.

For algorithms employing lazy SVD, such as {GaLore}~\cite{zhao2024galore} and {GreedyLore}, SVD is performed only once every \(\tau\) iterations. In the iteration where SVD is executed, the global gradient $\bm G_t$ is required, leading to communication of a full \(m \times n\) matrix. At every iteration, {GaLore} communicates only a projected low-dimensional matrix of size \(r \times n\). The total communication over \(\tau\) iterations is \(\tau nr + mn\), corresponding to an average per-iteration communication cost of \(nr + \frac{mn}{\tau}\). In the {GreedyLore} algorithm, in addition to transmitting the \(r \times n\) matrix, a vector of length \(m\) is also communicated in Algorithm~\ref{alg:vec_top}, increasing the average per-iteration communication by $m$ compared to {GaLore}. However, since the dimensions \(m\) and \(n\) are of the same order in most neural network layers. When \(r\) is chosen moderately large such that \(nr \gg m\), the additional communication overhead becomes negligible.

\section{Convergence Analysis}\label{sec:convergence}
Throughout this section, we let $\bm{G}_t^{(i)} = \nabla F_i(\bm{X}_t;\bm{\xi}_t^{(i)})+\bm{E}_{t-1}^{(i)}$, $\bm{G}_t = \frac{1}{N}\sum_{i=1}^N \bm{G}_t^{(i)}$ and $\hat{\bm G}_t = \bm P_t \bm P_t^\top \bm G_t$ if $\mathsf{mod}(t,\tau)\ne0$ otherwise $\bm{G}_t$. Assumptions~\ref{asp:LL} and~\ref{asp:stochastic} are standard in the convergence analysis of stochastic optimization.

\begin{assumption}\label{asp:LL}
    $f(\bm{X})$ is $L$-smooth and lower bounded, \textit{i.e.},
    \begin{align*}
        \|\nabla f(\bm{X})-\nabla f(\bm{Y})\|_F\le L\|\bm{X}-\bm{Y}\|_F,\ \forall\ \bm{X},\bm{Y}\in\mathbb{R}^{m\times n},
    \end{align*}
    and $\inf_{\bm{X}\in\mathbb{R}^{m\times n}}f(\bm{X})>-\infty$. 
\end{assumption}
\begin{assumption}\label{asp:stochastic}
    Stochastic gradient oracle $\nabla F_i(\bm{X};{\bm{\xi}})$ satisfies
    \begin{align*}
        &\mathbb{E}_{\bm{\xi}\sim\mathcal{D}_i}[\nabla F_i(\bm{X};{\bm{\xi}})]=\nabla f_i(\bm{X}),\\
        &\mathbb{E}_{\bm{\xi}\sim\mathcal{D}_i}[\|\nabla F_i(\bm{X};{\bm{\xi}})-\nabla f_i(\bm{X})\|_2^2]\le\sigma^2.
    \end{align*}
\end{assumption}
Assumption~\ref{asp:BG} ensures that the full gradient is uniformly bounded and will be used in the convergence analysis of the MSGD optimizer. Assumption~\ref{asp:BSG} bounds the stochastic gradients and is commonly adopted in the analysis of Adam-like algorithms. Assumption~\ref{asp:clcu} constrains the adaptive learning rates within fixed bounds by requiring the inverse square root of the second-order momentum term to lie between two constants; this condition can be easily satisfied via clipping.
\begin{assumption}\label{asp:BG}The gradient is uniformly upper bounded:
    \begin{align}
    \|\nabla f(\bm{X})\|_F\le B,\quad\forall\ \bm{X}\in\mathbb{R}^{m\times n}.
    \end{align}
\end{assumption}
\begin{assumption}\label{asp:BSG}
    Stochastic gradient oracle satisfies 
    \begin{align}
        \|\nabla F_i(\bm{X};{\bm{\xi}})\|_F\le B_s,\quad\forall\ \bm{X}\in\mathbb{R}^{m\times n},\forall\ {\bm{\xi}}.
    \end{align}
\end{assumption}
\begin{assumption}\label{asp:clcu}
    There exist constants $0<c_l<c_u$ such that $c_l\le\left[\frac{1}{\sqrt{\tilde{\bm{V}}_{t}+\epsilon}}\right]_{ij}\le c_u$ holds for any $1\le i\le m$, $1\le j\le n$.

\end{assumption}

\begin{table}[t]  
\renewcommand{\arraystretch}{1.5}
\begin{center}
\caption{ \small  Communication comparison of existing low-rank algorithms for compressing an \(m \times n\) matrix. In {GaLore} and {GreedyLore}, SVD is performed once every \(\tau\) iterations. ``All-reduce'' indicates whether the algorithm supports efficient all-reduce operations.}
\tabcolsep=2pt
\begin{tabular}{lP{1.5cm}P{2cm}P{2.2cm}}
    \toprule
    \textbf{Algorithm} &
    \textbf{All-Reduce} &
    \textbf{\renewcommand{\arraystretch}{1.0}\begin{tabular}[c]{@{}c@{}}Error-Feedback\end{tabular}} &
    \textbf{\renewcommand{\arraystretch}{1.0}\begin{tabular}[c]{@{}c@{}}Communication \\ per Iteration\end{tabular}} \\
    \midrule
    ATOMO\cite{wang2018atomo} & \xmark & \cmark & $mr + nr$ \\
    PowerSGD\cite{vogels2019powersgd} & \cmark & \cmark & $mr + nr$ \\
    LoRA\cite{hu2022lora}     & \cmark & \xmark & $mr + nr$ \\

    GaLore\cite{zhao2024galore}   & \cmark & \xmark & $nr + \frac{mn}{\tau}$ \\
    \rowcolor{purple!15} GreedyLore & \cmark & \cmark & $nr + m + \frac{mn}{\tau}$  \\
    \bottomrule
\end{tabular}
\label{table:comm_cmp}
\end{center}
\vspace{-4mm}
\end{table}

\noindent We now present the convergence results of {GreedyLore}.
\begin{center}
\setlength\fboxsep{4pt}
\colorbox{gray!20}{\parbox{\dimexpr\linewidth-2\fboxsep\relax}{
\begin{theorem}[GreedyLore Convergence with MSGD]\label{thm:cov-msgd} Under Assumptions \ref{asp:LL}, \ref{asp:stochastic} and \ref{asp:BG}, if we let $\gamma\le 1/(4L)$, GreedyLore with Momentum SGD converges as 
\begin{align}
    &\frac{1}{T+1}\sum_{t=0}^T\mathbb{E}[\|\nabla f(\bm{X}_t)\|_F^2]\le\nonumber\\
    &\quad\quad\frac{2\Delta_0}{\gamma(T+1)}+\frac{40\gamma^2L^2(B^2+\sigma^2)}{(1-\beta_1)^2\delta}+\frac{2\gamma L\sigma^2}{N},\label{eq:thm-msgd}
\end{align}
where $\delta:=r/m$, $\Delta_0:=f(\bm{X}_0)-\inf_{\bm{X}}f(\bm{X})$. If we further choose $\gamma=\left(4L+\sqrt{\frac{L\sigma^2(T+1)}{n\Delta_0}}+\sqrt[3]{\frac{L^2(B^2+\sigma^2)(T+1)}{(1-\beta_1)^2\delta\Delta_0}}\right)^{-1}$, GreedyLore with Momentum SGD converges as follows
\begin{align}
    &\frac{1}{T+1}\sum_{t=0}^T\mathbb{E}[\|\nabla f(\bm{X}_t)\|_F^2]= \\
    &\quad\mathcal{O}\left(\sqrt{\frac{L\Delta_0\sigma^2}{N(T+1)}}+\sqrt[3]{\frac{L^2\Delta_0^2(B^2+\sigma^2)}{\delta(T+1)^2}}+\frac{L\Delta_0}{T+1}\right). \nonumber
\end{align}
\end{theorem}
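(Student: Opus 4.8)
The plan is to follow the standard error-feedback analysis template (as in EF21-SGD/MSGD), but carefully accounting for the semi-lazy subspace update and the periodic reset of the error buffer. First, I would introduce the ``virtual iterate'' (or corrected iterate) $\tilde{\bm X}_t := \bm X_t - \frac{\gamma}{1-\beta_1}\cdot\frac{1}{N}\sum_{i}\bm E_{t-1}^{(i)}$, which absorbs the accumulated compression error into the parameter sequence; this is the key device that makes error feedback amenable to descent-lemma arguments. Applying $L$-smoothness of $f$ along $\{\tilde{\bm X}_t\}$, together with the MSGD momentum recursion \eqref{eq-msgd-1}--\eqref{eq-msgd-2}, produces a one-step inequality of the form $\mathbb{E}[f(\tilde{\bm X}_{t+1})] \le \mathbb{E}[f(\tilde{\bm X}_t)] - c_1\gamma\,\mathbb{E}\|\nabla f(\bm X_t)\|_F^2 + (\text{momentum drift}) + (\text{error-buffer drift}) + (\text{noise variance term})$. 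The momentum-drift term is handled the usual way by bounding $\|\bm M_t - \nabla f(\bm X_t)\|_F^2$ via the geometric averaging, contributing an $\mathcal{O}(\gamma^2 L^2)$ factor; the noise term gives the $\gamma L\sigma^2/N$ piece after using that the $N$ stochastic gradients are independent across nodes (variance reduction by $1/N$).

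The crux is controlling the error buffer $\bm E_t := \frac{1}{N}\sum_i \bm E_t^{(i)}$. Using contractiveness from Proposition~\ref{prop:contractive} (with $\delta = r/m$), each local buffer satisfies a contraction $\|\bm E_t^{(i)}\|_F^2 \le (1-\delta)\|\bm G_t^{(i)} + \bm E_{t-1}^{(i)}\|_F^2$ at non-reset steps, and $\bm E_t^{(i)} = \bm 0$ at reset steps. Unrolling this recursion (Young's inequality to split $\|\bm G_t^{(i)} + \bm E_{t-1}^{(i)}\|_F^2$, then summing the geometric series with ratio $\sqrt{1-\delta}$) yields $\sum_t \mathbb{E}\|\bm E_{t}\|_F^2 \lesssim \frac{1}{\delta^2}\sum_t \mathbb{E}\|\nabla F_i(\bm X_t;\bm\xi_t^{(i)})\|_F^2 \lesssim \frac{T}{\delta^2}(B^2+\sigma^2)$, where Assumptions~\ref{asp:stochastic} and \ref{asp:BG} bound the per-step stochastic gradient norm in expectation. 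Crucially, the periodic reset only helps here (it can only decrease the accumulated error), so the bound holds uniformly in $\tau$; this is why the final rate has no $\tau$ dependence. Substituting this into the telescoped descent inequality and dividing by $\gamma(T+1)$ gives \eqref{eq:thm-msgd}, with the $\frac{40\gamma^2 L^2(B^2+\sigma^2)}{(1-\beta_1)^2\delta}$ term coming from the combined momentum-plus-error-buffer drift (one power of $1/\delta$ from the buffer contraction, the $(1-\beta_1)^{-2}$ from momentum).

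Finally, the stated rate follows by plugging in the prescribed $\gamma$: the three terms in the choice of $\gamma^{-1}$ are tuned so that $\frac{\Delta_0}{\gamma T}$ balances against $\frac{\gamma L\sigma^2}{N}$ (giving the $\sqrt{L\Delta_0\sigma^2/(NT)}$ term) and against $\frac{\gamma^2 L^2(B^2+\sigma^2)}{\delta}$ (giving the $\sqrt[3]{L^2\Delta_0^2(B^2+\sigma^2)/(\delta T^2)}$ term), with the residual $\frac{L\Delta_0}{T}$ from the $4L$ piece; one checks $\gamma \le 1/(4L)$ holds automatically. I expect the main obstacle to be bookkeeping the error-buffer recursion \emph{across reset boundaries} — one must verify that splitting the sum $\sum_{t=0}^T$ into blocks of length $\tau$ and applying the geometric-sum bound within each block (with $\bm E = \bm 0$ at block starts) does not introduce a hidden $\tau$ factor, and that the Young's-inequality constant chosen to peel $\bm E_{t-1}^{(i)}$ off of $\bm G_t^{(i)}$ is compatible with the contraction ratio $\sqrt{1-\delta} < 1$ so the series converges with the clean $1/\delta$ (not $1/\delta^2$) dependence claimed in \eqref{eq:thm-msgd}.
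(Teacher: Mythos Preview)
Your plan has the right high-level shape (virtual iterate + descent lemma + contractive error-buffer recursion), but two concrete steps do not go through as written.

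\textbf{Per-node contraction is not available.} You invoke Proposition~\ref{prop:contractive} to claim $\|\bm{E}_t^{(i)}\|_F^2 \le (1-\delta)\|\bm{G}_t^{(i)}+\bm{E}_{t-1}^{(i)}\|_F^2$ for each node $i$. But Proposition~\ref{prop:contractive} (and Lemma~\ref{lm:contractive}, its randomized counterpart actually used in the proof) only guarantees contraction for the matrix that $\bm{P}_t$ was \emph{selected from}. In GreedyLore, $\bm{P}_t$ is chosen via the approximate global top-$r$ procedure (Algorithm~\ref{alg:vec_top}), i.e.\ based on the \emph{averaged} $\bm{G}_t=\frac{1}{N}\sum_i\bm{G}_t^{(i)}$, not on any individual $\bm{G}_t^{(i)}$. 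So there is no reason $\|(\bm I-\bm P_t\bm P_t^\top)\bm{G}_t^{(i)}\|_F^2\le(1-\delta)\|\bm{G}_t^{(i)}\|_F^2$ should hold per node. The paper sidesteps this entirely by working only with the averaged quantities $\overline{\bm{E}}_t$, $\bm{G}_t$, $\hat{\bm G}_t$: since the same $\bm P_t$ is shared across nodes and the compressor is linear, $\overline{\bm{E}}_t=(\bm I-\bm P_t\bm P_t^\top)\bm{G}_t$, and Lemma~\ref{lm:contractive} gives $\mathbb{E}\|\overline{\bm{E}}_t\|_F^2\le(1-\delta)\|\bm{G}_t\|_F^2$ directly.

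\textbf{The virtual iterate must absorb the momentum as well.} Your choice $\tilde{\bm X}_t=\bm X_t-\frac{\gamma}{1-\beta_1}\overline{\bm E}_{t-1}$ does not produce a clean update: computing $\tilde{\bm X}_{t+1}-\tilde{\bm X}_t$ still leaves both $\bm M_t$ and the compressed gradient $\hat{\bm G}_t$ entangled, so the ``momentum drift'' you plan to handle separately via $\|\bm M_t-\nabla f(\bm X_t)\|_F^2$ does not decouple. The paper instead takes
\[
\bm Y_t=\bm X_t-\frac{\beta_1\gamma}{1-\beta_1}\bm M_{t-1}-\gamma\,\overline{\bm E}_{t-1},
\]
which absorbs \emph{both} momentum and error feedback; one then checks (using $\bm M_t=\beta_1\bm M_{t-1}+(1-\beta_1)\hat{\bm G}_t$ and $\overline{\bm E}_t-\overline{\bm E}_{t-1}=\widetilde{\bm G}_t-\hat{\bm G}_t$) the exact identity $\bm Y_{t+1}-\bm Y_t=-\gamma\,\widetilde{\bm G}_t$, i.e.\ the virtual sequence runs \emph{uncompressed, momentum-free} SGD. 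The only drift term in the descent lemma is then $\|\nabla f(\bm Y_t)-\nabla f(\bm X_t)\|_F^2\le L^2\gamma^2\bigl\|\frac{\beta_1}{1-\beta_1}\bm M_{t-1}+\overline{\bm E}_{t-1}\bigr\|_F^2$, and this combined quantity is bounded uniformly in $t$ by Lemma~\ref{lm:BGC} via the contraction recursion on $\overline{\bm E}_t$ together with Assumptions~\ref{asp:stochastic}--\ref{asp:BG}. Your separate bookkeeping of momentum drift and error-buffer drift is not needed, and your worry about $\tau$-dependence across reset boundaries also disappears: Lemma~\ref{lm:BGC} gives a uniform-in-$t$ bound, and the reset $\overline{\bm E}_t=\bm 0$ only helps the recursion. (Your final flag about $1/\delta$ versus $1/\delta^2$ is well taken; Lemma~\ref{lm:BGC} as stated yields $1/\delta^2$.)
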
}}
\end{center}

\noindent
\begin{center}
\setlength\fboxsep{4pt}\colorbox{gray!20}{\parbox{\dimexpr\linewidth-2\fboxsep\relax}{\begin{theorem}[GreedyLore Convergence with Adam]\label{thm:converge-adam}
Under Assumptions \ref{asp:LL}, \ref{asp:stochastic}, \ref{asp:BSG} and \ref{asp:clcu}, if we let $\gamma\le c_l/(4Lc_u^2)$, GreedyLore with Adam converges as 
\begin{align*}
    &\frac{1}{T+1}\sum_{t=0}^T\mathbb{E}[\|\nabla f(\bm{X}_t)\|_F^2]\le\frac{2\Delta_0}{\gamma c_l(T+1)}+\frac{40\gamma^2L^2c_u^4B_s^2}{(1-\beta_1)^2c_l^2\delta}\nonumber\\
    &+\frac{4c_u(2+\gamma Lc_u)\tau B_s^2}{c_l(1-\beta_1)(T+1)}+\frac{4\gamma L\tau^2c_u^2B_s^2}{(1-\beta_1)^2c_l(T+1)}+\frac{2\gamma Lc_u^2\sigma^2}{Nc_l},
\end{align*}
where $\delta:=r/m$, $\Delta_0:=f(\bm{X}_0)-\inf_{\bm{X}}f(\bm{X})$. If we further choose $\gamma$ properly,
GreedyLore with Adam converges as
\begin{align}
    &\frac{1}{T+1}\hspace{-0.5mm}\sum_{t=0}^T\mathbb{E}[\|\nabla f(\bm{X}_t)\|_F^2]= \\
&\mathcal{O}\hspace{-0.6mm}\left(\hspace{-0.6mm}\sqrt{\frac{L\Delta_0\sigma^2}{N(T+1)}}\hspace{-0.6mm}+\hspace{-0.6mm}\sqrt[3]{\frac{L^2\Delta_0^2B_s^2}{\delta(T+1)^2}}\hspace{-0.6mm}+\hspace{-0.6mm}\frac{\tau B_s(B_s\hspace{-0.6mm}+\hspace{-0.6mm}\sqrt{\Delta_0})\hspace{-0.6mm}+\hspace{-0.6mm}L\Delta_0}{T+1}\hspace{-0.6mm}\right).\hspace{-1.5mm} \nonumber 
\end{align}
\end{theorem}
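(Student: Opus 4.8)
The plan is to combine an \emph{error-feedback reduction} with a \emph{perturbed-iterate descent} argument tailored to Adam's bounded preconditioner (Assumption~\ref{asp:clcu}). The starting point is the exact identity
\[
\hat{\bm{G}}_t \;=\; \bar{\bm{g}}_t + \bm{E}_{t-1} - \bm{E}_t,
\qquad
\bar{\bm{g}}_t := \tfrac1N\textstyle\sum_{i=1}^N \nabla F_i(\bm{X}_t;\bm{\xi}_t^{(i)}),
\quad
\bm{E}_t := \tfrac1N\textstyle\sum_{i=1}^N \bm{E}_t^{(i)},
\]
which holds at every iteration: for $\mathsf{mod}(t,\tau)\neq 0$ it follows from $\hat{\bm{G}}_t=\bm{P}_t\bm{P}_t^\top\bm{G}_t$, $\bm{E}_t=(\bm{I}-\bm{P}_t\bm{P}_t^\top)\bm{G}_t$ and $\bm{G}_t=\bar{\bm{g}}_t+\bm{E}_{t-1}$, while for $\mathsf{mod}(t,\tau)=0$ it is trivial since $\hat{\bm{G}}_t=\bm{G}_t$ and $\bm{E}_t=\bm{0}$. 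Feeding $\hat{\bm{G}}_t$ into the Adam recursion, the momentum $\bm{M}_t$ splits into the momentum driven by the unbiased sequence $\{\bar{\bm{g}}_s\}$ plus an error part; an Abel summation identifies the latter as a bounded linear combination of the buffers $\{\bm{E}_s\}_{s\le t}$. This motivates a virtual iterate $\bm{Z}_t$ obtained from $\bm{X}_t$ by subtracting the momentum-lag term $\tfrac{\gamma\beta_1}{1-\beta_1}\,\bm{A}_t\!\odot\!\bm{M}_{t-1}$ and the error-feedback term $\gamma\,\bm{A}_t\!\odot\!\bm{E}_{t-1}$, where $\bm{A}_t$ denotes the entrywise preconditioner with $c_l\le[\bm{A}_t]_{ij}\le c_u$ (Assumption~\ref{asp:clcu}), so that --- exactly for MSGD, and up to preconditioner-staleness remainders of order $\|\bm{A}_t-\bm{A}_{t-1}\|$ for Adam --- $\bm{Z}_{t+1}-\bm{Z}_t=-\gamma\,\bm{A}_t\!\odot\!\bar{\bm{g}}_t$.

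Next I would run the descent lemma on $f(\bm{Z}_t)$. Using $L$-smoothness (Assumption~\ref{asp:LL}), $\mathbb{E}f(\bm{Z}_{t+1})\le\mathbb{E}f(\bm{Z}_t)+\mathbb{E}\langle\nabla f(\bm{Z}_t),\bm{Z}_{t+1}-\bm{Z}_t\rangle+\tfrac L2\mathbb{E}\|\bm{Z}_{t+1}-\bm{Z}_t\|_F^2$, I split $\nabla f(\bm{Z}_t)=\nabla f(\bm{X}_t)+(\nabla f(\bm{Z}_t)-\nabla f(\bm{X}_t))$, bound $\|\bm{Z}_t-\bm{X}_t\|_F\lesssim\gamma c_u\big(\tfrac{\beta_1}{1-\beta_1}\|\bm{M}_{t-1}\|_F+\|\bm{E}_{t-1}\|_F\big)$ via smoothness, and extract the negative term: taking the conditional expectation (here $\bm{A}_t$ must be handled against its dependence on $\bar{\bm{g}}_t$, which is why the assumption is phrased through $\tilde{\bm{V}}_t$), $-\gamma\,\mathbb{E}\langle\nabla f(\bm{X}_t),\bm{A}_t\!\odot\!\bar{\bm{g}}_t\rangle\le-\gamma c_l\,\mathbb{E}\|\nabla f(\bm{X}_t)\|_F^2$. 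Every quadratic remainder is bounded by $c_u^2$ times its SGD-type analogue, so that $\|\hat{\bm{G}}_t\|_F\le\|\bm{G}_t\|_F$ with $\|\nabla F_i\|_F\le B_s$ (Assumption~\ref{asp:BSG}) and $\mathbb{E}\|\bar{\bm{g}}_t-\nabla f(\bm{X}_t)\|_F^2\le\sigma^2/N$ (Assumption~\ref{asp:stochastic}, using independence of the $N$ nodes) enter the bound. This yields, for $\gamma\le c_l/(4Lc_u^2)$, a one-step inequality of the form $\mathbb{E}f(\bm{Z}_{t+1})\le\mathbb{E}f(\bm{Z}_t)-\tfrac{\gamma c_l}{2}\mathbb{E}\|\nabla f(\bm{X}_t)\|_F^2+\tfrac{\gamma L c_u^2\sigma^2}{N}+\text{(terms in }\mathbb{E}\|\bm{E}_{t-1}\|_F^2\text{ and }\mathbb{E}\|\bm{M}_{t-1}\|_F^2\text{)}$.

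Third, I would close the recursion with the error and momentum estimates. Proposition~\ref{prop:contractive} with $\delta=r/m$ (and its randomized counterpart for the approximate global Top-$r$ projection), together with Young's inequality, gives the geometric recursion $\mathbb{E}\|\bm{E}_t^{(i)}\|_F^2\le(1-\tfrac\delta2)\mathbb{E}\|\bm{E}_{t-1}^{(i)}\|_F^2+\tfrac2\delta\mathbb{E}\|\nabla F_i(\bm{X}_t;\bm{\xi}_t^{(i)})\|_F^2$, reset to $\bm{0}$ whenever $\mathsf{mod}(t,\tau)=0$; within a period this also gives the crude bound $\|\bm{E}_t^{(i)}\|_F\le\tau B_s$. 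Summing the recursion and using $\|\nabla F_i\|_F\le B_s$ bounds $\sum_t\mathbb{E}\|\bm{E}_t\|_F^2$, which, fed back into the one-step inequality, produces the $\tfrac{\gamma^2 L^2 c_u^4 B_s^2}{(1-\beta_1)^2 c_l^2\delta}$ term; a parallel bound on $\|\bm{M}_t\|_F$ absorbs the momentum-lag contribution. The $\tau$-dependent terms come from the $\sim(T+1)/\tau$ reset iterations --- where the uncompressed update carries the buffer $\bm{E}_{t-1}$ of size up to $\tau B_s$ --- and from the virtual-iterate boundary difference $f(\bm{Z}_0)-f(\bm{Z}_{T+1})$, bounded using $\|\bm{E}_{t-1}\|_F\le\tau B_s$; these give the $\tfrac{c_u(2+\gamma Lc_u)\tau B_s^2}{c_l(1-\beta_1)(T+1)}$ and $\tfrac{\gamma L\tau^2 c_u^2 B_s^2}{(1-\beta_1)^2 c_l(T+1)}$ terms. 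Telescoping over $t=0,\dots,T$, dividing by $T+1$, and using $f(\bm{Z}_0)=f(\bm{X}_0)$, $f(\bm{Z}_{T+1})\ge\inf_{\bm{X}}f$ yields the first display; optimizing $\gamma$ to balance the $\tfrac1{\gamma(T+1)}$, $\tfrac{\gamma^2}{\delta}$, $\tfrac{\gamma}{N}$ and $\tfrac{\gamma\tau^2}{T+1}$ contributions then gives the stated $\mathcal{O}(\sqrt{L\Delta_0\sigma^2/(N(T+1))}+\sqrt[3]{L^2\Delta_0^2 B_s^2/(\delta(T+1)^2)}+(\tau B_s(B_s+\sqrt{\Delta_0})+L\Delta_0)/(T+1))$ rate.

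I expect the main obstacle to be carrying the error-feedback lag simultaneously through the momentum filter \emph{and} through Adam's time-varying, gradient-dependent preconditioner $\bm{A}_t$: the momentum lag and the error-buffer lag sit on opposite sides of $\bm{A}_t$ in the update and do not decouple, so the single virtual iterate $\bm{Z}_t$ must be designed so that its increment equals $-\gamma\,\bm{A}_t\!\odot\!\bar{\bm{g}}_t$ up to remainders that are $O(\gamma^2)$ or controlled by the preconditioner staleness, after which all the cross terms this generates must be bounded without losing the $1/\delta$ or $\sigma^2/N$ scalings. A secondary, but genuinely new, difficulty relative to classical error-feedback-with-momentum analyses is the periodic reset of $\bm{E}_t$: it breaks a single clean telescoping and forces the separate $\tau$-dependent accounting over the reset iterations described above.
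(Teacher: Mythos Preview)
Your overall architecture --- the identity $\hat{\bm{G}}_t=\bar{\bm{g}}_t+\bm{E}_{t-1}-\bm{E}_t$, a virtual iterate absorbing both the momentum lag and the error buffer, and the descent lemma on that sequence --- is exactly the paper's route. The paper sets $\bm{Y}_t=\bm{X}_t-\tfrac{\beta_1\gamma}{1-\beta_1}\bm{\Gamma}_{t-1}\odot\bm{M}_{t-1}-\gamma\bm{\Gamma}_{t-1}\odot\overline{\bm{E}}_{t-1}$ (note $\bm{\Gamma}_{t-1}$, not $\bm{\Gamma}_t$) and obtains the exact increment $\bm{Y}_{t+1}-\bm{Y}_t=-\gamma\bm{\Gamma}_t\odot\widetilde{\bm{G}}_t-\gamma\Delta\bm{\Gamma}_t\odot\bigl(\tfrac{\beta_1}{1-\beta_1}\bm{M}_{t-1}+\overline{\bm{E}}_{t-1}\bigr)$ with $\Delta\bm{\Gamma}_t:=\bm{\Gamma}_{t-1}-\bm{\Gamma}_t$.

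There is, however, a real gap in how you treat the preconditioner. Your inequality $-\gamma\,\mathbb{E}\langle\nabla f(\bm{X}_t),\bm{A}_t\odot\bar{\bm{g}}_t\rangle\le-\gamma c_l\,\mathbb{E}\|\nabla f(\bm{X}_t)\|_F^2$ fails as written because $\bm{A}_t=\bm{\Gamma}_t$ depends on $\bar{\bm{g}}_t$; the paper instead swaps in the $\mathcal{F}_{t-1}$-measurable $\bm{\Gamma}_{t-1}$ and pays a $\Delta\bm{\Gamma}_t$ correction. The piece you are missing is how those staleness remainders are controlled: the AMSGrad rule $\widetilde{\bm{V}}_t=\max\{\widetilde{\bm{V}}_{t-1},\|\bm{V}_t\|_{\max}\}$ makes $\bm{\Gamma}_t$ entrywise monotone, so $\sum_{t}\|\Delta\bm{\Gamma}_t\|_{\max}\le 2c_u$ and $\sum_{t}\|\Delta\bm{\Gamma}_t\|_{\max}^2\le 2c_u^2$ telescope (Lemma~\ref{lm:deltagamma}). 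Without this, the staleness terms would be $O(T)$, not $O(1)$. This is also where the $\tau$-dependent $1/(T+1)$ terms actually come from: each $\Delta\bm{\Gamma}_t$ correction is paired with the crude bound $\bigl\|\tfrac{\beta_1}{1-\beta_1}\bm{M}_{t-1}+\overline{\bm{E}}_{t-1}\bigr\|_F\le\tfrac{\tau B_s}{1-\beta_1}$ (Lemma~\ref{lm:BSGC}, which \emph{does} exploit the periodic reset) and then summed via Lemma~\ref{lm:deltagamma}. Your alternative account --- reset iterations and the boundary $f(\bm{Z}_0)-f(\bm{Z}_{T+1})$ --- does not produce these terms: the error-feedback identity holds at resets too, so nothing special happens there, and the boundary is simply $\le\Delta_0$ since $\bm{Z}_0=\bm{X}_0$ and $f\ge\inf f$.
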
}}
\end{center}
\begin{remark}
Theorems~\ref{thm:cov-msgd} and~\ref{thm:converge-adam} show that {GreedyLore}, when combined with either the MSGD or Adam optimizer, achieves a convergence rate of \(\mathcal{O}\left(\sqrt{L\Delta_0\sigma^2/(NT)}\right)\) as \(T \to \infty\), matching that of vanilla parallel SGD with full-dimensional communication. This demonstrates that the use of low-rank communication compression does not compromise the algorithm’s asymptotic convergence rate. In contrast, existing low-rank compression methods~\cite{robert2024ldadam,zhao2025separate} fail to attain this rate (see Table~\ref{table:alg_cmp}), indicating degraded convergence in theory.
\end{remark}

\begin{remark}
Since {GreedyLore} achieves a convergence rate of \(\mathcal{O}(1/\sqrt{NT})\) as \(T \to \infty\), it requires \(T = \mathcal{O}(1/(N\epsilon^2))\) iterations to reach a target accuracy \(\epsilon\), demonstrating that the required iteration count decreases linearly with the number of nodes \(N\). Therefore, {GreedyLore} achieves linear speedup in terms of iteration complexity. To the best of our knowledge, this is the first linear speedup result for distributed algorithms using low-rank communication compression (see Table~\ref{table:alg_cmp}).
\end{remark}

\begin{remark}
LDAdam~\cite{robert2024ldadam} relies critically on the strong contractive compressor assumption (Definition~\ref{compressor-assumption}), which enforces \(\mathbb{E}_{\mathcal{C}}\bigl\|\mathcal{C}(\bm{X})-\bm{X}\bigr\|_F^2 \le (1-\delta)\|\bm{X}\|_F^2\) with a fixed \(\delta>0\) for all \(X\). In contrast, by employing a dynamic update of the projection matrix \(\bm{P}_t\), GreedyLore inherently satisfies the contractive property without imposing any compressor‐specific assumptions. Details are provided in the proof of Lemma~\ref{lm:contractive}.
\end{remark}

\section{Experimental Evaluation}\label{sec:exp}
In this section, we evaluate GreedyLore when pre-training and fine-tuning of deep neural networks on tasks in computer vision and natural language processing. We also analyze memory consumption and wall‐clock time to compare GreedyLore against other communication‐efficient algorithms.

\begin{table*}[hb!]
\vspace{-2mm}
\renewcommand{\arraystretch}{1.5}
\begin{center}
\caption{\small Evaluation results of fine-tuning RoBERTa model on GLUE benchmark. The highest score in each group is bolded.}
\begin{tabular}{llccccccccc}
\toprule
Algorithm & \textbf{SST-2}   & \textbf{CoLA}   & \textbf{MRPC} & \textbf{STS-B} & \textbf{RTE}    & \textbf{QNLI}   & \textbf{QQP} & \textbf{MNLI}  & \textbf{Avg} \\ 
\midrule
AdamW                   & 0.9457 & 0.6224 & 0.9261   & 0.9109        & 0.7942 & 0.9240 & 0.9172 & 0.8724 &0.8641  \\
\midrule
1-bit Adam & 0.9106 & 0.6082 & 0.9303 & 0.9095 & 0.7365 & 0.9207 & 0.9050 & 0.8277 & 0.8436 \\
top-2\%    & 0.9427 & 0.5982 & 0.9307 & 0.9085 & 0.7870 & 0.9277 & 0.9179 & 0.8694 & 0.8603 \\
rand-5\%   & 0.9404 & 0.5806 & 0.9277 & 0.9068 & 0.7834 & 0.9220 & 0.9162 & 0.8720 & 0.8561 \\
\midrule
PowerSGD(rank=8)          & 0.9415 & 0.6115 & 0.9236   & 0.9105        & 0.7870 & \textbf{0.9268} & \textbf{0.9186} & \textbf{0.8727} & 0.8615  \\
GaLore(rank=8) & 0.9438  & 0.5956  & 0.9196 & 0.9094 & 0.7942  & 0.9198  & 0.9163  & 0.8669 & 0.8582 \\
SEPARATE(rank=8) & 0.9450 & \textbf{0.6232} & 0.9244   & \textbf{0.9121}       & 0.7653 & 0.9255 & 0.9173 & 0.8705 & 0.8604  \\
RSO(rank=8) & 0.9415  & 0.5956  & 0.9196 & 0.9094 & 0.7942  & 0.9198  & 0.9163  & 0.8669 & 0.8579 \\
\textbf{GreedyLore}(rank=8)  & \textbf{0.9450} & 0.6107 & \textbf{0.9268}   & \textbf{0.9121}        & \textbf{0.8014} & 0.9239 & 0.9175 & 0.8701 & \textbf{0.8634}  \\
\midrule
PowerSGD(rank=16)          & 0.9392 & 0.6157 & 0.9277   & 0.9102        & 0.7870 & \textbf{0.9251} & \textbf{0.9180} & \textbf{0.8721} & 0.8619  \\
GaLore(rank=16)           & 0.9404 & \textbf{0.6358} & 0.9261   & 0.9074        & 0.7798 & 0.9240 & 0.9175 & 0.8682 & 0.8624  \\
SEPARATE(rank=16) & 0.9404 & 0.6135 & 0.9201   & 0.9104        & 0.7726 & 0.9240 & 0.9172 & 0.8713 & 0.8587  \\
RSO(rank=16)  & \textbf{0.9472} & 0.6191 & 0.9171   & 0.8987        & 0.7726 & 0.9160 & 0.9035 & 0.8556 & 0.8537  \\
\textbf{GreedyLore}(rank=16)  & 0.9461 & 0.6257 & \textbf{0.9291}   & \textbf{0.9116} & \textbf{0.7906} & 0.9218 & 0.9171 & 0.8619 & \textbf{0.8640}  \\ 
\bottomrule
\end{tabular}
\label{table:glue}
\end{center}
\vspace{-3mm}
\end{table*}

\subsection{Pre‐training with GreedyLore}
We assess GreedyLore in the context of pre‐training deep models. We compare it to the following baselines: AdamW~\cite{kingma2014adam, loshchilov2017decoupled}, 1‐bit Adam~\cite{tang20211}, PowerSGD~\cite{vogels2019powersgd}, and GaLore~\cite{zhao2024galore}, following the experimental protocol in~\cite{zhao2025separate}. Consistent with GaLore, we apply gradient compression exclusively to the two‐dimensional parameters within transformer layers, while retaining the full gradients for embedding and output layers. After a prescribed number of warm‐up iterations, we start compression using the same schedule as PowerSGD~\cite{vogels2019powersgd}. For convolutional neural networks, we reshape four‐dimensional tensors into two‐dimensional before applying low‐rank compression, as per the methodology in PowerSGD.

\noindent \textbf{Pre‐training ResNet on CIFAR.}
We evaluate GreedyLore on a ResNet‐18~\cite{he2016deep} model pre‐trained on the CIFAR‐10 and CIFAR‐100 datasets~\cite{krizhevsky2009learning}. ResNet‐18 contains approximately 11 million parameters. CIFAR‐10 contains 60,000 color images of size $32\times32$ pixels across 10 classes (50,000 for training, 10,000 for testing), while CIFAR‐100 follows the same format with 100 classes. We train for 40 epochs using a global batch size of $4\times32$, and report test accuracies in Figure~\ref{fig:cifar}. Detailed hyperparameter settings are provided in Appendix~\ref{sec:hyper-param}.
As shown in Figure~\ref{fig:cifar}, GreedyLore outperforms low‐rank compression schemes (GaLore and PowerSGD) as well as quantization methods such as 1‐bit Adam. Moreover, GreedyLore’s accuracies closely match those of AdamW~\cite{loshchilov2017decoupled}, demonstrating its efficacy for vision tasks in deep learning.

\begin{figure}[h]
\vspace{-2mm}
\centering
\includegraphics[width=2.5in]{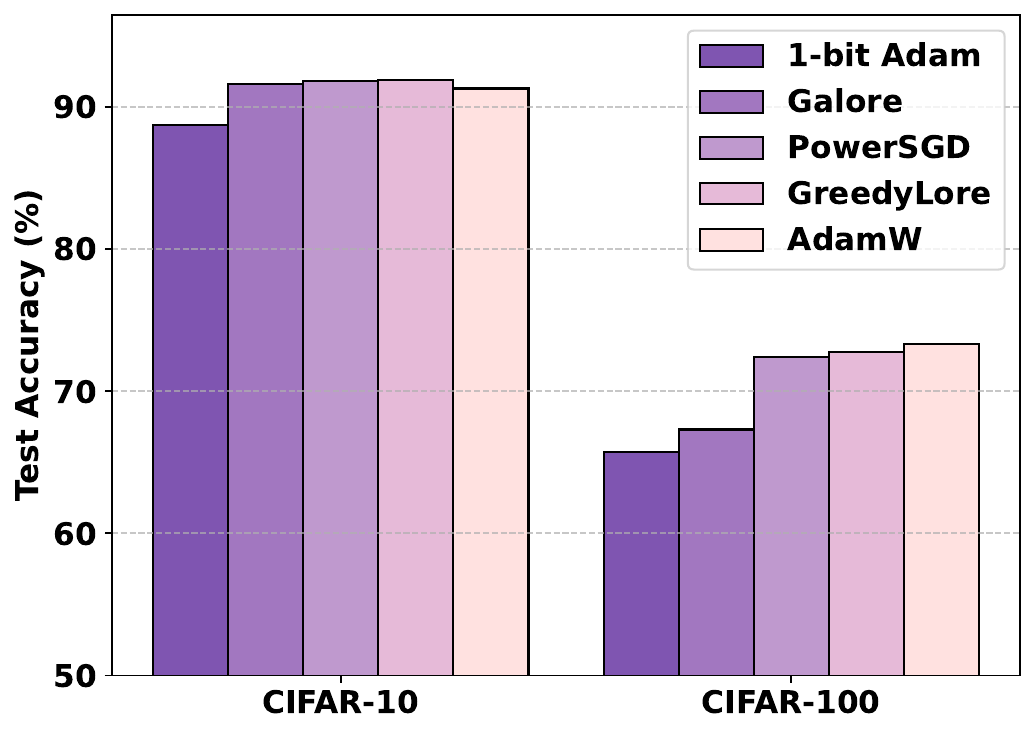}
\caption{Testing accuracy of pre-training ResNet-18 model on CIAFR-10 and CIFAR-100 dataset after training for 40 epochs.}
\label{fig:cifar}
\vspace{-3mm}
\end{figure}

\noindent \textbf{Pre‐training LLaMA on C4.}
We pre‐train autoregressive LLaMA transformer models of three scales (60 M, 130 M, and 350 M parameters) on the Colossal Clean Crawled Corpus (C4)~\cite{raffel2020exploring} with a data‐parallel degree of 4. LLaMA is an autoregressive transformer family introduced by Touvron et al.~\cite{touvron2023llama} that achieves state-of-the-art performance on diverse language tasks. C4 is a large-scale, cleaned web-crawl dataset optimized for robust language model training. Table~\ref{table:c4} presents the minimum validation perplexities attained by each optimizer. At a low compression rank (e.g., $r=8$), GreedyLore and PowerSGD match the performance of AdamW, while GaLore lags behind. Results for PowerSGD at higher ranks are omitted due to prohibitive runtimes. In contrast, GreedyLore markedly outperforms GaLore under these conditions.

Additionally, we extend the largest configuration to LLaMA models with 1B parameters. Due to resource constraints, we report the training loss over the first 100,000 optimization steps. Figure~\ref{fig:1b} shows that, at a learning rate of $5\times10^{-4}$, GreedyLore converges comparably to AdamW and consistently outperforms GaLore throughout training.

\begin{figure}[ht!]
\centering
\includegraphics[width=2.5in]{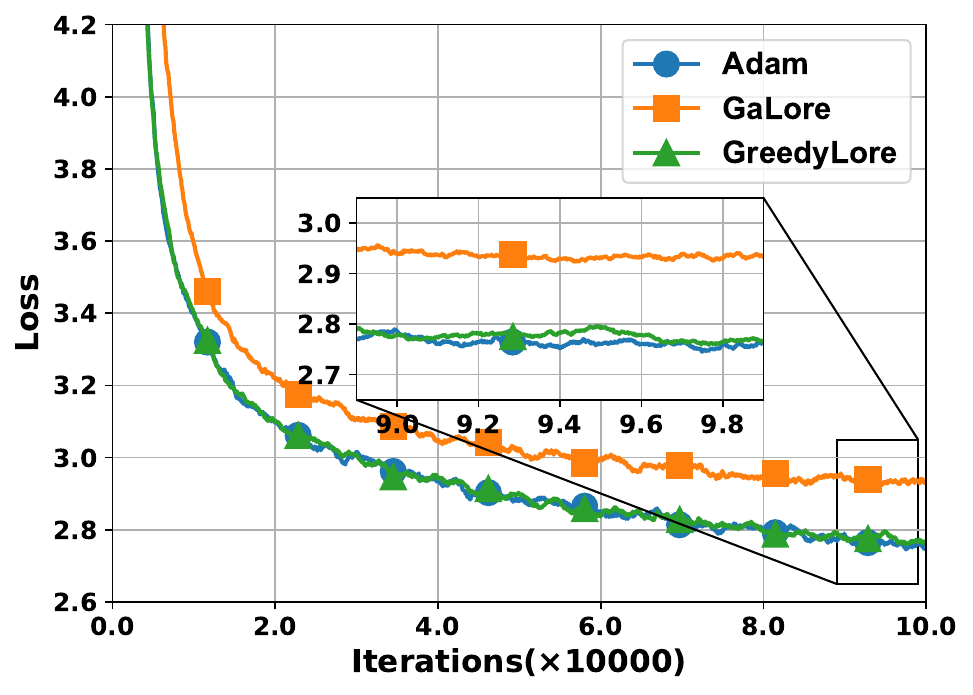}
\caption{Training loss of pre-training LLaMA-1B model on C4 dataset.}
\label{fig:1b}
\vspace{-4mm}
\end{figure}

\subsection{Fine‐tuning with GreedyLore}\label{sec:finetune}
We evaluate GreedyLore for fine‐tuning medium‐sized language models. We expand our baselines to include sparsification compressors—specifically Top-$k$ (with $k=2\%$) and Random-$k$ (with $k=5\%$) algorithms—to highlight the superior performance of low‐rank compression methods.

We fine‐tune the pretrained RoBERTa‐base~\cite{liu2019roberta} model on the General Language Understanding Evaluation (GLUE) benchmark~\cite{wang2018glue} using a cluster of four NVIDIA RTX 4090 GPUs with memory of 24\,GB each. RoBERTa‐base is a 110M‐parameter transformer that extends BERT via dynamic masking. GLUE comprises diverse natural language understanding tasks designed to assess general‐purpose capabilities. We fine‐tune for 10 epochs per task and report test metrics in Table~\ref{table:glue} following the convention in \cite{zhao2024galore}.

As shown in Table~\ref{table:glue}, GreedyLore preserves fine‐tuning performance under aggressive low‐rank compression, outperforming GaLore and 1-bit Adam while closely matching the full‐precision AdamW baseline~\cite{loshchilov2017decoupled}. As the compression rank increases, GreedyLore  recovers accuracy of AdamW and surpasses other compression algorithms. These results confirm that GreedyLore delivers communication‐efficient optimization without sacrificing downstream performance.

\begin{table}[H]  
\renewcommand{\arraystretch}{1.5}
\begin{center}
\caption{ \small Validation perplexity of pre-training LLaMA models on C4 dataset. Training settings are shown in the last two rows.}
{\tabcolsep=2pt
\begin{tabular}{lcccccc}
\toprule
Algorithm & \multicolumn{2}{c}{\textbf{LLaMA-60M}} & \multicolumn{2}{c}{\textbf{LLaMA-130M}} &\multicolumn{2}{c}{\textbf{LLaMA-350M}}\\
\midrule
AdamW      &  \multicolumn{2}{c}{33.44}  &  \multicolumn{2}{c}{24.73} & \multicolumn{2}{c}{18.66} \\
\midrule
SEAPTATE   & 45.31  & 35.23 & 26.31  & 25.88 & 21.35   & 20.12 \\
RSO   & 44.54  & 35.43 & 36.15  & 25.85 & 27.39       &   19.66  \\
\midrule
PowerSGD   & 34.93  & -       & 26.31  & -       & 21.35   & -        \\
GaLore     & 53.93  & 34.88   & 41.21      & 25.36   & 33.30       & 18.95    \\
GreedyLore & \textbf{34.75}  & \textbf{31.67}   & \textbf{25.65}  & \textbf{24.26}   & \textbf{19.46}   & \textbf{18.62}    \\
\midrule
$r/d_\mathrm{model}$       & 32/256 & 128/256 & 32/512 & 256/768 & 32/1024 & 256/1024 \\
Training Tokens       & 1.1B & 1.1B & 2.2B & 2.2B & 6.4B & 6.4B \\
\bottomrule
\end{tabular}
}
\vspace{-4mm}
\label{table:c4}
\end{center}
\end{table}

\subsection{Wall‐Clock Performance Analysis}\label{sec:wallclock}

Table~\ref{table:c4_time} reports the average per‐iteration wall‐clock time (in seconds) for pre‐training various LLaMA model sizes on the C4 dataset. For each model, we instrumented 500 consecutive iterations and computed the mean elapsed time. All experiments were conducted on a machine equipped with four NVIDIA RTX 4090 GPUs using the NCCL communication backend with shared‐memory (SHM) transfers. We fixed the compression rank to $r=32$ and the per‐GPU batch size to 128 (except for the 1B-parameter model, where we reduced the batch size to 64 due to memory constraints).

When the model is small (e.g., LLaMA-60M), any time savings from communication compression are largely negated by the additional synchronization and computational overhead. As model size increases, communication overhead becomes the dominant bottleneck. As a result, gradient compression yields progressively larger time savings. In particular, for the LLaMA-1B model, GreedyLore reduces per‐iteration wall‐clock time by approximately 27\% relative to standard AdamW, consistent with the results shown in ~\cite{zhang2023evaluation}.

Furthermore, for compression ranks beyond a moderate threshold (e.g., $r\ge32$), PowerSGD fails to deliver a wall‐clock speedup in our setup due to the costly orthogonalization of an $r\times d_{\mathrm{model}}$ matrix at each iteration. In contrast, GreedyLore circumvents this overhead and consistently provides communication time savings even at elevated ranks.

\begin{table}[h]  
\renewcommand{\arraystretch}{1.5}
\begin{center}

\caption{ \small Average per‐iteration training time (in seconds) of greedy algorithms during pre‐training on the C4 dataset.}
{\tabcolsep=2pt
\begin{tabular}{lcccccc}
\toprule
Algorithm             & \textbf{LLaMA-60M}    & \textbf{LLaMA-130M}   & \textbf{LLaMA-350M}   & \textbf{LLaMA-1B}      \\
\midrule
AdamW           & 0.7396 & 1.1052 & 2.1331 & 3.7494 \\
PowerSGD        & 0.8807 & 1.3190 & 2.6841 & 4.5793 \\
GreedyLore      & \textbf{0.7279} & \textbf{1.0478} & \textbf{2.0415} & \textbf{2.7403} \\
\midrule
$r/d_\mathrm{model}$   & 32/256 & 32/512 & 32/768 & 32/1024 \\
\bottomrule
\end{tabular}
}
\label{table:c4_time}
\vspace{-5mm}
\end{center}
\end{table}

\subsection{Memory Analysis}

Figure~\ref{fig:memory} reports the peak GPU memory of GreedyLore during pre-training of LLaMA models. For models with up to 350\,M parameters, we use a per-GPU batch size of 128. For the 1B-parameter model, we reduce the batch size to 64 due to memory constraints. We do not employ gradient accumulation here, even though it enables larger effective batch sizes.

The memory overhead of GreedyLore comprises two components. First, the error-feedback mechanism requires additional storage proportional to the total number of model parameters. However, activation tensors dominate memory usage during pre-training, rendering this overhead negligible. Second, storing the projection matrix $\mathbf{U}\in\mathbb{R}^{\min(m,n)\times \min(m,n)}$ incurs storage of size $\min(m,n)^2$ by selecting either a left- or right-sided projection (see Appendix~\ref{sec:variant}). Consequently, the overall memory overhead of GreedyLore remains negligible, as confirmed by our profiling results.

\begin{figure}[h]
\vspace{-2mm}
\centering
\includegraphics[width=2.5in]{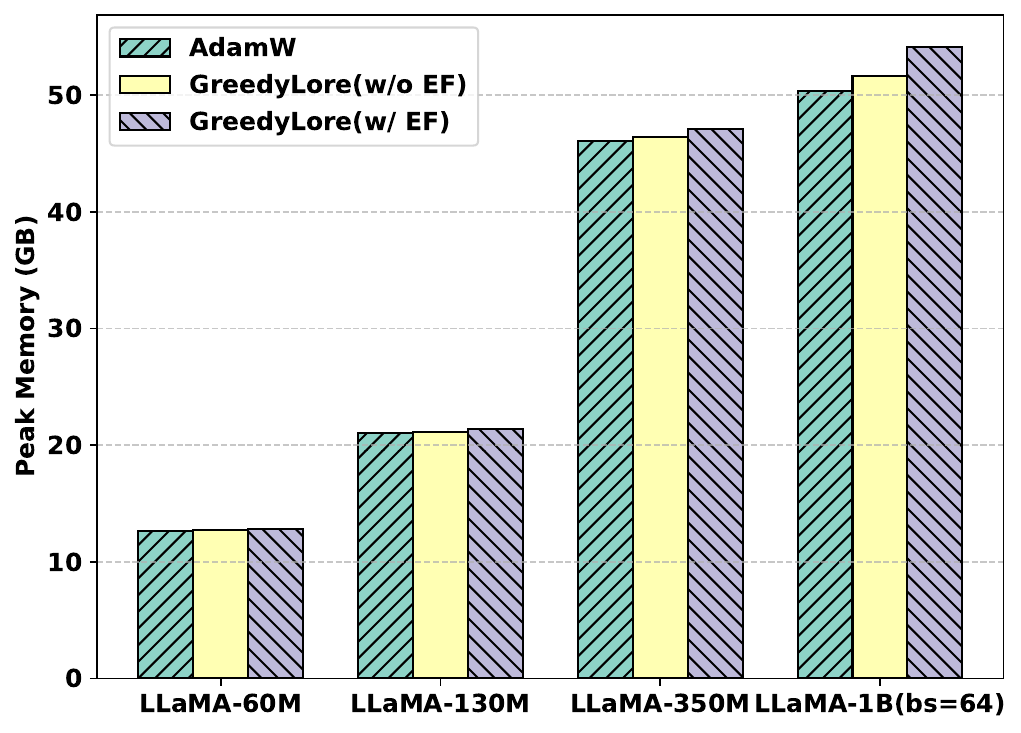}
\caption{Peak memory of pre-training of LLaMA models.}
\label{fig:memory}
\vspace{-4mm}
\end{figure}

\section{Conclusion}
In this work, we presents GreedyLore, a communication-efficient, greedy low-rank gradient compression algorithm designed to address fundamental limitations in existing distributed optimization methods. By combining a novel greedy projector selection strategy with an integrated error-feedback mechanism, GreedyLore achieves strong theoretical convergence guarantees without imposing restrictive assumptions. Our rigorous analysis confirms a convergence rate consistent with standard adaptive optimization methods. Empirical evaluations across multiple benchmark tasks, including ResNet pre-training, LLaMA model pre-training and RoBERTa fine-tuning, validate GreedyLore's superior performance relative to existing compression methods. Moreover, GreedyLore maintains negligible additional memory overhead and integrates with established distributed training frameworks, highlighting its practical utility.

\appendices
\setcounter{equation}{0}
\renewcommand{\theequation}{A\arabic{equation}}

\bibliographystyle{ieeetr}
{\footnotesize
\bibliography{reference}
}

\newpage

\onecolumn
\begin{multicols}{2}
\section{Proofs for convergence theorems}\label{app:proof}
\noindent\textbf{Notations. }We have the following notations.
\begin{itemize}
\item We let $\widetilde{\bm{G}}_t:=\frac{1}{N}\sum_{i=1}^N\nabla F(\bm{X}_t;\bm{\xi}_t^{(i)})$.
\item We let $\overline{\bm{E}}_t:=\frac{1}{N}\sum_{i=1}^N\bm{E}_t^{(i)}$.
  
\item  When applying Adam optimizer, we have the following update rules with AMSGrad-type normalization:
\begin{align}
    \bm{M}_t=&\beta_1\bm{M}_{t-1}+(1-\beta_1)\hat{\bm{G}}_t,\nonumber\\
    \bm{V}_t=&\beta_2\bm{V}_{t-1}+(1-\beta_2)(\hat{\bm{G}}_t)^2,\nonumber\\
    \widetilde{\bm{V}}_t=&\max\{\widetilde{\bm{V}}_{t-1},\|\bm{V}_t\|_{\max}\},\nonumber\\
    \bm{X}_{t+1}=&\bm{X}_t-\gamma\bm{M}_t/(\widetilde{\bm{V}}_t+\epsilon)^{1/2},\nonumber
\end{align}
where $\bm{M}_{-1}=\bm{V}_{-1}=\widetilde{\bm{V}}_{-1}=\bm{0}$. We denote $\bm{\Gamma}_t:=1/\sqrt{\widetilde{\bm{V}}_t+\epsilon}$ and $\Delta\bm{\Gamma}_t:=\bm{\Gamma}_{t-1}-\bm{\Gamma}_t$ for convenience.
\end{itemize} 

To prove the convergence theorems, we have the following lemmas, with proofs provided in the supplemental material.

\begin{lemma}\label{lm:randtopk}
    Assume ${\bm{\xi}}_i\sim\mathcal{N}(0,\sigma_i^2),i=1,2,\cdots,m$ are i.i.d. random variables, it holds that 
    \begin{align}
        &\mathbb{P}[|{\bm{\xi}}_i|\in \mathrm{Top}_k(|{\bm{\xi}}_1|,|{\bm{\xi}}_2|,\cdots,|{\bm{\xi}}_m|)]\nonumber\\
        \ge&\mathbb{P}[|{\bm{\xi}}_{j}|\in\mathrm{Top}_k(|{\bm{\xi}}_1|,|{\bm{\xi}}_2|,\cdots,|{\bm{\xi}}_m|)],\quad\ \text{if}\ \sigma_i\ge\sigma_j.
    \end{align}
\end{lemma}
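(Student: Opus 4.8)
The plan is to reduce the statement to a one-dimensional comparison about the order statistics of independent Gaussians with different scales. First I would normalize: write $|\bm{\xi}_i| = \sigma_i |\bm{Z}_i|$ where $\bm{Z}_i \sim \mathcal{N}(0,1)$ are i.i.d. The event that $|\bm{\xi}_i|$ lies in the top-$k$ of $(|\bm{\xi}_1|,\dots,|\bm{\xi}_m|)$ is the event that at most $k-1$ of the other variables exceed it, i.e.\ $\sum_{\ell \neq i} \mathbbm{1}\{|\bm{\xi}_\ell| > |\bm{\xi}_i|\} \le k-1$. Conditioning on $|\bm{\xi}_i| = t$, the conditional probability of this event is a function of the tail probabilities $\mathbb{P}[|\bm{\xi}_\ell| > t] = 2\bar\Phi(t/\sigma_\ell)$ for $\ell \neq i$, where $\bar\Phi$ is the standard normal survival function; and since $\bar\Phi$ is decreasing, a larger $\sigma_\ell$ makes $|\bm{\xi}_\ell|$ stochastically larger, hence more likely to exceed any threshold.

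The key steps, in order, would be: (1) Establish the stochastic-dominance fact that if $\sigma_i \ge \sigma_j$ then $|\bm{\xi}_i|$ stochastically dominates $|\bm{\xi}_j|$ (immediate from $\mathbb{P}[|\bm{\xi}_i|>t] = 2\bar\Phi(t/\sigma_i) \ge 2\bar\Phi(t/\sigma_j) = \mathbb{P}[|\bm{\xi}_j|>t]$). (2) Use a coupling: construct $|\bm{\xi}_i|$ and $|\bm{\xi}_j|$ on a common space with $|\bm{\xi}_i| \ge |\bm{\xi}_j|$ almost surely, e.g.\ via the quantile transform $|\bm{\xi}_i| = \sigma_i |\Phi^{-1}((1+U)/2)|$ and similarly for $j$ with the same uniform $U$. (3) Observe that for the remaining $m-2$ variables $\{|\bm{\xi}_\ell| : \ell \neq i,j\}$, which are independent of this coupling, the count $\sum_{\ell \neq i,j} \mathbbm{1}\{|\bm{\xi}_\ell| > s\}$ is nonincreasing in $s$; so on the coupled space, whenever $|\bm{\xi}_j|$ is in the top-$k$, so is $|\bm{\xi}_i|$. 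One must handle the comparison between $|\bm{\xi}_i|$ and $|\bm{\xi}_j|$ themselves carefully, but since $|\bm{\xi}_i| \ge |\bm{\xi}_j|$ under the coupling, $|\bm{\xi}_i|$ ranks at least as high as $|\bm{\xi}_j|$ does among all $m$ variables. (4) Take expectations to conclude $\mathbb{P}[|\bm{\xi}_j| \in \mathrm{Top}_k] \le \mathbb{P}[|\bm{\xi}_i| \in \mathrm{Top}_k]$.

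A cleaner alternative, which I would likely present, avoids explicit coupling: condition on the multiset of \emph{values} $\{|\bm{\xi}_\ell| : \ell \neq i, j\}$ and on the unordered pair $\{|\bm{\xi}_i|, |\bm{\xi}_j|\} = \{a, b\}$ with $a \ge b$. Given this conditioning, it remains to decide which of $i, j$ got the larger value $a$. The rank of $|\bm{\xi}_i|$ among all $m$ values is determined once we know whether $|\bm{\xi}_i| = a$ or $|\bm{\xi}_i| = b$, and being in $\mathrm{Top}_k$ is a monotone event in this assignment (getting $a$ is at least as good as getting $b$). So it suffices to show $\mathbb{P}[|\bm{\xi}_i| = a \mid \{|\bm{\xi}_i|,|\bm{\xi}_j|\} = \{a,b\}] \ge \tfrac12 \ge \mathbb{P}[|\bm{\xi}_j| = a \mid \cdots]$, which follows from the likelihood ratio of the density of $(|\bm{\xi}_i|, |\bm{\xi}_j|)$ at $(a,b)$ versus $(b,a)$ being $\ge 1$ when $\sigma_i \ge \sigma_j$ and $a \ge b$ — a short computation with Gaussian densities.

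The main obstacle I anticipate is the bookkeeping around ties and around the joint conditioning: making precise the claim that "being in the top-$k$" is a monotone function of the $i$-vs-$j$ value assignment requires being careful that the other $m-2$ values are held fixed and that ties occur with probability zero (true here, since the $\bm{\xi}_\ell$ have densities). Once that measure-theoretic scaffolding is in place, the inequality reduces to the elementary Gaussian density-ratio computation, which is routine. I would lead with the density-ratio argument as it is the most self-contained.
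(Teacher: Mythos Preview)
Your second approach---conditioning on $\{|\bm{\xi}_\ell|:\ell\neq i,j\}$ and on the unordered pair $\{|\bm{\xi}_i|,|\bm{\xi}_j|\}=\{a,b\}$, then comparing the Gaussian likelihood ratio at $(a,b)$ versus $(b,a)$---is correct and is genuinely different from the paper's route. The paper also conditions on $\bm{\xi}_3,\dots,\bm{\xi}_m$, but then it computes $\mathbb{P}_A[\Lambda_1\setminus\Lambda_2]$ and $\mathbb{P}_A[\Lambda_2\setminus\Lambda_1]$ explicitly by splitting into the three cases $k=m-1$, $2\le k\le m-2$, and $k=1$, writing each as a sum of a product of tail probabilities plus a double integral over a region in the $(|\bm{\xi}_1|,|\bm{\xi}_2|)$-plane, and then comparing the two integrands pointwise. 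Your argument replaces all of this case analysis by a single monotone-likelihood-ratio computation: since $\exp\bigl(\tfrac{1}{2}(a^2-b^2)(\sigma_j^{-2}-\sigma_i^{-2})\bigr)\ge 1$ when $a\ge b$ and $\sigma_i\ge\sigma_j$, the larger value is more likely to land on index $i$, and monotonicity of the top-$k$ indicator does the rest. This is shorter, avoids the casework, and in fact only uses that the family of half-normals has monotone likelihood ratio in the scale parameter---so it would extend to non-Gaussian scale families, which the paper's integral comparisons would not do without rework.

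One caution on your first sketch: the quantile-transform coupling that forces $|\bm{\xi}_i|\ge|\bm{\xi}_j|$ almost surely does \emph{not} prove the lemma as stated. The events $\Lambda_i$ and $\Lambda_j$ each depend on \emph{both} $|\bm{\xi}_i|$ and $|\bm{\xi}_j|$ (the other variable appears as a competitor in the top-$k$ ranking), so they are not marginal events. A coupling that preserves only the marginals of $|\bm{\xi}_i|$ and $|\bm{\xi}_j|$ while changing their joint law will in general change both $\mathbb{P}[\Lambda_i]$ and $\mathbb{P}[\Lambda_j]$; the inclusion $\Lambda_j\subseteq\Lambda_i$ you obtain holds only under the coupled (comonotone) measure, and the resulting inequality is not the one in the lemma. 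Since you say you would present the density-ratio argument, this is not fatal, but you should drop the coupling paragraph or replace it by the exchangeability-based ``swap'' argument that underlies your second approach.
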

\begin{lemma}[Contractive property]\label{lm:contractive}
It holds for $\hat{\bm{G}}_t$ that 
\begin{align}
    \mathbb{E}[\|\hat{\bm{G}}_t-{\bm{G}}_t\|_F^2]\le(1-\delta)\|{\bm{G}}_t\|_F^2,
\end{align}
where $\delta=r/m$.
\end{lemma}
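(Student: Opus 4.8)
\textbf{Proof proposal for Lemma~\ref{lm:contractive}.}
The plan is to handle the two regimes of the algorithm separately, since $\hat{\bm{G}}_t$ is defined differently depending on whether $\mathsf{mod}(t,\tau)=0$ or not. When $\mathsf{mod}(t,\tau)=0$, we have $\hat{\bm{G}}_t = {\bm{G}}_t$ exactly, so $\|\hat{\bm{G}}_t-{\bm{G}}_t\|_F^2 = 0 \le (1-\delta)\|{\bm{G}}_t\|_F^2$ trivially for any $\delta \in (0,1]$. The substantive case is $\mathsf{mod}(t,\tau)\ne 0$, where $\hat{\bm{G}}_t = \bm{P}_t\bm{P}_t^\top {\bm{G}}_t$ and $\bm{P}_t = \bm{U}_{:,\mathcal{J}}$ with $\mathcal{J}$ selected by the approximate global Top-$r$ procedure \eqref{eq-semi-lazy-svd-3}, i.e. $\mathcal{J} = \mathsf{arg\,top}_r(\{\overline{\bm{\lambda}}_j^2\}_{j=1}^m)$.

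First I would reuse the exact algebraic identity from the proof of Proposition~\ref{prop:contractive}: since $\bm{U}$ is orthogonal and $\bm{P}_t$ selects columns $\mathcal{J}$ of $\bm{U}$, the decomposition \eqref{eq:sub_problem_derivation} gives
\begin{align}
\|{\bm{G}}_t - \bm{P}_t\bm{P}_t^\top {\bm{G}}_t\|_F^2 = \|{\bm{G}}_t\|_F^2 - \sum_{j\in\mathcal{J}}\|\bm{u}_j^\top {\bm{G}}_t\|_F^2 = \sum_{j\notin\mathcal{J}}\|\bm{u}_j^\top {\bm{G}}_t\|_F^2.
\end{align}
So it suffices to show $\mathbb{E}\big[\sum_{j\in\mathcal{J}}\|\bm{u}_j^\top {\bm{G}}_t\|_F^2\big] \ge \frac{r}{m}\|{\bm{G}}_t\|_F^2$, where the expectation is over the random projection vectors $\{\bm{v}_j\}_{j=1}^m$ that determine $\mathcal{J}$. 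The key structural fact, from Proposition~\ref{prop:unbiased-lambda}, is that $\mathbb{E}[\overline{\bm{\lambda}}_j^2] = \|\bm{u}_j^\top {\bm{G}}_t\|_F^2 =: \sigma_j^2$; moreover, conditioned on ${\bm{G}}_t$, each $\overline{\bm{\lambda}}_j = \bm{u}_j^\top {\bm{G}}_t \big(\tfrac1N\sum_i \bm{v}_j^{(i)}\big)$ — wait, more precisely $\overline{\bm{\lambda}}_j = \bm{u}_j^\top {\bm{G}}_t \bm{v}_j$ with a shared $\bm{v}_j \sim \mathcal{N}(\bm{0},\bm{I}_n)$ — so $\overline{\bm{\lambda}}_j$ is a centered Gaussian with variance $\sigma_j^2$, and the $\{\overline{\bm{\lambda}}_j\}_{j=1}^m$ are mutually independent because the $\{\bm{v}_j\}$ are.

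The heart of the argument is then a symmetrization/exchangeability bound: I would invoke Lemma~\ref{lm:randtopk}, which says that for independent centered Gaussians $\overline{\bm{\lambda}}_j \sim \mathcal{N}(0,\sigma_j^2)$, a coordinate with larger variance is more likely to land in the Top-$r$ set. Writing $p_j := \mathbb{P}[j\in\mathcal{J}]$, Lemma~\ref{lm:randtopk} gives $p_j \ge p_k$ whenever $\sigma_j \ge \sigma_k$, i.e. the selection probabilities are co-monotone with the true magnitudes $\sigma_j^2$. Combined with the constraint $\sum_{j=1}^m p_j = r$ (exactly $r$ indices are always chosen), a Chebyshev/rearrangement-type sum inequality yields $\sum_{j=1}^m p_j \sigma_j^2 \ge \frac{r}{m}\sum_{j=1}^m \sigma_j^2$: when a probability vector summing to $r$ is co-monotone with $(\sigma_j^2)$, its weighted sum is at least the uniform-weight ($p_j \equiv r/m$) value. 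Since $\mathbb{E}[\sum_{j\in\mathcal{J}}\sigma_j^2] = \sum_j p_j \sigma_j^2$ and $\sum_j \sigma_j^2 = \|{\bm{G}}_t\|_F^2$, this closes the bound. I expect the main obstacle to be making the rearrangement step fully rigorous — specifically, justifying that $\sum_j p_j \sigma_j^2 \ge \frac{r}{m}\sum_j \sigma_j^2$ from co-monotonicity plus $\sum_j p_j = r$ (this is a discrete Chebyshev sum inequality, but one must be careful about ties in the $\sigma_j$ and about the fact that $\mathcal{J}$ is a set-valued random variable, not $r$ independent selections). A secondary subtlety is confirming that $\mathbb{E}[\|\hat{\bm{G}}_t - {\bm{G}}_t\|_F^2 \mid {\bm{G}}_t]$ is the right conditioning: the contractiveness must hold pointwise in ${\bm{G}}_t$ with expectation only over the compressor randomness $\{\bm{v}_j\}$, which matches Definition~\ref{compressor-assumption}, and I would state it in that conditional form before taking outer expectations in the main convergence proofs.
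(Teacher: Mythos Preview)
Your proposal is correct and follows essentially the same route as the paper: the same case split on $\mathsf{mod}(t,\tau)$, the same orthogonal decomposition $\|{\bm{G}}_t - \bm{P}_t\bm{P}_t^\top{\bm{G}}_t\|_F^2 = \|{\bm{G}}_t\|_F^2 - \sum_{j\in\mathcal{J}}\|\bm{u}_j^\top{\bm{G}}_t\|_F^2$, the same invocation of Lemma~\ref{lm:randtopk} to obtain co-monotonicity of the selection probabilities $p_j$ with the variances $\sigma_j^2$, and the same Chebyshev sum inequality step $\sum_j p_j\sigma_j^2 \ge \tfrac{1}{m}\big(\sum_j p_j\big)\big(\sum_j \sigma_j^2\big) = \tfrac{r}{m}\|{\bm{G}}_t\|_F^2$. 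The subtleties you flag (ties in $\sigma_j$, conditioning on ${\bm{G}}_t$) are harmless and the paper does not even comment on them; your concern about the rearrangement step is exactly what the paper dispatches in one line by citing Chebyshev's inequality.
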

\begin{lemma}[Constant upper bound under Assumption \ref{asp:BG}]\label{lm:BGC}
    Under Assumptions \ref{asp:stochastic} and \ref{asp:BG}, it holds that
    \begin{align*}
        \mathbb{E}\left[\left\|\frac{\beta_1}{1-\beta_1}\bm{M}_t+\overline{\bm{E}}_{t}\right\|_F^2\right]\le\left(\frac{12\beta_1^2}{(1-\beta_1)^2}+8\right)\frac{\sigma^2+B^2}{\delta^2},
    \end{align*}
    where $\delta:=r/m$.
\end{lemma}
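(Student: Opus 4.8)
\textbf{Proof plan for Lemma~\ref{lm:BGC}.}
The plan is to bound $\mathbb{E}[\|\frac{\beta_1}{1-\beta_1}\bm{M}_t+\overline{\bm{E}}_t\|_F^2]$ by first controlling the two pieces separately via the triangle inequality $\|\bm{A}+\bm{B}\|_F^2\le 2\|\bm{A}\|_F^2+2\|\bm{B}\|_F^2$, and then plugging in uniform bounds on $\mathbb{E}\|\bm{M}_t\|_F^2$ and $\mathbb{E}\|\overline{\bm{E}}_t\|_F^2$. First I would handle the momentum term. Since $\bm{M}_t=\beta_1\bm{M}_{t-1}+(1-\beta_1)\hat{\bm{G}}_t$ with $\bm{M}_{-1}=\bm{0}$, we can unroll to get $\bm{M}_t=(1-\beta_1)\sum_{s=0}^t\beta_1^{t-s}\hat{\bm{G}}_s$, so by Jensen's inequality (the weights $(1-\beta_1)\beta_1^{t-s}$ sum to at most $1$), $\|\bm{M}_t\|_F^2\le(1-\beta_1)\sum_{s=0}^t\beta_1^{t-s}\|\hat{\bm{G}}_s\|_F^2$. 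Thus it suffices to bound $\mathbb{E}\|\hat{\bm{G}}_s\|_F^2$ uniformly. Here I would use that $\hat{\bm{G}}_s$ is either $\bm{G}_s$ or its compression $\bm{P}_s\bm{P}_s^\top\bm{G}_s$, and in both cases $\|\hat{\bm{G}}_s\|_F\le\|\bm{G}_s\|_F$ (projection is non-expansive). Then $\bm{G}_s=\widetilde{\bm{G}}_s+\overline{\bm{E}}_{s-1}$ where $\widetilde{\bm{G}}_s$ is the averaged stochastic gradient, so $\mathbb{E}\|\bm{G}_s\|_F^2\le 2\mathbb{E}\|\widetilde{\bm{G}}_s\|_F^2+2\mathbb{E}\|\overline{\bm{E}}_{s-1}\|_F^2$, and by Assumptions~\ref{asp:stochastic}, \ref{asp:BG} together with $\mathbb{E}\|\widetilde{\bm{G}}_s\|_F^2\le\|\nabla f(\bm{X}_s)\|_F^2+\sigma^2/N\le B^2+\sigma^2$.

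The crux is therefore a uniform bound on $\mathbb{E}\|\overline{\bm{E}}_t\|_F^2$, the averaged error-feedback buffer. This is the main obstacle, since $\overline{\bm{E}}_t$ evolves by $\overline{\bm{E}}_t=\frac{1}{N}\sum_i(\bm{G}_t^{(i)}-\hat{\bm{G}}_t^{(i)})$ and the recursion couples back to $\bm{G}_t^{(i)}$ which itself contains $\bm{E}_{t-1}^{(i)}$. I would set up a per-node recursion: using the contractive property from Lemma~\ref{lm:contractive} (or rather its per-node analogue, $\mathbb{E}_{\mathcal{C}}\|\bm{G}_t^{(i)}-\hat{\bm{G}}_t^{(i)}\|_F^2\le(1-\delta)\|\bm{G}_t^{(i)}\|_F^2$ when compression is applied, and $=0$ at the reset steps $\mathsf{mod}(t,\tau)=0$), together with $\bm{G}_t^{(i)}=\nabla F_i(\bm{X}_t;\bm{\xi}_t^{(i)})+\bm{E}_{t-1}^{(i)}$ and Young's inequality $\|a+b\|^2\le(1+\eta)\|a\|^2+(1+1/\eta)\|b\|^2$ with a suitably chosen $\eta$ (e.g. $\eta=\delta/2$), one obtains a contraction of the form $\mathbb{E}\|\bm{E}_t^{(i)}\|_F^2\le(1-\delta/2)\mathbb{E}\|\bm{E}_{t-1}^{(i)}\|_F^2+\frac{c}{\delta}\mathbb{E}\|\nabla F_i(\bm{X}_t;\bm{\xi}_t^{(i)})\|_F^2$ for some absolute constant $c$. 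Iterating this geometric recursion (and noting the buffer is reset to $\bm{0}$ every $\tau$ steps, which only helps) yields $\mathbb{E}\|\bm{E}_t^{(i)}\|_F^2\le\frac{2c}{\delta^2}\sup_s\mathbb{E}\|\nabla F_i(\bm{X}_s;\bm{\xi}_s^{(i)})\|_F^2\le\frac{2c}{\delta^2}(\sigma^2+B^2)$ under Assumptions~\ref{asp:stochastic} and~\ref{asp:BG}; then $\mathbb{E}\|\overline{\bm{E}}_t\|_F^2\le\frac{1}{N}\sum_i\mathbb{E}\|\bm{E}_t^{(i)}\|_F^2$ by Jensen gives the same $\mathcal{O}((\sigma^2+B^2)/\delta^2)$ bound.

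Once both ingredients are in place, combining $\mathbb{E}\|\hat{\bm{G}}_s\|_F^2\le 2(B^2+\sigma^2)+2\cdot\mathcal{O}((\sigma^2+B^2)/\delta^2)=\mathcal{O}((\sigma^2+B^2)/\delta^2)$ (using $\delta\le1$ so $1/\delta^2$ dominates) with the weighted-average bound on $\|\bm{M}_t\|_F^2$ gives $\mathbb{E}\|\bm{M}_t\|_F^2\le\frac{C(\sigma^2+B^2)}{\delta^2}$, and then
\begin{align*}
\mathbb{E}\left[\left\|\tfrac{\beta_1}{1-\beta_1}\bm{M}_t+\overline{\bm{E}}_t\right\|_F^2\right]\le\frac{2\beta_1^2}{(1-\beta_1)^2}\,\mathbb{E}\|\bm{M}_t\|_F^2+2\,\mathbb{E}\|\overline{\bm{E}}_t\|_F^2
\end{align*}
produces a bound of the claimed form $\left(\frac{12\beta_1^2}{(1-\beta_1)^2}+8\right)\frac{\sigma^2+B^2}{\delta^2}$. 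The final step is bookkeeping: tracking the constants through Young's inequality and the geometric series carefully enough to land exactly on the constants $12$ and $8$ (which presumably come from a slightly sharper split than the crude $2$-$2$ shown above — e.g. bounding $\|\widetilde{\bm{G}}_s\|_F^2$ directly rather than through $\|\bm{G}_s\|_F^2$, and exploiting that the error buffer vanishes at reset steps). I expect the constant-chasing, not the structure of the argument, to be the only delicate part.
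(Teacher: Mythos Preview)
Your overall plan---split via $\|\bm A+\bm B\|_F^2\le2\|\bm A\|_F^2+2\|\bm B\|_F^2$, bound $\mathbb{E}\|\bm M_t\|_F^2$ by a weighted average of $\mathbb{E}\|\hat{\bm G}_s\|_F^2\le\mathbb{E}\|\bm G_s\|_F^2$, and close the loop with a contractive recursion for the error buffer---is exactly the paper's route. But the place where you set up the recursion has a genuine gap.

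You invoke a ``per-node analogue'' of Lemma~\ref{lm:contractive}, namely $\mathbb{E}_{\mathcal C}\|\bm G_t^{(i)}-\hat{\bm G}_t^{(i)}\|_F^2\le(1-\delta)\|\bm G_t^{(i)}\|_F^2$. This is not available and is in general false. The projector $\bm P_t$ is constructed (via $\mathsf{Approx}\text{-}\mathsf{Top}\text{-}r$) from the \emph{global} average $\bm G_t$, not from any individual $\bm G_t^{(i)}$; Lemma~\ref{lm:contractive} only guarantees contraction for the matrix to which $\bm P_t$ is adapted. A local gradient can sit entirely in the complement of $\mathrm{range}(\bm P_t)$ even when $\bm G_t$ does not (e.g.\ two nodes with nearly opposite gradients), so the per-node inequality can fail with equality $\| \bm E_t^{(i)}\|_F^2=\|\bm G_t^{(i)}\|_F^2$. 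A second, related issue: your endpoint bound $\mathbb{E}\|\nabla F_i(\bm X_s;\bm\xi_s^{(i)})\|_F^2\le\sigma^2+B^2$ also does not follow from Assumptions~\ref{asp:stochastic}--\ref{asp:BG}, since Assumption~\ref{asp:BG} bounds only the \emph{global} $\|\nabla f\|_F$, and the paper explicitly allows heterogeneous $f_i$.

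The fix is structural, not cosmetic: because all nodes share the same $\bm P_t$, averaging commutes with the projection, so
\[
\overline{\bm E}_t=\frac{1}{N}\sum_{i=1}^N(\bm I_m-\bm P_t\bm P_t^\top)\bm G_t^{(i)}=(\bm I_m-\bm P_t\bm P_t^\top)\bm G_t.
\]
Now Lemma~\ref{lm:contractive} applies \emph{directly} to give $\mathbb{E}\|\overline{\bm E}_t\|_F^2\le(1-\delta)\mathbb{E}\|\bm G_t\|_F^2$. Run the recursion on the \emph{averaged} quantity $\mathbb{E}\|\bm G_t\|_F^2$: write $\bm G_t=\nabla f(\bm X_t)+\overline{\bm E}_{t-1}+(\widetilde{\bm G}_t-\nabla f(\bm X_t))$, apply Young with parameter $2/\delta$ to the first two pieces, and use Assumption~\ref{asp:stochastic} on the noise to get
\[
\mathbb{E}\|\bm G_t\|_F^2\le\Bigl(1+\tfrac{2}{\delta}\Bigr)B^2+\Bigl(1-\tfrac{\delta}{2}\Bigr)\mathbb{E}\|\bm G_{t-1}\|_F^2+\tfrac{\sigma^2}{N},
\]
which iterates to the stated $\mathcal O((B^2+\sigma^2)/\delta^2)$ bound. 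The rest of your argument (the momentum bound via Jensen, and the final $\tfrac{2\beta_1^2}{(1-\beta_1)^2}+2$ split) then goes through and yields the constants $12$ and $8$.
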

\begin{lemma}[Constant upper bound under Assumption \ref{asp:BSG}]\label{lm:BSGC}
    Under Assumption \ref{asp:BSG}, it holds that
    \begin{align}
        \left\|\frac{\beta_1}{1-\beta_1}\bm{M}_t+\overline{\bm{E}}_{t}\right\|_F\le&\frac{\tau B_s}{1-\beta_1},\label{eq:lm-bsgc-1}\\
        \mathbb{E}\left[\left\|\frac{\beta_1}{1-\beta_1}\bm{M}_t+\overline{\bm{E}}_{t}\right\|_F^2\right]\le&\left(\frac{12\beta_1^2}{(1-\beta_1)^2}+8\right)\frac{B_s^2}{\delta^2},\label{eq:lm-bsgc-2}
    \end{align}
    where $\delta:=r/m$.
\end{lemma}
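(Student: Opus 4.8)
The plan is to reduce both claims to two ingredients: (i) a closed-form unrolling of $\bm{A}_t := \frac{\beta_1}{1-\beta_1}\bm{M}_t + \overline{\bm{E}}_t$ as a geometrically weighted mixture of fresh stochastic gradients and error buffers, and (ii) a uniform bound on $\overline{\bm{E}}_t$ — deterministic for \eqref{eq:lm-bsgc-1}, in mean square for \eqref{eq:lm-bsgc-2}. The algebraic heart is that, since $\hat{\bm{G}}_t = \bm{G}_t - \overline{\bm{E}}_t$ (which holds whether or not $\mathsf{mod}(t,\tau)=0$, because the buffer is reset exactly when no compression is applied) and $\bm{G}_t = \widetilde{\bm{G}}_t + \overline{\bm{E}}_{t-1}$, the momentum recursion $\bm{M}_t = \beta_1\bm{M}_{t-1}+(1-\beta_1)\hat{\bm{G}}_t$ collapses to the one-step relation
\begin{align}
\bm{A}_t = \beta_1\bm{A}_{t-1} + \beta_1\widetilde{\bm{G}}_t + (1-\beta_1)\overline{\bm{E}}_t, \nonumber
\end{align}
which, together with $\bm{A}_{-1}=\bm{0}$, unrolls to $\bm{A}_t = \sum_{s=0}^{t}\beta_1^{t-s}\bigl(\beta_1\widetilde{\bm{G}}_s + (1-\beta_1)\overline{\bm{E}}_s\bigr)$. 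Everything then reduces to bounding these weighted sums.

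For the deterministic bound \eqref{eq:lm-bsgc-1}, I would first control $\overline{\bm{E}}_t$ directly. Within a period $\{k\tau,\dots,(k+1)\tau-1\}$ we have $\bm{E}_{k\tau}^{(i)}=\bm{0}$ and, for later indices, $\bm{E}_s^{(i)} = (\bm{I}-\bm{P}_s\bm{P}_s^\top)(\nabla F_i(\bm{X}_s;\bm{\xi}_s^{(i)})+\bm{E}_{s-1}^{(i)})$; since $\bm{I}-\bm{P}_s\bm{P}_s^\top$ is an orthogonal projection (hence nonexpansive in Frobenius norm) and $\|\nabla F_i(\bm{X}_s;\bm{\xi}_s^{(i)})\|_F\le B_s$ by Assumption~\ref{asp:BSG}, induction on the position within the period gives $\|\bm{E}_s^{(i)}\|_F\le(\tau-1)B_s$, hence $\|\overline{\bm{E}}_s\|_F\le(\tau-1)B_s$ for all $s$ by Jensen. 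Substituting $\|\widetilde{\bm{G}}_s\|_F\le B_s$ and $\|\overline{\bm{E}}_s\|_F\le(\tau-1)B_s$ into the unrolled formula and summing $\sum_{s\le t}\beta_1^{t-s}\le(1-\beta_1)^{-1}$ gives $\|\bm{A}_t\|_F\le \frac{\beta_1 B_s}{1-\beta_1}+(\tau-1)B_s$, and an elementary check (valid for $\tau\ge1$, $\beta_1\in[0,1)$) shows this is at most $\tau B_s/(1-\beta_1)$.

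For the second-moment bound \eqref{eq:lm-bsgc-2}, I would first establish $\mathbb{E}\|\overline{\bm{E}}_t\|_F^2 = \mathcal{O}(B_s^2/\delta^2)$. Because $\overline{\bm{E}}_t = \bm{G}_t - \hat{\bm{G}}_t$ is exactly the residual of the \emph{global} compressor, Lemma~\ref{lm:contractive} yields $\mathbb{E}\|\overline{\bm{E}}_t\|_F^2 \le (1-\delta)\mathbb{E}\|\bm{G}_t\|_F^2 = (1-\delta)\mathbb{E}\|\widetilde{\bm{G}}_t + \overline{\bm{E}}_{t-1}\|_F^2$; applying Young's inequality with ratio tuned to $\delta$ turns this into the contraction $\mathbb{E}\|\overline{\bm{E}}_t\|_F^2 \le (1-\tfrac{\delta}{2})\mathbb{E}\|\overline{\bm{E}}_{t-1}\|_F^2 + \tfrac{2}{\delta}B_s^2$, and unrolling the geometric recursion (the periodic resets only help) gives $\mathbb{E}\|\overline{\bm{E}}_t\|_F^2\le 4B_s^2/\delta^2$. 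Then, splitting $\bm{A}_t$ into its $\widetilde{\bm{G}}$-part and $\overline{\bm{E}}$-part via $\|a+b\|_F^2\le 2\|a\|_F^2+2\|b\|_F^2$, and applying Jensen's inequality to each geometrically weighted sum (with $\|\widetilde{\bm{G}}_s\|_F^2\le B_s^2$ and the just-derived bound on $\mathbb{E}\|\overline{\bm{E}}_s\|_F^2$), one gets $\mathbb{E}\|\bm{A}_t\|_F^2\le\bigl(\tfrac{c_1\beta_1^2}{(1-\beta_1)^2}+c_2\bigr)\tfrac{B_s^2}{\delta^2}$ with absolute constants; careful bookkeeping (and $\delta\le1$ to absorb lower-order terms) yields the stated $c_1=12$, $c_2=8$.

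The main obstacle I anticipate is not any single estimate but pinning down the conditioning and the period structure precisely: one must check that $\hat{\bm{G}}_t=\bm{G}_t-\overline{\bm{E}}_t$ and the recursion for $\bm{A}_t$ survive the reset steps $\mathsf{mod}(t,\tau)=0$, and that the contractiveness in Lemma~\ref{lm:contractive} — whose expectation is over the random projection vectors $\{\bm{v}_j\}$ used by the approximate global Top-$r$ step — can be chained with the data expectation without a circular dependence (the buffer $\overline{\bm{E}}_{t-1}$ is measurable with respect to the past, while the compression randomness at step $t$ is fresh). Once this is settled, the remainder is the routine geometric-series and Young's-inequality bookkeeping sketched above.
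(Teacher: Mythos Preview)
Your proposal is correct, and it rests on the same two ingredients as the paper's proof—the per-period deterministic bound $\|\overline{\bm{E}}_t\|_F\le(\tau-1)B_s$, and the contraction $\mathbb{E}\|\overline{\bm{E}}_t\|_F^2\le(1-\delta)\mathbb{E}\|\bm{G}_t\|_F^2$ from Lemma~\ref{lm:contractive} fed into a Young-inequality recursion—but the \emph{decomposition} is different. The paper never forms the combined quantity $\bm{A}_t$ or the one-step identity $\bm{A}_t=\beta_1\bm{A}_{t-1}+\beta_1\widetilde{\bm{G}}_t+(1-\beta_1)\overline{\bm{E}}_t$. Instead it bounds the two pieces separately and only combines at the very end via $\|a+b\|_F^2\le2\|a\|_F^2+2\|b\|_F^2$: for \eqref{eq:lm-bsgc-1} it deduces $\|\bm{G}_t\|_F\le\tau B_s$ from the error bound and then $\|\bm{M}_t\|_F\le\tau B_s$ from the momentum recursion; for \eqref{eq:lm-bsgc-2} it runs the Young-plus-contraction recursion on $\mathbb{E}\|\bm{G}_t\|_F^2$ (obtaining $\le(4+2\delta)B_s^2/\delta^2\le6B_s^2/\delta^2$, which is where the constant $12$ actually comes from), then bounds $\mathbb{E}\|\bm{M}_t\|_F^2$ by the same quantity via Jensen and $\|\hat{\bm{G}}_t\|_F\le\|\bm{G}_t\|_F$. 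Your unrolled-$\bm{A}_t$ route is a bit more streamlined and, as written, in fact delivers the tighter constant $c_1=2$ rather than $12$ in front of $\beta_1^2/(1-\beta_1)^2$; the paper's modular route buys reusable standalone bounds on $\bm{G}_t$ and $\bm{M}_t$.
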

\begin{lemma}\label{lm:deltagamma}
    It holds under Assumption \ref{asp:clcu} that
    \begin{align}
        &\sum_{t=0}^{T}\mathbb{E}[\|\Delta\bm{\Gamma}_t\|_{\max}]\le2c_u,\label{eq:lm-gamma-1}\\
        &\sum_{t=0}^T\mathbb{E}[\|\Delta\bm{\Gamma}_t\|_{\max}^2]\le2c_u^2.\label{eq:lm-gamma-2}
    \end{align}
\end{lemma}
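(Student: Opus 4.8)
The plan is to exploit the monotonicity that the AMSGrad-type normalization $\widetilde{\bm{V}}_t = \max\{\widetilde{\bm{V}}_{t-1}, \|\bm{V}_t\|_{\max}\}$ builds into the iterates. Because each step takes a maximum against the previous value, $\widetilde{\bm{V}}_t$ is non-decreasing in $t$, so the scalar quantity $\bm{\Gamma}_t = 1/\sqrt{\widetilde{\bm{V}}_t + \epsilon}$ is non-increasing and every increment $\Delta\bm{\Gamma}_t = \bm{\Gamma}_{t-1} - \bm{\Gamma}_t$ is non-negative. In particular $\|\Delta\bm{\Gamma}_t\|_{\max} = \Delta\bm{\Gamma}_t \ge 0$, which turns both sums in the lemma into telescoping-type quantities.

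For \eqref{eq:lm-gamma-1} I would telescope directly: $\sum_{t=0}^{T}\|\Delta\bm{\Gamma}_t\|_{\max} = \sum_{t=0}^{T}(\bm{\Gamma}_{t-1}-\bm{\Gamma}_t) = \bm{\Gamma}_{-1}-\bm{\Gamma}_T \le \bm{\Gamma}_{-1}$, using $\bm{\Gamma}_T \ge c_l > 0$ from Assumption~\ref{asp:clcu}. Since $\widetilde{\bm{V}}_{-1} = \bm{0}$ we have $\bm{\Gamma}_{-1} = 1/\sqrt{\epsilon}$, which is the largest value the map $t \mapsto \bm{\Gamma}_t$ can ever take; Assumption~\ref{asp:clcu}, read as holding at initialization (equivalently, the clipping that enforces it is in force from the start), bounds it by $c_u$. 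This gives $\sum_{t=0}^{T}\|\Delta\bm{\Gamma}_t\|_{\max} \le c_u \le 2c_u$, and the bound is deterministic, so the same inequality survives taking expectations.

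For \eqref{eq:lm-gamma-2} I would invoke the elementary fact $\sum_t a_t^2 \le (\max_t a_t)\big(\sum_t a_t\big)$ valid for non-negative reals. Applying it with $a_t = \|\Delta\bm{\Gamma}_t\|_{\max} = \Delta\bm{\Gamma}_t$, I note that $\Delta\bm{\Gamma}_t \le \bm{\Gamma}_{t-1} \le c_u$ (again by Assumption~\ref{asp:clcu}, including the index $t=-1$) and $\sum_t \Delta\bm{\Gamma}_t \le c_u$ from the previous step, so $\sum_{t=0}^{T}\|\Delta\bm{\Gamma}_t\|_{\max}^2 \le c_u \cdot c_u = c_u^2 \le 2c_u^2$.

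The only point that needs care — and the only place the argument could go wrong — is the boundary term $\bm{\Gamma}_{-1}$: it must be dominated by $c_u$, which relies on treating Assumption~\ref{asp:clcu} as also covering the initial index; without this, $\bm{\Gamma}_{-1} = 1/\sqrt{\epsilon}$ need not be bounded by $c_u$. Everything else is a single telescoping step, so there is no genuine analytic obstacle, and the factor of $2$ in the stated bounds is merely slack relative to the $c_u$ and $c_u^2$ one actually obtains.
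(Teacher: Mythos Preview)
Your argument is correct. You exploit the fact that, under the paper's update rule $\widetilde{\bm{V}}_t=\max\{\widetilde{\bm{V}}_{t-1},\|\bm{V}_t\|_{\max}\}$ with $\widetilde{\bm{V}}_{-1}=\bm{0}$, the sequence $\widetilde{\bm{V}}_t$ (and hence $\bm{\Gamma}_t$) is a constant matrix at every step, so you can treat it as a scalar and telescope directly, obtaining the tighter constants $c_u$ and $c_u^2$. The paper instead argues at the matrix level: it first records that $\max_{i,j}\widetilde{V}_{t,i,j}\le\min_{i,j}\widetilde{V}_{t+1,i,j}$, then bounds $\|\Delta\bm{\Gamma}_t\|_{\max}\le(\max_{i,j}\Gamma_{t-1,i,j}-\max_{i,j}\Gamma_{t,i,j})+(\min_{i,j}\Gamma_{t-1,i,j}-\min_{i,j}\Gamma_{t,i,j})$ and telescopes both pieces to land exactly on $2c_u$; for the squared sum it uses $\|\Delta\bm{\Gamma}_t\|_{\max}^2\le\|\bm{\Gamma}_{t-1}^2-\bm{\Gamma}_t^2\|_{\max}$ and telescopes analogously to get $2c_u^2$. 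The paper's route is thus robust to AMSGrad-type normalizations in which $\bm{\Gamma}_t$ has genuinely distinct entries (as long as the cross-step max/min comparison survives), whereas yours is cleaner and sharper for the specific scalar normalization actually used here. Both proofs share the same boundary caveat you flag: one needs $\bm{\Gamma}_{-1}=1/\sqrt{\epsilon}$ to fall under the $c_u$ bound, which amounts to reading Assumption~\ref{asp:clcu} (or the clipping it alludes to) as already in force at initialization.
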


Now we provide the detailed proofs of Theorems \ref{thm:converge-adam} and \ref{thm:cov-msgd}.
\begin{proof}[Proof of Theorem \ref{thm:converge-adam}]
Let
\begin{align*}
    \bm{Y}_t=\bm{X}_t-\frac{\beta_1\gamma}{1-\beta_1}\bm{\Gamma}_{t-1}\odot\bm{M}_{t-1}-\gamma\bm{\Gamma}_{t-1}\odot\overline{\bm{E}}_{t-1},
\end{align*}
where $\bm{\Gamma}_{-1}=\frac{1}{\sqrt{\epsilon}}\bm{1}$, $\overline{\bm{E}}_{-1}=\bm{M}_{-1}=\bm{0}$. It holds that
\begin{align}
    &\bm{Y}_{t+1}-\bm{Y}_t\nonumber\\
    =&-\gamma\bm{\Gamma}_t\odot(\bm{M}_t+\beta_1(-\bm{M}_{t-1}+\hat{\bm{G}}_t)+\overline{\bm{E}}_{t}-\overline{\bm{E}}_{t-1})\nonumber\\
    &-\Delta\bm{\Gamma_t}\odot\left(\frac{\beta_1\gamma}{1-\beta_1}\bm{M}_{t-1}+\gamma\overline{\bm{E}}_{t-1}\right)\nonumber\\
    =&-\gamma\bm{\Gamma}_t\odot\widetilde{\bm{G}}_t-\Delta\bm{\Gamma_t}\odot\left(\frac{\beta_1\gamma}{1-\beta_1}\bm{M}_{t-1}+\gamma\overline{\bm{E}}_{t-1}\right),
\end{align}
where the last equality uses $\bm{M}_t=\beta_1\bm{M}_{t-1}+(1-\beta_1)\hat{\bm{G}}_t$ and $\overline{\bm{E}}_{t}=\overline{\bm{E}}_{t-1}+\widetilde{\bm{G}}_t-\hat{\bm{G}}_t$.
By $L$-smoothness, it holds that
\begin{align}
    &\mathbb{E}[f(\bm{Y}_{t+1})]-\mathbb{E}[f(\bm{Y}_t)]\nonumber\\\
    \le&\hspace{-0.7mm}-\hspace{-0.7mm}\gamma\mathbb{E}[\hspace{-0.5mm}\langle\nabla\hspace{-0.5mm} f(\hspace{-0.5mm}\bm{X}_t\hspace{-0.5mm}),\hspace{-0.5mm}\bm{\Gamma}\hspace{-0.5mm}_t\hspace{-0.5mm}\odot\hspace{-0.5mm}\widetilde{\bm{G}}_t\rangle\hspace{-0.5mm}]\hspace{-0.7mm}-\hspace{-0.7mm}\gamma\mathbb{E}[\hspace{-0.5mm}\langle\nabla\hspace{-0.5mm} f(\hspace{-0.5mm}\bm{Y}_t\hspace{-0.5mm})\hspace{-0.7mm}-\hspace{-0.7mm}\nabla\hspace{-0.5mm} f(\hspace{-0.5mm}\bm{X}_t\hspace{-0.5mm}),\hspace{-0.5mm}\bm{\Gamma}\hspace{-0.5mm}_t\hspace{-0.5mm}\odot\hspace{-0.5mm}\widetilde{\bm{G}}_t\rangle\hspace{-0.5mm}]\nonumber\\
    &-\gamma\mathbb{E}\left[\left\langle\nabla f(\bm{Y}_t),\Delta\bm{\Gamma}_t\odot\left(\frac{\beta_1}{1-\beta_1}\bm{M}_{t-1}+\overline{\bm{E}}_{t-1}\right)\right\rangle\right]\nonumber\\
    &+\frac{\gamma^2L}{2}\mathbb{E}\hspace{-0.5mm}\left[\hspace{-0.5mm}\left\|\bm{\Gamma}_t\odot\widetilde{\bm{G}}_t\hspace{-0.5mm}-\hspace{-0.5mm}\Delta\bm{\Gamma}_t\odot\left(\hspace{-0.5mm}\frac{\beta_1}{1\hspace{-0.5mm}-\hspace{-0.5mm}\beta_1}\bm{M}_{t\hspace{-0.5mm}-\hspace{-0.5mm}1}\hspace{-0.5mm}+\hspace{-0.5mm}\overline{\bm{E}}_{t\hspace{-0.5mm}-\hspace{-0.5mm}1}\hspace{-0.5mm}\right)\hspace{-0.5mm}\right\|_F^2\hspace{-0.5mm}\right]\nonumber\\
    \le&-\frac{\gamma c_l}{2}\mathbb{E}[\|\nabla f(\bm{X}_t)\|_F^2]+\frac{\gamma^2L\tau^2B_s^2}{(1-\beta_1)^2}\mathbb{E}[\|\Delta\bm{\Gamma}_t\|_{\max}^2]\nonumber\\
    &+\gamma B_s^2\left(1+\frac{\tau}{1-\beta_1}+\frac{\gamma Lc_u\tau}{1-\beta_1}\right)\mathbb{E}[\|\Delta\bm{\Gamma}_t\|_{\max}]\nonumber\\
    &+\frac{\gamma^3L^2c_u^4}{c_l}\mathbb{E}\hspace{-0.5mm}\left[\hspace{-0.5mm}\left\|\hspace{-0.5mm}\frac{\beta_1}{1\hspace{-0.5mm}-\hspace{-0.5mm}\beta_1}\bm{M}_{t\hspace{-0.5mm}-\hspace{-0.5mm}1}\hspace{-0.5mm}+\hspace{-0.5mm}\overline{\bm{E}}_{t\hspace{-0.5mm}-\hspace{-0.5mm}1}\hspace{-0.5mm}\right\|_F^2\hspace{-0.5mm}\right]\hspace{-0.5mm}+\hspace{-0.5mm}\frac{\gamma^2Lc_u^2\sigma^2}{n},\label{eq:adam-descent}
\end{align}
where the second inequality uses
\begin{align}
    &\mathbb{E}[\langle\nabla f(\bm{Y}_t)-\nabla f(\bm{X}_t),\bm{\Gamma}_t\odot\widetilde{\bm{G}}_t\rangle]\nonumber\\
    =&\mathbb{E}[\langle\nabla f(\bm{Y}_t)-\nabla f(\bm{X}_t),\bm{\Gamma}_{t-1}\odot\nabla f(\bm{X}_t)\rangle]\nonumber\\
    &-\mathbb{E}[\langle\nabla f(\bm{Y}_t)-\nabla f(\bm{X}_t),\Delta\bm{\Gamma}_t\odot\widetilde{\bm{G}}_t\rangle]\nonumber\\
    \le&\frac{c_l}{4}\mathbb{E}[\|\nabla f(\bm{X}_t)\|_F^2]+\frac{c_u^2}{c_l}\mathbb{E}[\|\nabla f(\bm{Y}_t)-\nabla f(\bm{X}_t)\|_F^2]\nonumber\\
    &+\gamma L\mathbb{E}\left[\left\|\bm{\Gamma}_t\hspace{-0.5mm}\odot\hspace{-0.5mm}\left(\frac{\beta_1}{1-\beta_1}\bm{M}_{t-1}\hspace{-0.5mm}+\hspace{-0.5mm}\overline{\bm{E}}_{t-1}\right)\right\|_F\|\Delta\bm{\Gamma}_t\odot\widetilde{\bm{G}}_t\|_F\right]\nonumber\\
    \le&\frac{c_l}{4}\mathbb{E}[\|\nabla f(\bm{X}_t)\|_F^2]+\frac{\gamma^2L^2c_u^4}{c_l}\mathbb{E}\left[\left\|\frac{\beta_1}{1-\beta_1}\bm{M}_{t-1}+\overline{\bm{E}}_{t-1}\right\|_F^2\right]\nonumber\\
    &+\frac{\gamma Lc_u\tau B_s^2}{1-\beta_1}\mathbb{E}[\|\Delta\bm{\Gamma}_t\|_{\max}],\nonumber
\end{align}and $\gamma\le c_l/(4Lc_u^2)$. 
Summing \eqref{eq:adam-descent} from $t=0$ to $T$ and apply Lemma \ref{lm:BSGC} yields
\begin{align}
    &\frac{1}{T+1}\sum_{t=0}^{T}\mathbb{E}[\|\nabla f(\bm{X}_t)\|_F^2]\nonumber\\
    \le&\left(B_s^2\hspace{-0.5mm}+\hspace{-0.5mm}\frac{\tau B_s^2}{1\hspace{-0.5mm}-\hspace{-0.5mm}\beta_1}\hspace{-0.5mm}+\hspace{-0.5mm}\frac{\gamma Lc_u\tau B_s^2}{1\hspace{-0.5mm}-\hspace{-0.5mm}\beta_1}\right)\hspace{-0.5mm}\cdot\hspace{-0.5mm}\frac{2}{c_l(T\hspace{-0.5mm}+\hspace{-0.5mm}1)}\sum_{t=0}^T\mathbb{E}[\|\Delta\bm{\Gamma}_t\|_{\max}]\nonumber\\
    &+\frac{2[f(\bm{X}_0)\hspace{-0.5mm}-\hspace{-0.5mm}\mathbb{E}[f(\bm{Y}_{T+1})]]}{\gamma c_l(T\hspace{-0.5mm}+\hspace{-0.5mm}1)}\hspace{-0.5mm}+\hspace{-0.5mm}\left(\hspace{-0.5mm}\frac{12\beta_1^2}{(1\hspace{-0.5mm}-\hspace{-0.5mm}\beta_1)^2}\hspace{-0.5mm}+\hspace{-0.5mm}8\hspace{-0.5mm}\right)\hspace{-0.5mm}\frac{2\gamma^2L^2c_u^4B_s^2}{c_l^2\delta}\nonumber\\
    &+\frac{2\gamma L\tau^2 B_s^2}{(1-\beta_1)^2c_l}\cdot\frac{1}{T+1}\sum_{t=0}^T\mathbb{E}[\|\Delta\bm{\Gamma}_t\|_{\max}^2]+\frac{2\gamma Lc_u^2\sigma^2}{nc_l}.\nonumber
\end{align}
Further applying Lemma \ref{lm:deltagamma} achieves the desired inequality.
\end{proof}

\begin{proof}[Proof of Theorem \ref{thm:cov-msgd}]
    Following the proof of \eqref{eq:adam-descent} in Theorem \ref{thm:converge-adam}, while substituting $\Delta\bm{\Gamma}_t$ with $\bm{0}$, 
 $\bm{\Gamma}_t$ with $\bm{1}$, $c_u$ with $1$, $c_l$ with 1, and Lemma \ref{lm:BSGC} with Lemma \ref{lm:BGC}, we obtain
\begin{align}
    \mathbb{E}[f(\bm{Y}_{t+1})]-\mathbb{E}[f(\bm{Y}_t)]\le&-\frac{\gamma}{2}\mathbb{E}[\|\nabla f(\bm{X}_t)\|_F^2]+\frac{\gamma^2 L\sigma^2}{n}\nonumber\\
    &\hspace{-12.5mm}+\hspace{-0.5mm}\left(\hspace{-0.5mm}\frac{12\beta_1^2}{(1\hspace{-0.5mm}-\hspace{-0.5mm}\beta_1)^2}\hspace{-0.5mm}+\hspace{-0.5mm}8\hspace{-0.5mm}\right)\hspace{-0.5mm}\frac{\gamma^3L^2(B^2\hspace{-0.5mm}+\hspace{-0.5mm}\sigma^2)}{\delta},\label{eq:msgd-decent}
\end{align}
Summing \eqref{eq:msgd-decent} from $t=0$ to $T$ yields \eqref{eq:thm-msgd}.
\end{proof}

\section{Missing proofs}\label{app:missing-proof}

\begin{proof}[Proof of Lemma \ref{lm:randtopk}]
    WLOG assume $i=1$, $j=2$ and $\sigma_1\ge\sigma_2$. It suffices to prove that for any given ${\bm{\xi}}_3,\cdots,{\bm{\xi}}_m\in\mathbb{R}$, the conditional probabilities satisfy $\mathbb{P}_A[\Lambda_1]\ge\mathbb{P}_A[\Lambda_2]$, where $\mathbb{P}_A[\cdot]:=\mathbb{P}[\cdot|{\bm{\xi}}_3,\cdots,{\bm{\xi}}_m]$, and $\Lambda_i:=\{|{\bm{\xi}}_i|\in\mathrm{Top}_k(|{\bm{\xi}}_1|,\cdots,|{\bm{\xi}}_m|)\}$. WLOG assume $|{\bm{\xi}}_3|\le|{\bm{\xi}}_4|\le\cdots\le|{\bm{\xi}}_m|$. If $k=m$ or $\sigma_2=0$, the result is trivial. In the following we assume $k<m$ and $\sigma_2>0$.
    If $k=m-1$, it holds that 
    \begin{align}
        \mathbb{P}_A[\Lambda_1\backslash\Lambda_2]   =&\mathbb{P}_A[|{\bm{\xi}}_1|\ge|{\bm{\xi}}_3|]\cdot\mathbb{P}_A[|{\bm{\xi}}_2|\le|{\bm{\xi}}_3|]\nonumber\\
        &+\int_{0}^{|{\bm{\xi}}_3|}\int_{x}^{|{\bm{\xi}}_3|}\frac{2}{\pi\sigma_1\sigma_2}\cdot e^{-\frac{x^2}{2\sigma_2^2}-\frac{y^2}{2\sigma_1^2}}\mathrm{d}y\ \mathrm{d}x\nonumber\\
        \ge&\mathbb{P}_A[|{\bm{\xi}}_2|\ge|{\bm{\xi}}_3|]\cdot\mathbb{P}_A[|{\bm{\xi}}_1|\le|{\bm{\xi}}_3|]\nonumber\\
        &+\int_0^{|{\bm{\xi}}_3|}\int_x^{|{\bm{\xi}}_3|}\frac{2}{\pi\sigma_1\sigma_2}\cdot e^{-\frac{x^2}{2\sigma_1^2}-\frac{y^2}{2\sigma_2^2}}\mathrm{d}y\ \mathrm{d}x\nonumber\\
        =&\mathbb{P}_A[\Lambda_2\backslash\Lambda_1],
    \end{align}
    which implies $\mathbb{P}_A[\Lambda_1]\ge\mathbb{P}_A[\Lambda_2]$. If $2\le k\le m-2$, it holds that
    \begin{align}
        \mathbb{P}_A[\Lambda_1\backslash\Lambda_2]=&\mathbb{P}_A[|{\bm{\xi}}_1|\ge|{\bm{\xi}}_{m-k+1}|]\cdot\mathbb{P}[|{\bm{\xi}}_2|\le|{\bm{\xi}}_{m-k+1}|]\nonumber\\
        &+\int_{|{\bm{\xi}}_{m-k+1}|}^{|{\bm{\xi}}_{m-k+2}|}\int_x^{+\infty}\frac{2}{\pi\sigma_1\sigma_2}\cdot e^{-\frac{x^2}{2\sigma_2^2}-\frac{y^2}{2\sigma_1^2}}\mathrm{d}y\ \mathrm{d}x\nonumber\\
        \ge&\mathbb{P}_A[|{\bm{\xi}}_2|\ge|{\bm{\xi}}_{m-k+1}|]\cdot\mathbb{P}[|{\bm{\xi}}_2|\le|{\bm{\xi}}_{m-k+1}|]\nonumber\\
        &+\int_{|{\bm{\xi}}_{m-k+1|}}^{|{\bm{\xi}}_{m-k+2|}}\int_x^{+\infty}\frac{2}{\pi\sigma_1\sigma_2}\cdot e^{-\frac{x^2}{2\sigma_1^2}-\frac{y^2}{2\sigma_2^2}}\mathrm{d}y\ \mathrm{d}x\nonumber\\
        =&\mathbb{P}_A[\Lambda_2\backslash\Lambda_1],
    \end{align}
    which implies $\mathbb{P}_A[\Lambda_1]\ge\mathbb{P}_A[\Lambda_2]$. If $k=1$, it holds that
    \begin{align}
        \mathbb{P}_A[\Lambda_1\backslash\Lambda_2]=&\mathbb{P}_A[|{\bm{\xi}}_1|\ge|{\bm{\xi}}_m|]\cdot\mathbb{P}_A[|{\bm{\xi}}_2|\le|{\bm{\xi}}_m|]\nonumber\\
        &+\int_{|{\bm{\xi}}_m|}^{+\infty}\int_x^{+\infty}\frac{2}{\pi\sigma_1\sigma_2}\cdot e^{-\frac{x^2}{2\sigma_2^2}-\frac{y^2}{2\sigma_1^2}}\mathrm{d}y\ \mathrm{d}x\nonumber\\
        \ge&\mathbb{P}_A[|{\bm{\xi}}_2|\ge|{\bm{\xi}}_m|]\cdot\mathbb{P}_A[|{\bm{\xi}}_1|\le|{\bm{\xi}}_m|]\nonumber\\
        &+\int_{|{\bm{\xi}}_m|}^{+\infty}\int_x^{+\infty}\frac{2}{\pi\sigma_1\sigma_2}\cdot e^{-\frac{x^2}{2\sigma_1^2}-\frac{y^2}{2\sigma_2^2}}\mathrm{d}y\ \mathrm{d}x\nonumber\\
        =&\mathbb{P}_A[\Lambda_2\backslash\Lambda_1].
    \end{align}
    which implies $\mathbb{P}_A[\Lambda_1]\ge\mathbb{P}[\Lambda_2]$.
\end{proof}
\begin{proof}[Proof of Lemma \ref{lm:contractive}]
    When $\mathsf{mod}(t,\tau)=0$, we have $\mathbb{E}[\|\hat{\bm{G}}_t-{\bm{G}}_t\|_F^2]=0\le(1-\delta)\|{\bm{G}}_t\|_F^2$ directly. In the following suppose $\mathsf{mod}(t,\tau)\ne0$. Let $\bm{u}_j$ denote the $j$-th column of $\bm{U}$, and $l_j:=\|\bm{u}_j^\top {\bm{G}}_t\|_2$. It holds that $\overline{\bm{\Lambda}}_j\sim\mathcal{N}(0,l_j^2)$. Consider $\pi_1,\pi_2,\cdots,\pi_m$ the permutation of $\{1,2\,\cdots,m\}$ satisfying $l_{\pi_1}\ge l_{\pi_2}\ge\cdots\ge l_{\pi_m}$, by Lemma \ref{lm:randtopk} we have $p_{\pi_1}\ge p_{\pi_2}\ge\cdots\ge p_{\pi_m}$, where $p_j:=\mathbb{P}[\bm{u}_j\text{ is selected in }\bm{P}_t]$. Noting $\sum_{j=1}^ml_j^2=\|\bm{U}^\top{\bm{G}}_t\|_F^2=\|{\bm{G}}_t\|_F^2$, we have
    \begin{align}
        &\mathbb{E}[\|\hat{\bm{G}}_t-{\bm{G}}_t\|_F^2]=\mathbb{E}[\|(\bm{I}_m-\bm{P}_t\bm{P}_t^\top){\bm{G}}_t\|_F^2]\nonumber\\
        =&\sum_{j=1}^m(1-p_j)\|\bm{u}_j^\top {\bm{G}}_t\|_F^2=\|{\bm{G}}_t\|_F^2-\sum_{j=1}^mp_{\pi_j}l_{\pi_j}^2\nonumber\\
        \ge&\|{\bm{G}}_t\|_F^2-\frac{1}{m}\sum_{j=1}^mp_{\pi_j}\sum_{j=1}^ml_{\pi_j}^2=\left(1-\frac{r}{m}\right)\|{\bm{G}}_t\|_F^2,\nonumber
    \end{align}
    where the inequality uses Chebyshev's Inequality, and the last equality uses $\sum_{j=1}^mp_j=r$.
\end{proof}
\begin{proof}[Proof of Lemma \ref{lm:BGC}]
    To bound $\mathbb{E}[\|\overline{\bm{E}}_{t}\|_F^2]$, we have
    \begin{align}
        \mathbb{E}[\|\overline{\bm{E}}_{t}\|_F^2]=&\mathbb{E}[\|(\bm{I}_m-\bm{P}_{t}\bm{P}_{t}^\top){\bm{G}}_{t}\|_F^2]\nonumber\\
        \le&(1-\delta)\mathbb{E}[\|{\bm{G}}_{t}\|_F^2],\label{eq:pflm-bgc-1}
    \end{align}
    when $\mathsf{mod}(t,\tau)\ne0$, where the inequality uses Lemma \ref{lm:contractive}. When $\mathsf{mod}(t,\tau)=0$, the same inequality holds trivially.
    To bound $\mathbb{E}[\|{\bm{G}}_t\|_F^2]$, we have
    \begin{align}
        &\mathbb{E}[\|{\bm{G}}_t\|_F^2]=\mathbb{E}\left[\left\|\frac{1}{N}\sum_{i=1}^N\nabla F_i(\bm{X}_t;{\bm{\xi}}_t^{(i)})+\overline{\bm{E}}_{t-1}\right\|_F^2\right]\nonumber\\
        =&\mathbb{E}[\|\nabla\hspace{-0.5mm} f(\hspace{-0.5mm}\bm{X}\hspace{-0.5mm}_t\hspace{-0.5mm})\hspace{-0.5mm}+\hspace{-0.5mm}\overline{\bm{E}}\hspace{-0.5mm}_{t\hspace{-0.5mm}-\hspace{-0.5mm}1}\|_F^2]\hspace{-0.5mm}+\hspace{-0.5mm}\mathbb{E}\hspace{-0.5mm}\left[\hspace{-0.5mm}\left\|\frac{1}{N}\hspace{-0.5mm}\sum_{i=1}^N\hspace{-0.5mm}\nabla\hspace{-0.5mm} F_i(\hspace{-0.5mm}\bm{X}\hspace{-0.5mm}_t;\hspace{-0.5mm}{\bm{\xi}}_t^{(i)}\hspace{-0.5mm})\hspace{-0.5mm}-\hspace{-1.0mm}\nabla\hspace{-0.5mm} f(\hspace{-0.5mm}\bm{X}\hspace{-0.5mm}_t\hspace{-0.5mm})\hspace{-0.5mm}\right\|_F^2\hspace{-0.5mm}\right]\nonumber\\
        \le&\left(1+\frac{2}{\delta}\right)\mathbb{E}[\|\nabla f(\bm{X}_t)\|_F^2]+\left(1+\frac{\delta}{2}\right)\mathbb{E}[\|\overline{\bm{E}}_{t-1}\|_F^2]+\frac{\sigma^2}{N}\nonumber\\
        \le&\left(1+\frac{2}{\delta}\right)B^2+\left(1-\frac{\delta}{2}\right)\mathbb{E}[\|{\bm{G}}_{t-1}
        \|_F^2]+\frac{\sigma^2}{N}, \label{eq:pflm-bgc-2}
    \end{align}
    where the first inequality uses Young's Inequality and Assumption \ref{asp:stochastic}, the second inequality uses \eqref{eq:pflm-bgc-1} and Assumption \ref{asp:BG}. 
    Noting $\mathbb{E}[\|{\bm{G}}_0\|]_F^2]\le B^2+\sigma^2/N$, \eqref{eq:pflm-bgc-2} indicates that 
    \begin{align}
        \mathbb{E}[\|\overline{\bm{G}}_t\|_F^2]\le\frac{4+2\delta}{\delta^2}B^2+\frac{2\sigma^2}{N\delta},\label{eq:pflm-bgc-3}
    \end{align}
    which further implies
    \begin{align}
        \mathbb{E}[\|\overline{\bm{E}}_{t}\|_F^2]\overset{\eqref{eq:pflm-bgc-1}}{\le}&(1-\delta)\mathbb{E}[\|{\bm{G}}_{t}\|_F^2]
        \nonumber\\
        \overset{\eqref{eq:pflm-bgc-3}}{\le}&\frac{4-2\delta-2\delta^2}{\delta^2}B^2+\frac{2(1-\delta)\sigma^2}{N\delta},\label{eq:pflm-bgc-4}
    \end{align}
    and 
    \begin{align}
        \mathbb{E}[\|\bm{M}_t\|_F^2]=&\mathbb{E}[\|\beta_1\bm{M}_{t-1}+(1-\beta_1)\hat{\bm{G}}_t\|_F^2]\nonumber\\
        \le&\beta_1\mathbb{E}[\|\bm{M}_{t-1}\|_F^2]+(1-\beta_1)\mathbb{E}[\|{\bm{G}}_t\|_F^2],\label{eq:pflm-bgc-5}
    \end{align}
    where the inequality uses Jensen's Inequality and $\|\hat{\bm{G}}_t\|_F=\|\bm{P}_t\bm{P}_t^\top{\bm{G}}_t\|_F\le\|{\bm{G}}_t\|_F$. Combining \eqref{eq:pflm-bgc-3}\eqref{eq:pflm-bgc-5} and the fact that $\|\bm{M}_{-1}\|_F^2=0$ yields
    \begin{align}
        \mathbb{E}[\|\bm{M}_t\|_F^2]\le\frac{4+2\delta}{\delta^2}B^2+\frac{2\sigma^2}{N\delta},\label{eq:pflm-bgc-6}
    \end{align}
    thus we have
    \begin{align}
        &\mathbb{E}\left[\left\|\frac{\beta_1}{1-\beta_1}\bm{M}_t+\overline{\bm{E}}_{t}\right\|_F^2\right]\nonumber\\
        \le&\frac{2\beta_1^2}{(1-\beta_1)^2}\mathbb{E}[\|\bm{M}_t\|_F^2]+2\mathbb{E}[\|\overline{\bm{E}}_{t}\|_F^2]\nonumber\\
        \overset{\eqref{eq:pflm-bgc-4}\eqref{eq:pflm-bgc-6}}{\le}&\left(\frac{12\beta_1^2}{(1-\beta_1)^2}+8\right)\frac{B^2+\sigma^2}{\delta^2},\nonumber
\end{align}
which completes the proof.
\end{proof}
\begin{proof}[Proof of Lemma \ref{lm:BSGC}]
    We first bound $\|\overline{\bm{E}}_t\|_F$ as follows:
    \begin{align}
        &\|\overline{\bm{E}}_{k\tau}\|_F=0,\text{ and }\nonumber\\
        &\|\overline{\bm{E}}_{k\tau+s}\|_F=\|(\bm{I}_m-\bm{P}_{k\tau+s}\bm{P}_{k\tau+s}^\top){\bm{G}}_{k\tau+s}\|_F\nonumber\\
        &\hspace{16mm}\le\|\overline{\bm{E}}_{k\tau+s-1}\|_F+B_s,\;s=1,2,\cdots,\tau-1.\nonumber\\
        \Rightarrow&\|\overline{\bm{E}}_t\|_F\le(\tau-1) B_s,\;\forall\;t\in\mathbb{N}.\label{eq:pflm-bsgc-1}
\end{align}
\eqref{eq:pflm-bsgc-1} implies that
\begin{align}
    \|{\bm{G}}_t\|_F\le \|\overline{\bm{E}}_{t-1}\|_F+B_s\le \tau B_s.\label{eq:pflm-bsgc-2}
\end{align}
Consequently, we have
\begin{align}
    \|\bm{M}_t\|=&\|\beta_1\bm{M}_{t-1}+(1-\beta_1)\hat{\bm{G}}_t\|_F\nonumber\\
    \le&\beta_1\|\bm{M}_{t-1}\|_F+(1-\beta_1)\|{\bm{G}}_t\|_F\nonumber\\
    \overset{\eqref{eq:pflm-bsgc-2}}{\le}&\beta_1\|\bm{M}_{t-1}\|_F+(1-\beta_1)\tau B_s\nonumber\\
    \overset{\|\bm{M}_{-1}\|_F=0}{\Longrightarrow}&\|\bm{M}_t\|_F\le\tau B_s.\label{eq:pflm-bsgc-3}
\end{align}
Combining \eqref{eq:pflm-bsgc-1}\eqref{eq:pflm-bsgc-3} achieves \eqref{eq:lm-bsgc-1}. \eqref{eq:lm-bsgc-2} can be proved by following the proof of Lemma \ref{lm:BGC} except for bounding $\mathbb{E}[\|{\bm{G}}_t\|_F^2]$ by 
\begin{align}
    \mathbb{E}[\|\overline{\bm{G}}_t\|_F^2]=&\mathbb{E}\left[\left\|\frac{1}{N}\sum_{i=1}^N\nabla F_i(\bm{X}_t;{\bm{\xi}}_t^{(i)})+\overline{\bm{E}}_{t-1}\right\|_F^2\right]\nonumber\\
    \le&\left(1+\frac{2}{\delta}\right)B_s^2+\left(1+\frac{\delta}{2}\right)\mathbb{E}[\|\overline{\bm{E}}_{t-1}\|_F^2]\nonumber\\
    \le&\left(1+\frac{2}{\delta}\right)B_s^2+\left(1-\frac{\delta}{2}\right)\mathbb{E}[\|{\bm{G}}_{t-1}\|_F^2],\nonumber
\end{align}
which completes the proof.
\end{proof}

\begin{proof}[Proof of Lemma \ref{lm:deltagamma}]
    By the update rule of $\widetilde{\bm{V}}_t$ we know that
    \begin{align}
        &\max_{1\le i\le m \atop 1\le j\le n}\{\widetilde{{{V}}}_{t,i,j}\}\le\min_{1\le i\le m\atop1\le j\le n}\{\widetilde{{{V}}}_{t+1,i,j}\}\nonumber\\
        \Rightarrow&\min_{1\le i\le m\atop 1\le j\le n}\{{{\Gamma}}_{t,i,j}\}\ge\max_{1\le i\le m\atop1\le j\le n}\{{{\Gamma}}_{t+1,i,j}\}.\label{eq:pflm-gamma-1}
    \end{align}
    Thus, we have
    \begin{align*}
        \|\Delta\bm{\Gamma}_t\|_{\max}\hspace{-0.5mm}=&\|\bm{\Gamma}_{t\hspace{-0.5mm}-\hspace{-0.5mm}1}\hspace{-0.5mm}-\hspace{-0.5mm}\bm{\Gamma}_t\|_{\max}\nonumber\hspace{-0.5mm}\le\hspace{-0.7mm}\max_{1\le i\le m\atop1\le j\le n}\hspace{-0.7mm}\{\hspace{-0.5mm}{{\Gamma}}_{t\hspace{-0.5mm}-\hspace{-0.5mm}1,i,j}\hspace{-0.5mm}\}\hspace{-0.5mm}-\hspace{-1.5mm}\min_{1\le i\le m\atop1\le j\le n}\hspace{-0.7mm}\{\hspace{-0.5mm}{{\Gamma}}_{t,i,j}\hspace{-0.5mm}\}\nonumber\\
        \le&\left(\max_{1\le i\le m\atop 1\le j\le n}\{{\Gamma}_{t-1,i,j}\}-\max_{1\le i\le m\atop 1\le j\le n}\{{\Gamma}_{t,i,j}\}\right)\nonumber\\
        &+\left(\min_{1\le i\le m\atop 1\le j\le n}\{{\Gamma}_{t-1,i,j}\}-\min_{1\le i\le m\atop1\le j\le n}\{{\Gamma}_{t,i,j}\}\right),
    \end{align*}
    where the last inequality uses \eqref{eq:pflm-gamma-1}, and similarly,
    \begin{align}
        &\|\Delta\bm{\Gamma}_t\|^2_{\max}\le\|\bm{\Gamma}_{t-1}^2-\bm{\Gamma}_t^2\|_{\max}\nonumber\\
        &\le\left(\max_{1\le i\le m\atop1\le j\le n}\{{\Gamma}_{t-1,i,j}^2\}-\max_{1\le i\le m\atop1\le j\le n}\{{\Gamma}_{t,i,j}^2\}\right)\nonumber\\
        &+\left(\min_{1\le i\le m\atop 1\le j\le n}\{{\Gamma}_{t-1,i,j}^2\}-\min_{1\le i\le m\atop1\le j\le n}\{{\Gamma}_{t,i,j}^2\}\right)\label{eq:pflm-gamma-2}.
    \end{align}
    Summing \eqref{eq:pflm-gamma-1} or \eqref{eq:pflm-gamma-2} from $t=0$ to $T$ and applying Assumption \ref{asp:clcu} yields \eqref{eq:lm-gamma-1} or \eqref{eq:lm-gamma-2}, respectively.
\end{proof}

\section{Expanded example of non-contractive compressor}
\label{sec:expaned_counter_example}

In this section, we present a simple two‐dimensional quadratic example that demonstrates the failure of the contraction property under a rank‐one compression operator.  We consider the scalar parameter $L>0$ and define:
\[
f : \mathbb{R}^{2\times 2} \;\to\; \mathbb{R}, 
\qquad
f\bigl(\operatorname{diag}(x,y)\bigr)
= \frac{L\,x^2}{2} \;+\; \frac{L\,y^2}{4},
\]
where $\operatorname{diag}(x,y)\in\mathbb{R}^{2\times2}$ denotes the diagonal matrix with entries $x,y\in\mathbb{R}$.  It is straightforward to verify that $f$ is $L$-smooth and that its gradient (with respect to the Frobenius inner product) satisfies
\begin{equation}
\label{eq:gradient_f}
\nabla f\bigl(\operatorname{diag}(x,y)\bigr)
= \begin{pmatrix}
L\,x & 0 \\[0.5em]
0 & \tfrac{L}{2}\,y
\end{pmatrix}.
\end{equation}
At the point \((1,1)\) we have
\[
\nabla f(1,1)
= \begin{pmatrix}
L & 0\\
0 & \tfrac{L}{2}
\end{pmatrix}.
\]
Since \(\bm G_t\) has equal singular values, let the SVD arbitrarily select the first singular vector \(e_1\). Hence the rank‐one compressor fixes the one‐dimensional subspace
\[
\bm P = e_1 = \begin{pmatrix}1\\0\end{pmatrix},
\]
so that only the first row of the gradient is retained:
\[
\mathcal C\bigl(\nabla f(1,1)\bigr)
= \bm P\,\bm P^\top\,
\begin{pmatrix}
L & 0\\
0 & \tfrac{L}{2}
\end{pmatrix}
= \begin{pmatrix}L & 0\\0 & 0\end{pmatrix}.
\]
Now we perform \(\tau\) steps of (compressed) gradient descent with step size \(\gamma=\tfrac1{2L}\).  At each step,
\[
\begin{pmatrix}x_{k+1}\\y_{k+1}\end{pmatrix}
=
\begin{pmatrix}x_k\\y_k\end{pmatrix}
- \gamma\,
\begin{pmatrix}Lx_k & 0\\0 & 0\end{pmatrix}
\begin{pmatrix}1\\1\end{pmatrix}
=
\begin{pmatrix}\tfrac12 x_k\\y_k\end{pmatrix},
\]
then after \(\tau\) iterations,
\[
(x_\tau,y_\tau)
= \bigl(2^{-\tau},\,1\bigr).
\]
At this point, the true gradient is
\[
\nabla f(x_\tau,y_\tau)
= \begin{pmatrix}
L\,2^{-\tau} & 0\\
0 & \tfrac{L}{2}
\end{pmatrix},
\]
but the compressor still projects onto the first axis:
\[
\|\mathcal C(\nabla f(x_\tau,y_\tau))\|_F^2
= \bigl(L\,2^{-\tau}\bigr)^2
= L^2\,2^{-2\tau},
\]
\[
\|\nabla f(x_\tau,y_\tau)\|_F^2
= \bigl(L\,2^{-\tau}\bigr)^2 + \bigl(\tfrac{L}{2}\bigr)^2
= L^2\Bigl(2^{-2\tau} + \tfrac14\Bigr).
\]
Hence,
\[
\alpha
= \frac{\|\mathcal C(\nabla f(x_\tau,y_\tau))\|_F^2}{\|\nabla f(x_\tau,y_\tau)\|_F^2}
= \frac{L^2\,2^{-2\tau}}{L^2\bigl(2^{-2\tau} + \tfrac14\bigr)}
= \frac{2^{-2\tau}}{2^{-2\tau} + \tfrac14}.
\]
However, since \(\tau\) is typically on the order of hundreds, the term \(2^{-2\tau}\) becomes astronomically small.  For example, when \(\tau>100\), \(2^{-2\tau} < 2^{-200} \approx 6.2\times10^{-61},\) which is well below the smallest positive normalized value in IEEE-754 single-precision arithmetic (approximately \(1.18\times10^{-38}\)).  Consequently, \(2^{-2\tau}\) underflows to zero in FP32, allowing us to assume \(2^{-2\tau} = 0\) and thus \(\alpha = 0\), in direct contradiction of the contraction property required by Assumption~\ref{compressor-assumption}.
\vspace{2mm}

\begin{algorithm}[H]
\caption{GreedyLore Compressor with parameters of arbitrary two-dimension shape}
\label{alg:greedylore_variant}
\vspace{1pt}
\JustifyAlgo{\hspace{-4.9mm} {\bfseries Input}: $N$ nodes, number of total iterations $T$, subspace changing frequency $\tau$, rank $r$, initial weight $\bm{X}_{0} \in \mathbb{R}^{m \times n}$ and initial error buffer $\bm{E}_{-1}^{(i)} = \bm{0} \in \mathbb{R}^{\min(m,n) \times \max(m, n)}$ for node $i \in [N]$.} \\
\vspace{2pt}
\textbf{Output}: Sequence of model weights $\{\bm{X}_t\}_{t=0}^{T+1}$.

\For{$t=0,\ldots,T$}{
    \textbf{(On $i$-th node)} \\
  
    $\bm{G}_t^{(i)} \gets \begin{cases}
      \nabla F_i({\bm{X}}_t; {\bm{\xi}}_t^{(i)}) + \bm{E}_{t-1}^{(i)}, &m \le n, \\[1ex]
       \nabla F_i({\bm{X}}_t; {\bm{\xi}}_t^{(i)})^\top + \bm{E}_{t-1}^{(i)}, &m > n.
    \end{cases}$ \\
    $\bm{P}_t, \bm{U} \gets \mathsf{\bf Semi}\mbox{-}\mathsf{\bf Lazy}\mbox{-}\mathsf{\bf SVD}(\{\bm{G}_t^{(i)}\}_{i=1}^N, \bm{U}, t)$. \\
    $\bm{R}_t^{(i)} \gets {\bm{P}_t}^\top{\bm{G}}_t^{(i)}$. \\
    $\bm{E}_t^{(i)} \gets \bm{G}_t^{(i)} - \bm{P}_t\bm{R}_t^{(i)}$. \\
    ${\bm{R}}_t \gets \frac{1}{N}\sum_{i=1}^{N}{\bm{R}}_t^{(i)}$. \hfill (All-Reduce)   \\
    $\hat{\bm{G}}_t \gets {\bm{P}_t}{\bm{R}}_t$. \\
  
    $\bm{X}_{t+1} \gets \begin{cases}
        \mathsf{\bf Optimizer}(\bm{X}_{t}, \hat{\bm{G}}_t^\top, \gamma), &m \le n, \\[1ex]
        \mathsf{\bf Optimizer}(\bm{X}_{t}, \hat{\bm{G}}_t, \gamma), &m > n.
    \end{cases}$ \\
}
\Return $\{\bm{X}_{t}\}_{t=0}^{T+1}$.

\vspace{2mm}
\sub{$\mathsf{\bf Semi}\mbox{-}\mathsf{\bf Lazy}\mbox{-}\mathsf{\bf SVD}(\{\bm{G}_t^{(i)}\}_{i=1}^N, \bm{U}, t)$}{
    \uIf{$\mathsf{mod}(t, \tau) = 0$}{
        $\bm{G}_t \gets \frac{1}{N}\sum_{i=1}^{N}{\bm{G}_t}^{(i)} $.\hfill (All-Reduce) \\
        $\bm{U},\bm{\Sigma}_t, \bm{V}_t \gets \mathrm{\bf SVD}(\bm{G}_t)$. \\
        \vspace{-0.5mm}
        \Return $\bm{U}_{:,:r}, \bm{U}$.
    }\Else{
        $\bm{P}_{t} \gets \mathsf{\bf Approx}\mbox{-}\mathsf{\bf Top}\mbox{-}\mathsf{\bf r}(\{{\bm G}_t^{(i)}\}_{i=1}^N, {\bm U}, r)$. \\
        \Return $\bm{P}_{t}, \bm{U}$. \\
    }
}
\end{algorithm}

\section{Detailed Implementations of GreedyLore}\label{sec:variant}

In practical neural‐network training scenarios, we often encounter gradient matrices $\bm{G}_t^{(i)}\in\mathbb{R}^{m\times n}$ with shape $m>n$. In such case, storing a full projection $\bm{U}\in\mathbb{R}^{m\times m}$ incurs an $\mathcal{O}(m^2)$ memory cost. When $m$ is much larger than $n$, this $\mathcal{O}(m^2)$ memory cost can be much larger than gradient memory with order $\mathcal{O}(nm)$, inducing prohibitive memory cost.

In fact, this memory cost can be lowered down to $\mathcal{O}(\min{(n, m)}^2)$ with simple modification. The matrix with shape of $\mathbb{R}^{m \times n}$ can be compressed to low-rank by either left matrix multiplication or right matrix multiplication. When applying left multiplication, we select $\mathbb{R}^{r \times m}$ projector from $\mathbb{R}^{m \times m}$ matrix and compress the gradient into shape $\mathbb{R}^{r \times n}$, leading to $\mathcal{O}(m^2)$ memory cost and $\mathcal{O}(nr)$ communication cost. By contrast, when applying left multiplication, we select $\mathbb{R}^{n \times r}$ projector from $\mathbb{R}^{n \times n}$ matrix and compress the gradient into shape $\mathbb{R}^{m \times r}$, leading to $\mathcal{O}(n^2)$ memory cost and $\mathcal{O}(mr)$ communication cost. Since the shape of gradient is fixed all the time, we can select the multiplication type according to the value $m, n$ to avoid potential prohibitive memory cost.

To make illustration of the algorithm simpler, we replace selection of left or right matrix multiplication with a transpose-based operation on gradients. After each local gradient $\nabla F_i(\bm{X}_t;{\bm{\xi}}_t^{(i)})$ is computed immediately, each node transposes the gradient to  $\nabla F_i(\bm{X}_t;{\bm{\xi}}_t^{(i)})^\top\in\mathbb{R}^{n\times m}$ when $m>n$ and keep the shape $\nabla F_i(\bm{X}_t;{\bm{\xi}}_t^{(i)})\in\mathbb{R}^{m\times n}$ when $m\le n$ . Then communication and compression are performed on this transposed form $n\times m$. Once the global compressed gradients are aggregated and reconstructed, they are transposed back to the original shape $m\times n$ for further update on optimizer state.  Algorithm~\ref{alg:greedylore_variant} summarizes the complete procedure.

\vspace{1mm}
\begin{algorithm}[H]
\caption{Distributed Adam-type GaLore} 
\label{alg:ce-galore}
\SetKwFunction{lazysvd}{Lazy-SVD}
\SetKwProg{sub}{Subroutine}{}{}

\vspace{1pt}
\JustifyAlgo{\hspace{-4.95mm} \textbf{Input}: $N$ nodes, learning rate $\gamma$, number of total iterations $T$, subspace changing frequency $\tau$, rank $r$, $\beta_1, \beta_2$ for Adam. Error buffer $\bm{E}_{-1} = \bm{0} \in \mathbb{R}^{m \times n}$,  weight $\bm{X}_{0} \in \mathbb{R}^{m \times n}$, state for subspace optimizer $\bm{M}_{-1} = {0}, \bm{V}_{-1} = {0} \in \mathbb{R}^{r \times n}$ and projection matrix $\bm{P}_{-1} \in \mathbb{R}^{m \times r}$ with $m \le n$. }\\
\vspace{2pt}
\textbf{Output}: Sequence of model weights $\{\bm{X}_t\}_{t=0}^{T+1}$.

\For{$t=0,\ldots,T$}{
    \textbf{(On $i$-th node)} \\
    $\bm{G}_t^{(i)} \gets \nabla F_i({\bm{X}}_t; {\bm{\xi}}_t^{(i)})$ with local data ${\bm{\xi}}_t^{(i)}$. \\
  
    \uIf{$\mathsf{mod}(t, \tau) = 0$}{
        $\bm{G}_t \gets \frac{1}{N}\sum_{i=1}^{N}{\bm{G}_t}^{(i)} $.\hfill (All-Reduce) \\
        $\bm{U},\bm{\Sigma}, \bm{V} \gets \mathrm{\bf SVD}(\bm{G}_t)$ and $\bm{P}_t \gets \bm{U}_{:,:r}$. \\
        \vspace{-0.25mm}
        $\bm{M}_t \gets \bm{0}, \bm{V}_t \gets \bm{0}$.
    }\Else{
        $\bm{P}_t \gets \bm{P}_{t-1}$.
    }
    $\bm{R}_t^{(i)} \gets {\bm{P}_t}^\top{\bm{G}}_t^{(i)}$. \\
    ${\bm{R}}_t \gets \frac{1}{N}\sum_{i=1}^{N}{\bm{R}}_t^{(i)}$. \hfill (All-Reduce)   \\
  
    $\hat{\bm{R}_t}, \bm{M}_{t}, \bm{V}_{t} \gets \mathsf{\bf Adam}\mbox{-}\mathsf{\bf Update}(\bm{R}_t, \bm{M}_{t-1}, \bm{V}_{t-1}) $.\\
    $\bm{X}_{t+1} \gets \bm{X}_{t} - \gamma \bm{P}_t\hat{\bm{R}_t}$. \\
    }
    \Return $\{\bm{X}_{t}\}_{t=0}^{T+1}$.

\vspace{2mm}
\sub{$\mathsf{\bf Adam}\mbox{-}\mathsf{\bf Update}(\bm{R}_t, \bm{M}_{t-1}, \bm{V}_{t-1})$}{
    $\bm{M}_t \gets (1-\beta_1)\bm{M}_{t-1} + \beta_1 \bm{R}_t$. \\
    $\bm{V}_t \gets (1-\beta_2)\bm{V}_{t-1} + \beta_2 \bm{R}_t\odot\bm{R}_t$. \\
    \Return $\frac{\gamma}{\sqrt{\bm{V}_t} + \epsilon} \odot \bm{M}_t, \bm{M}_t, \bm{V}_t$.
}
\end{algorithm}

\vspace{-1mm}
\section{GaLore as Communication-Efficient Optimizer}\label{sec:dist_galore}
GaLore~\cite{zhao2024galore} is a subspace-based optimizer originally proposed to reduce memory consumption during the training of deep learning models. However, under a data-parallel framework, it can be naturally extended to a communication-efficient variant. The distributed variant of Adam-type GaLore algorithm is presented in Algorithm~\ref{alg:ce-galore}.

In this distributed implementation, the low-rank projection is maintained via the Lazy-SVD subroutine, and the overall structure closely follows that of Algorithm~\ref{alg:prelimnary}. For clarity, we explicitly expand the Lazy-SVD operation within the main optimization loop. The key distinction between this variants and Algorithm~\ref{alg:prelimnary} lies in the state representation: the former operates on a compact subspace state, whereas the latter retains the full-dimensional optimizer state. Consequently, Algorithm~\ref{alg:ce-galore} can recover the standard mini-batch single-node GaLore update as a special case, which is a property not shared by Algorithm~\ref{alg:prelimnary}. Nevertheless, employing the subspace state may introduce slower convergence when training models at very low rank, as demonstrated in Figure~\ref{fig:os}.

\section{Missing Experimental Details}\label{sec:hyper-param}

In this section, we show the training details of our experiments in section~\ref{sec:exp} for reproducion.

\noindent \textbf{Pre-training tasks on CIFAR datasets}
We pre-train ResNet-18 models on CIFAR-10 and CIFAR-100 datasets for both 40 epochs on a 4 × 4090 24GB GPUs cluster. The experiments are configured with global batch size 4 × 32 in PyTorch DDP framework. We start the compression at 500 iterations for CIFAR-10 and 4 000 iterations for CIFAR-100 following the warm-up convention in PowerSGD\cite{vogels2019powersgd}. We use max learning rate of $5e-3, 5e-4$ with cosine annealing scheduler, subspace switching frequency $750, 1200$ respectively for training on CIFAR-10 and CIFAR-100 datasets. For low-rank algorithm in \ref{fig:cifar}, the compress rank is set to 64 in all settings.

\noindent \textbf{Fine-tuning tasks on GLUE datasets}
We fine-tune pre-trained RoBERTa-base models on GLUE benchmarks for 10 epochs on a 4 × 4090 24GB GPUs cluster with data parallelism at 4. We use max sequence length of 256, start-compress iterations of 1 000 and learning rate scheduler with cosine decaying to 0 and warm-up fraction of $10\%$. We also search the subspace switching frequencies in $\{200, 500\}$. The batch size and learning rate with different ranks at $r=8$ and $r=16$ for each task are shown in Table \ref{table:glue-hyper}.

\noindent \textbf{Pre-training tasks on C4 datasets}
We pre-train different size of LLaMA models on C4 benchmarks with 4 × NVIDIA A800 80GB GPUs. Training token budgets for each model size were allocated according to the Chinchilla scaling law\cite{hoffmann2022training} following \cite{zhao2024galore, robert2024ldadam}. The detailed hyper-parameters are illustrated in Table ~\ref{table:c4-hyper}.

\end{multicols}

\begin{table*}[!htbp]
\centering
\begin{tabular}{lcccccccc}
\toprule
& SST-2 & CoLA & MRPC & STS-B & RTE & QNLI & QQP & MNLI \\
\midrule
Rank & 8 & 8 & 8 & 8 & 8 & 8 & 8 & 8 \\
Learning Rate & $2e-5$ & $2.4e-5$ & $2e-5$ & $2e-5$ & $2e-5$ & $2e-5$ & $2e-5$ & $2e-5$ \\
Total Batch Size & 16 & 16 & 16 & 16 & 16 & 16 & 16 & 16 \\
Batch Size per Divice & 4 & 4 & 4 & 4 & 4 & 4 & 4 & 4 \\
\bottomrule
\toprule
 & SST-2 & CoLA & MRPC & STS-B & RTE & QNLI & QQP & MNLI \\
\midrule
Rank & 16 & 16 & 16 & 16 & 16 & 16 & 16 & 16 \\
Learning Rate & $2e-5$ & $2e-5$ & $2.4e-5$ & $2.4e-5$ & $2e-5$ & $2e-5$ & $2.4e-5$ & $2e-5$ \\
Total Batch Size & 16 & 16 & 16 & 16 & 16 & 32 & 16 & 16 \\
Batch Size per Divice & 4 & 4 & 4 & 4 & 4 & 8 & 4 & 4 \\
\bottomrule
\end{tabular}
\vspace{0.2cm}
\caption{\small Hyperparameter settings for fine-tuning RoBERTa-Base model on the GLUE benchmark.}
\label{table:glue-hyper}
\end{table*}

\begin{table*}[!htbp]
\centering
\begin{tabular}{lccccccccc}
\toprule
& \multicolumn{3}{c}{\textbf{Llama 60M}} & \multicolumn{3}{c}{\textbf{Llama 130M}} & \multicolumn{3}{c}{\textbf{Llama 350M}} \\
\cmidrule(lr){2-4} \cmidrule(lr){5-7} \cmidrule(lr){8-10}
& Adam & GreedyLore & GaLore & Adam & GreedyLore & GaLore & Adam & GreedyLore & GaLore \\
\midrule
Training Steps & \multicolumn{3}{c}{10000} & \multicolumn{3}{c}{20000} & \multicolumn{3}{c}{60000} \\
Warm-up Steps & \multicolumn{3}{c}{1000} & \multicolumn{3}{c}{2000} & \multicolumn{3}{c}{6000} \\
Maximum Length & \multicolumn{3}{c}{256} & \multicolumn{3}{c}{256} & \multicolumn{3}{c}{256} \\
Batch Size & \multicolumn{3}{c}{512} & \multicolumn{3}{c}{512} & \multicolumn{3}{c}{512} \\
Batch Size per Device & \multicolumn{3}{c}{128} & \multicolumn{3}{c}{128} & \multicolumn{3}{c}{128} \\
Total Training Tokens & \multicolumn{3}{c}{1 310 720 000} & \multicolumn{3}{c}{2 621 440 000} & \multicolumn{3}{c}{7 208 960 000} \\
\midrule
Learning Rate & \multicolumn{3}{c}{$\{1.5e-3, 2.5e-3, 5e-3\}$} & \multicolumn{3}{c}{$\{1.5e-3, 2.5e-3, 5e-3\}$} & \multicolumn{3}{c}{$\{1.5e-3, 2.5e-3, 5e-3\}$} \\
Warm-up Scheduling & \multicolumn{3}{c}{linear from 0\%} & \multicolumn{3}{c}{linear from 0\%} & \multicolumn{3}{c}{linear from 0\%} \\
Learning Rate Scheduling & \multicolumn{3}{c}{cosine to 10\%} & \multicolumn{3}{c}{cosine to 10\%} & \multicolumn{3}{c}{cosine to 10\%} \\
Weight Decay & 0.0 & 0.0 & 0.0 & 0.0 & 0.0 & 0.0 & 0.0 & 0.0 & 0.0 \\
Gradient Clipping & 1.0 & 1.0 & 1.0 & 1.0 & 1.0 & 1.0 & 1.0 & 1.0 & 1.0 \\
\midrule
Error Feedback & \xmark & \cmark & \xmark & \xmark & \cmark & \xmark & \xmark & \cmark & \xmark \\
Subspace Frequency & - & 200 & 200 & - & 200 & 200 & - & 200 & 200 \\
\bottomrule
\end{tabular}
\vspace{0.2cm}
\caption{\small Hyperparameter settings for pre-training LLaMA model on the C4 dataset.}
\label{table:c4-hyper}
\end{table*}

\end{document}